\newcommand{\PaperTitle}{Geometric Deep Learning and Equivariant Neural Networks}
\newcommand{\pageheader}{\noindent \hfill\PaperTitle\hfill\thepage}
\def\mydate{\ifcase\month \or January\or February\or March\or
April\or May\or June\or July\or August\or September\or October\or
November\or December\fi \space\number\day,\space\number\year}
\renewcommand{\tocsection}[3]{%
  \par\vspace{-0.5em}\hspace{-2em}\indentlabel{\@ifnotempty{#2}{\ignorespaces#1 #2.\quad}}\bfseries#3}
\renewcommand{\tocsubsection}[3]{%
  \indentlabel{\@ifnotempty{#2}{\ignorespaces#1 #2\quad}}#3}
\newcommand\@dotsep{4.5}
\def\@tocline#1#2#3#4#5#6#7{\relax
  \ifnum #1>\c@tocdepth 
  \else
    \par \addpenalty\@secpenalty\addvspace{#2}%
    \begingroup \hyphenpenalty\@M
    \@ifempty{#4}{%
      \@tempdima\csname r@tocindent\number#1\endcsname\relax
    }{%
      \@tempdima#4\relax
    }%
    \parindent\z@ \leftskip#3\relax \advance\leftskip\@tempdima\relax
    \rightskip\@pnumwidth plus1em \parfillskip-\@pnumwidth
    #5\leavevmode\hskip-\@tempdima{#6}\nobreak
    \leaders\hbox{$\m@th\mkern \@dotsep mu\hbox{.}\mkern \@dotsep mu$}\hfill
    \nobreak
    \hspace{-0.5em}\hbox to\@pnumwidth{\@tocpagenum{\ifnum#1=1\bfseries\fi#7}}\par
    \nobreak
    \endgroup
  \fi}
\renewcommand\csname r@tocindent0\endcsname{0pt}
\def\l@subsection{\@tocline{2}{0pt}{2.5pc}{5pc}{}}
\newenvironment{nouppercase}{%
  \renewcommand{\uppercasenonmath}[1]{}}{}
\newcommand{\nontocsec}[1]{\vspace*{1.5em}\begin{center}\scshape #1\end{center}\vspace{0.7em}}
\renewcommand{\@bibtitlestyle}{\nontocsec{\normalsize\refname}}
\newtheorem*{namedtheorem}{\theoremname}
\newcommand{\theoremname}{testing}
\newtheorem{theorem}{Theorem}[section]
\newtheorem{lemma}[theorem]{Lemma}
\newtheorem{proposition}[theorem]{Proposition}
\theoremstyle{definition}
\newtheorem{definition}[theorem]{Definition}
\newtheorem{example}[theorem]{Example}
\theoremstyle{remark}
\newtheorem{remark}[theorem]{Remark}
\numberwithin{equation}{section}
\numberwithin{figure}{section}
\DeclareMathOperator{\vrize}{vec}
\DeclareMathOperator{\id}{id}
\DeclareMathOperator*{\argmax}{argmax}
\newcommand{\GL}{\mathrm{GL}}
\newcommand{\SO}{\mathrm{SO}}
\newcommand{\mO}{\mathrm{O}}
\newcommand{\SE}{\mathrm{SE}}
\newcommand{\E}{\mathrm{E}}
\newcommand{\RR}{\mathbb{R}}
\newcommand{\ZZ}{\mathbb{Z}}
\newcommand{\Nc}[1][]{N_{#1}}
\newcommand{\um}{\id}  
\newcommand{\vol}{\mathrm{vol}}
\DeclareMathOperator{\Hom}{Hom}
\def\grd@save@target#1{%
    \def\grd@target{#1}}
\def\grd@save@start#1{%
    \def\grd@start{#1}}
\def\GridCore{\edef\grd@@target{(\tikzinputsegmentlast)}%
    \tikz@scan@one@point\grd@save@target\grd@@target\relax
    \edef\grd@@start{(\tikzinputsegmentfirst)}%
    \tikz@scan@one@point\grd@save@start\grd@@start\relax
    \draw[minor help lines] (\tikzinputsegmentfirst) grid (\tikzinputsegmentlast);
    \draw[major help lines] (\tikzinputsegmentfirst) grid (\tikzinputsegmentlast);
    \grd@start
    \pgfmathsetmacro{\grd@xa}{\the\pgf@x/1cm}
    \pgfmathsetmacro{\grd@ya}{\the\pgf@y/1cm}
    \grd@target
    \pgfmathsetmacro{\grd@xb}{\the\pgf@x/1cm}
    \pgfmathsetmacro{\grd@yb}{\the\pgf@y/1cm}
    \pgfmathsetmacro{\grd@xc}{\grd@xa + \pgfkeysvalueof{/tikz/grid with coordinates/major step}}
    \pgfmathsetmacro{\grd@yc}{\grd@ya + \pgfkeysvalueof{/tikz/grid with coordinates/major step}}
    \foreach \x in {\grd@xa,\grd@xc,...,\grd@xb}
    {\ifticksB
        \node[anchor=north] at (\x,\grd@ya) {\pgfmathprintnumber{\x}};
        \fi
        \ifticksT
        \node[anchor=south] at (\x,\grd@yb) {\pgfmathprintnumber{\x}};
        \fi
    }
    \foreach \y in {\grd@ya,\grd@yc,...,\grd@yb}
    {\ifticksL
        \node[anchor=east] at (\grd@xa,\y) {\pgfmathprintnumber{\y}};
        \fi
        \ifticksR
        \node[anchor=west] at (\grd@xb,\y) {\pgfmathprintnumber{\y}};
        \fi}
}
\newif\ifticksL
\newif\ifticksR
\newif\ifticksT
\newif\ifticksB
\tikzset{ticks left/.is if=ticksL,
    ticks right/.is if=ticksR,
    ticks on top/.is if=ticksT,
    ticks at bottom/.is if=ticksB,
    ticks left=true,
    ticks at bottom=true,
    ticks right=false,
    ticks on top=false,
    grid with coordinates/.style={
        decorate,decoration={show path construction,
            lineto code={\GridCore
        }}
    },
    minor help lines/.style={
        help lines,
        step=\pgfkeysvalueof{/tikz/grid with coordinates/minor step}
    },
    major help lines/.style={
        help lines,
        line width=\pgfkeysvalueof{/tikz/grid with coordinates/major line width},
        step=\pgfkeysvalueof{/tikz/grid with coordinates/major step}
    },
    grid with coordinates/.cd,
    minor step/.initial=.2,
    major step/.initial=1,
    major line width/.initial=2pt,
}
\newcommand{\rulesep}{\unskip\ \vrule height -1ex\ }
\begin{document}

\author[Gerken]{\vspace*{-1em}\scshape Jan E. Gerken$^{1}$, Jimmy Aronsson$^{1\star}$, Oscar Carlsson$^{1\star}$, Hampus Linander$^{2}$, Fredrik Ohlsson$^{3}$, Christoffer Petersson$^{1,4}$ and Daniel Persson$^{1}$}
\address{\vspace*{-7em}$^{1}$ Chalmers University of Technology, Department of Mathematical Sciences\\
  SE-412\,96 Gothenburg, Sweden\\[1em]
  $^{2}$ Gothenburg University, Department of Physics\\
  SE-412\,96, Gothenburg, Sweden\\[1em]
  $^{3}$ Ume\aa\ University, Department of Mathematics and Mathematical Statistics\\
  SE-901\,87, Ume\aa, Sweden\\[1em]
  $^{4}$ Zenseact\\
  SE-417\,56, Gothenburg, Sweden\\[1em]
  $^{\star}$ equal contribution\vspace{2em}}
\email{gerken@chalmers.se}

\email{jimmyar@chalmers.se}

\email{osccarls@chalmers.se}

\email{hampus.linander@gu.se}

\email{fredrik.ohlsson@umu.se}

\email{christoffer.petersson@chalmers.se}

\email{daniel.persson@chalmers.se}

\title{{\Large \PaperTitle}}
\date{\today}
\begin{nouppercase}
  \maketitle
\end{nouppercase}
\begin{abstract}
We survey the mathematical foundations of geometric deep learning, focusing on group equivariant and gauge equivariant neural networks. We develop gauge equivariant convolutional neural networks on arbitrary manifolds $\mathcal{M}$ using principal bundles with structure group $K$ and equivariant maps between sections of associated vector bundles. We also discuss group equivariant neural networks for homogeneous spaces $\mathcal{M}=G/K$, which are instead equivariant with respect to the global symmetry $G$ on $\mathcal{M}$. Group equivariant layers can be interpreted as intertwiners between induced representations of $G$, and we show their relation to gauge equivariant convolutional layers. We analyze several applications of this formalism, including semantic segmentation  and object detection networks. We also discuss the case of spherical networks in great detail, corresponding to the case $\mathcal{M}=S^2=\SO(3)/\SO(2)$. Here we emphasize the use of Fourier analysis involving Wigner matrices, spherical harmonics and Clebsch--Gordan coefficients for $G=\SO(3)$, illustrating the power of representation theory for deep learning.  
\end{abstract}

\pagebreak
\vspace*{-4em}
\thispagestyle{plain}
\tableofcontents
\pagebreak

\makeatletter

\let\@mkboth\@gobbletwo
\let\@oddhead\pageheader
\let\@evenhead\pageheader
\makeatother

\section{Introduction}
\label{sec:intro}

\noindent Deep learning is an approach to machine learning that uses multiple transformation layers to extract hierarchical features and learn descriptive representations of the input data. These learned features can be applied to a wide variety of classification and regression tasks. Deep learning has for example been enormously successful in tasks such as computer vision, speech recognition and language processing. However,
despite the overwhelming success of deep neural networks we are still at a loss for explaining exactly  why deep learning works so well. One way to address this is to explore the underlying mathematical framework. A promising direction is to consider symmetries as a fundamental design principle for network architectures. This can be implemented by constructing deep neural networks that are compatible with a symmetry group $G$ that acts transitively on the input data. This is directly relevant for instance in the case of spherical signals where $G$ is a rotation group. In practical applications, it was found that equivariance improves per-sample efficiency, reducing the need for data augmentation \cite{muller2021}. For linear models, this has been proven mathematically \cite{2021arXiv210210333E}.

Even more generally, it is natural to consider the question of how to train neural networks in the case of ``non-flat'' data. Relevant applications include fisheye cameras~\cite{coors2018}, biomedicine ~\cite{boomsma2017,elaldi2021}, and cosmological data~\cite{perraudin2019}, just to mention a few situations where the data is naturally curved.  Mathematically, this calls for developing a theory of deep learning on manifolds, or even more exotic structures, like graphs or algebraic varieties. This rapidly growing research field is referred to as \emph{geometric deep learning}  \cite{LeCunGeometric}. The recent book~\cite{bronstein2021} gives an in-depth overview of geometric deep learning.

In this introduction we shall provide a birds-eye view on the subject of geometric deep learning, with emphasis on the mathematical foundations. We will gradually build up the  formalism, starting from a simple semantic segmentation model which already illustrates the role of symmetries in neural networks. We discuss group and gauge equivariant convolutional neural networks, which play a leading role in the paper. The introduction concludes with a summary of our main results, a survey of related literature, as well as an outline of the paper.

\subsection{Warm up: A semantic segmentation model}\label{sec:warmup}

The basic idea of deep learning is that the learning process takes place in multi-layer networks known as deep neural networks of
``artificial neurons'', where each layer receives data from the preceding layer and processes it before sending it to the subsequent layer.
Suppose one wishes to categorize some data sample $x$ according to which class $y$ it belongs to. As a simple example, the input sample $x$ could be an image and the output $y$ could be a binary classification of whether a dog or a cat is present in the image. The first layers of a deep neural network would learn some basic low-level features, such as edges and contours, which are then transferred as input to the subsequent layers. These layers then learn more sophisticated high-level features, such as combinations of edges representing legs and ears. The learning process takes place in the sequence of hidden layers, until finally producing an output $\hat{y}$, to be compared with the correct image class $y$. The better the learning algorithm, the closer the neural network predictions $\hat{y}$ will be to $y$ on new data samples it has not trained on. In short, one wishes to minimize a \emph{loss function} that measures the difference between the output $\hat{y}$ and the class $y$.

More abstractly, let us view a neural network as a nonlinear map $\mathcal{N}$ between a set of input variables  $X$ and output variables  $Y \supseteq \mathcal{N}(X)$. Suppose one performs a transformation $T$  of the input data. This could for instance correspond to a translation or rotation of the elements in $X$. The neural network is said to be \emph{equivariant} to the transformation $T$ if it satisfies
\begin{equation}
    \mathcal{N}(Tx)=T^{\prime}\mathcal{N}(x)\,,
    \end{equation}
    for any input element $x$ and some transformation $T^{\prime}$ acting on $Y$.
A special case of this is that the transformation $T^{\prime}$ is the identity in which case the network is simply \emph{invariant} under the transformation, i.e. $\mathcal{N}(Tx)=\mathcal{N}(x)$. This is for instance the case of convolutional neural networks used for image classification problems, for which we have invariance with respect to translations of the image. A prototypical example of a problem which requires true \emph{equivariance} is the the commonly encountered problem of semantic segmentation in computer vision. Intuitively,  this follows since the output is a pixel-wise segmentation mask which must transform in the same way as the input image. In the remainder of this section we will therefore discuss such a model and highlight its equivariance properties in order to set the stage for later developments.

An image can be viewed as a compactly supported function $f : \mathbb{Z}^2\to \mathbb{R}^{\Nc[\mathrm{in}]}$, where $\mathbb{Z}^2$ represents the pixel grid and $\mathbb{R}^{\Nc[\mathrm{in}]}$ the color space. For example, the case of $\Nc[\mathrm{in}]=1$ corresponds to a gray scale image while $\Nc[\mathrm{in}]=3$ can represent a color RGB image. Analogously, a feature map $f_{i}$ associated with layer $i$ in a CNN can be viewed as a map $\mathbb{Z}^2\to \mathbb{R}^{\Nc[i]}$, where $\mathbb{R}^{\Nc[i]}$ is the space of feature representations.

Consider a neural network $\mathcal{N}$ classifying each individual pixel of RGB images $f_\mathrm{in} : \mathbb{Z}^2 \to \mathbb{R}^3$, supported on $[0, W] \times [0, H] \subset \ZZ^{2}$, into $\Nc[\mathrm{out}]$ classes using a convolutional neural network. Let the sample space of $\Nc[\mathrm{out}]$ classes be denoted $\Omega$ and let $P(\Omega)$ denote the space of probability distributions over $\Omega$.

The network as a whole can be viewed as a map
\begin{equation}
\mathcal{N}: L^{2}(\mathbb{Z}^{2},\, \mathbb{R}^3) \rightarrow L^{2}\left(\mathbb{Z}^{2},\, P(\Omega)\right),\label{eq:35}
\end{equation}
where $L^2(X, Y)$ denote the space of square integrable functions with domain $X$ and co-domain $Y$. This class of functions ensures well-defined convolutions and the possibility to construct standard loss functions.

The co-domain $L^2\left(\mathbb{Z}^2, P(\Omega)\right)$ of $\mathcal{N}$ is usually referred to as semantic segmentations since an element assigns a semantic class probability distribution to every pixel in an input image.

For simplicity, let the model consist of two convolutional layers where the output of the last layer, $f_{\mathrm{out}}$, maps into an $\Nc[\mathrm{out}]$-dimensional vector space, followed by a softmax operator to produce a probability distribution over the classes for every element in $[0, W] \times [0, H]$.
 See Figure \ref{fig:image_classification} for an overview of the spaces involved for the semantic segmentation model.

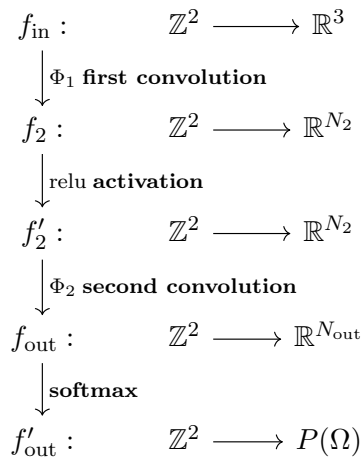
\begin{figure}
\begin{tikzcd}
  f_\mathrm{in}:\arrow{d}{\Phi_1 \;\textbf{first convolution}}& \ZZ^{2} \arrow{r} & \mathbb{R}^3 \\
  f_2:\arrow{d}{\textrm{relu} \;\textbf{activation}}& \ZZ^{2} \arrow{r} & \mathbb{R}^{\Nc[2]} \\
  f_2':\arrow{d}{\Phi_2 \;\textbf{second convolution}}& \ZZ^{2} \arrow{r} & \mathbb{R}^{\Nc[2]} \\
  f_{\mathrm{out}}:\arrow{d}{\textbf{softmax}} & \ZZ^{2} \arrow{r} & \mathbb{R}^{\Nc[\mathrm{out}]} \\
  f_\mathrm{out}^\prime: & \ZZ^{2} \arrow{r} & P(\Omega)
\end{tikzcd}
\caption{Semantic segmentation model. In the first row the name of the first feature map (i.e. the input image) is given by $f_\mathrm{in}$, it maps the domain $\ZZ^{2}$ to RGB values and has support on $[0, W] \times [0, H]$. It is followed by the first convolution, resulting in a feature map $f_2$ that now maps into a $\Nc[2]$ dimensional space corresponding to the $\Nc[2]$ filters of the convolution. After a point-wise activation the second convolution results in a feature map $f_{\mathrm{out}}$ that associates an $\Nc[\mathrm{out}]$-dimensional vector to each point in the domain. This vector is then mapped to a probability distribution over the $\Nc[\mathrm{out}]$ classes using the softmax operator.}
\label{fig:image_classification}
\end{figure}

The standard 2d convolution, for example $\Phi_1$ in Figure~\ref{fig:image_classification}, is given by
\begin{equation}
    [\kappa_1 \star f_\mathrm{in}](x,y) = \sum_{(x', y') \in \mathbb{Z}^2} \kappa_1(x', y')f_\mathrm{in}(x' - x, y' - y)\,.\label{eq:2dCNN}
\end{equation}
This is formally a cross-correlation, but it can be transformed into a convolution by redefining the kernel.

Noting that the two-dimensional translation group $\mathbb{Z}^{2}$ acts on the space of images by left translation
\begin{equation}
  L_{(x, y)} f_\mathrm{in}(x', y') = f_\mathrm{in}(x' - x, y' - y)\,,
\end{equation}
we can rewrite this convolution group theoretically as
\begin{equation}
    [\kappa_1 \star f_\mathrm{in}](x,y) = \sum_{(x',y') \in \mathbb{Z}^2} \kappa_1(x', y') \left[L_{(x,y)}f_\mathrm{in}\right](x',y')\,,\label{eq:2dconvWithTrans}
\end{equation}
where $\kappa_1 \in L^2(\mathbb{Z}^2, \mathbb{R}^{\Nc[2] \times 3})$ is the kernel for the convolution $\Phi_1$. In the context of convolutions the terms kernel and filter appear with slight variations in their precise meaning and mutual relationship in the literature. We will generally use them interchangeably throughout this paper.

This convolution is equivariant with respect to translations $(x, y) \in \mathbb{Z}^2$, i.e.
\begin{equation}
    \left[\kappa\star \left(L_{(x, y)} f_\mathrm{in}\right)\right](x', y')=\left(L_{(x, y)}[\kappa\star f_\mathrm{in}]\right)(x', y')\,.\label{eq:2dconvequiv}
\end{equation}
The point-wise activation function and softmax operator also satisfy this property,
\begin{equation}
    \left[\mathrm{relu}\left(L_{(x, y)} f_\mathrm{in}\right)\right](x', y') = \mathrm{relu}\left(f_\mathrm{in}(x'-x, y' - y)\right) = L_{(x, y)}\left[ \mathrm{relu}(f_\mathrm{in})\right](x',y')\,,\label{eq:reluequiv}
\end{equation}
where $\mathrm{relu}(x) = \max\{0,x\}$, so that the model as a whole is equivariant under translations in $\mathbb{Z}^2$. Note that this equivariance of the model ensures that a translated image produces the corresponding translated segmentation,
\begin{equation}
  \mathcal{N}(L_{(x, y)}f_\mathrm{in}) = L_{(x, y)} \mathcal{N}(f_\mathrm{in})\,,\label{eq:transl_equiv_CNN}
\end{equation}
as illustrated in Figure~\ref{fig:equivariant_segmentation}. The layers in this particular model turns out to be equivariant with respect to translations but there are many examples of non-equivariant layers such as max pooling.
Exactly what restrictions equivariance implies for a layer in an artificial neural network is the topic of Sections~\ref{sec:gaugeeq} and \ref{sec:equiv-conv-layers}.

\begin{figure}
\begin{tikzcd}
  \raisebox{-0.5\height}{\includegraphics[width=0.2\textwidth]{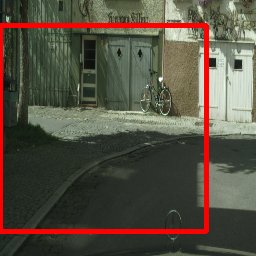}} \arrow{r}{\mathcal{N}}\arrow{d}{L_{(x, y)}} & \raisebox{-0.5\height}{\includegraphics[width=0.2\textwidth]{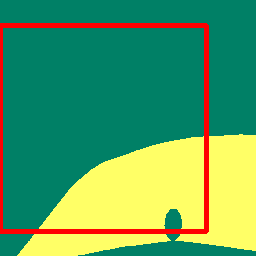}}\arrow{d}{L_{(x, y)}} \\
  \raisebox{-0.5\height}{\includegraphics[width=0.2\textwidth]{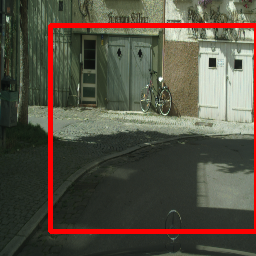}} \arrow{r}{\mathcal{N}} & \raisebox{-0.5\height}{\includegraphics[width=0.2\textwidth]{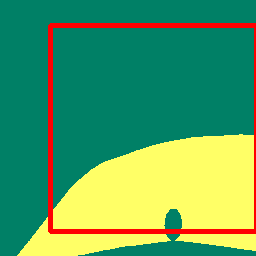}}
\end{tikzcd}
\caption{$\mathbb{Z}^{2}$ equivariance of a semantic segmentation model classifying pixels into classes $\Omega = \{\text{road}, \text{non-road}\}$. The network $\mathcal{N}$ maps input images, indicated by the content of the red rectangles in the left column, to semantic masks indicated by the corresponding content of the red rectangles in the right column. Image and semantic mask from \cite{Cordts2016Cityscapes}.}
\label{fig:equivariant_segmentation}
\end{figure}

\subsection{Group equivariant convolutional neural networks}
\label{sec:GCNN_background}
Convolutional neural networks are ordinary feed-forward networks that make use of convolutional operations of the form \eqref{eq:2dCNN}. One of the main reasons for their power is their aforementioned {\it translation equivariance}~\eqref{eq:transl_equiv_CNN}, which implies that a translation of the pixels in an image produces an overall translation of the convolution. Since each layer is translation equivariant all representations will be translated when the input data is translated. Furthermore, the local support of the convolution allows for efficient weight sharing across the input data.

 Notice that $\mathbb{Z}^2$ in the semantic segmentation model is a group with respect to addition, and the space of feature representations $\mathbb{R}^N$ is a vector space. It is therefore natural to generalize this construction by replacing $\mathbb{Z}^2$ with an arbitrary group $G$ and $\mathbb{R}^N$ with a vector space $V$. The feature map then generalizes to
\begin{equation}
 f : G \to V,
 \label{featuremapgeneral}
\end{equation}
 and the convolution operation~\eqref{eq:2dCNN} to
\begin{equation}
 [\kappa \star f](g)=\int_G \kappa(g^{-1}h) f(h)\dd h\,,
 \label{convgeneral}
\end{equation}
where $\dd h$ is a left-invariant Haar measure on $G$. If $G$ is a discrete group such as $\mathbb{Z}^2$, then $\dd h$ becomes the counting measure and the integral reduces to a sum.

The generalized kernel $\kappa : G \to \mathrm{Hom}(V,W)$ appearing in~\eqref{convgeneral} is a function from the group to homomorphisms between $V$ and some feature vector space $W$, which can in general be different from $V$. Consequently, the result of the convolution is another feature map
\begin{equation}
    [\kappa \star f] : G \to W \,,
\end{equation}
and in analogy with the terminology for ordinary CNNs we refer to the convolution~(\ref{convgeneral}) itself as a layer in the network. The general form of the convolution~(\ref{convgeneral}) is equivariant with respect to the left-translation by $G$:
\begin{equation}
    [\kappa \star L_hf](g)=L_h[\kappa \star f](g)\,,
\end{equation}
motivating the term equivariant layer.

In the convolution~(\ref{convgeneral}), the kernel $\kappa$ in the integral over $G$ is transported in the group using the right action of $G$ on itself. This transport corresponds to the translation of the convolutional kernel in~\eqref{eq:2dCNN} and generalizes the weight sharing in the case of a locally supported kernel $\kappa$.

In this paper we will explore the structure of group equivariant convolutional neural networks and their further generalizations to manifolds. A key step in this direction is to expose the connection with the theory of fiber bundles as well as the representation theory of $G$. To this end we shall now proceed to discuss this point of view.

It is natural to generalize the above construction even more by introducing a choice of subgroup $K\subset G$ for each feature map, and a choice of representation $\rho$ of $K$, i.e.,
\begin{equation}
\rho : K \to \GL(V)\,,
\end{equation}
where $V$ is a vector space. Consider then the coset space $G/K$ and a vector bundle $E\xrightarrow{\pi} G/K$ with characteristic fiber $V$. Sections of $E$ are maps $s : G/K\to E$ which locally can be represented by vector-valued functions
\begin{equation}
f: G/K \to V\,.\label{eq:hom_feature_map}
\end{equation}
These maps can be identified with the feature maps of a group equivariant convolutional neural network. Indeed, in the special case when $G=\mathbb{Z}^2$, $K$ is trivial and $V=\mathbb{R}^{\Nc}$, we recover the ordinary feature maps $f : \mathbb{Z}^2\to \mathbb{R}^{\Nc}$ of a CNN. When the representation $\rho$ is non-trivial the network is called \emph{steerable} (see~\cite{weiler2018,weiler2019}).

As an example, consider spherical signals, i.e.\ the case in which the input feature map is defined on the two-sphere $S^2$ and can therefore be written in the form \eqref{eq:hom_feature_map}, since $\SO(3)/\SO(2)=S^2$ and hence $G=\SO(3)$ and $K=\SO(2)$. Consequently, feature maps correspond to sections $f :  \SO(3)/\SO(2) \to V$.

This construction allows us to describe $G$-equivariant CNNs using a very general mathematical framework. The space of feature maps is identified with the space of sections $\Gamma(E)$, while maps between feature maps, which we refer to as layers, belong to the space of so called $G$-equivariant intertwiners if they are equivariant with respect to the right action of $G$ on the bundle $E$. This implies that many properties of group equivariant CNNs can be understood using the representation theory of $G$.

\subsection{Group theory properties of machine learning tasks}
After having laid out the basic idea of group equivariant neural networks, in this section we will make this more concrete by discussing the group theoretical properties of the common computer vision tasks of image classification, semantic segmentation and object detection.

Let us first focus on the case of classification. In this situation the input data consists of color values of pixels and therefore the input feature map $f_\mathrm{in}$  transforms under the regular representation $\pi_{\mathrm{reg}}$ of $G$, i.e. as a collection of scalar fields:
\begin{equation}
  f_\mathrm{in}(x) \to [\pi_{\mathrm{reg}}(g)f_\mathrm{in}](x) = f_\mathrm{in}(\sigma^{-1}(g)x)\,, \qquad g\in G\,,
\end{equation}
where $\sigma$ is a representation of $G$ that dictates the transformation of the image. In other words, the color channels are not mixed by the spatial transformations. However, for image classification the identification of images should be completely independent of how they are transformed. For this reason we expect the network $\mathcal{N}(f_\mathrm{in})$ to be \emph{invariant} under $G$,
\begin{align}
  \mathcal{N}(\pi_{\mathrm{reg}}(g) f_{\mathrm{in}}) = \mathcal{N}(f_{\mathrm{in}})\,,\qquad g\in G\,.
\end{align}

On the other hand, when doing semantic segmentation we are effectively classifying each individual pixel, giving a \emph{segmentation mask} for the objects we wish to identify, as described in Section~\ref{sec:warmup}. This implies that the output features must transform in the same way as the input image. In this case one should not demand invariance of $\mathcal{N}$, but rather non-trivial \emph{equivariance}
\begin{equation}
  \mathcal{N}(\pi_{\mathrm{reg}}(g) f_{\mathrm{in}}) = \pi_{\mathrm{reg}}(g) [\mathcal{N}(f_{\mathrm{in}})]\,,\qquad g\in G\,,
\end{equation}
where the regular representation $\pi_{\mathrm{reg}}$ on the right-hand side transforms the output feature map of the network. The group-theoretic aspects of semantic segmentation are further explored in Section~\ref{sec:Eqdeeparch}.

Object detection is a slightly more complicated task. The output in this case consists of bounding boxes around the objects present in the image together with class labels. We may view this as a generalization of the semantic segmentation, such that, for each pixel, we get a class probability vector $p\in\mathbb{R}^{\Nc}$ (one of the classes labels the background) and three vectors $a,v_1,v_2\in\mathbb{R}^{2}$ that indicate the pixel-position of the upper-left corner and two vectors that span the parallelogram of the associated bounding box. Hence, the output is an object $(p,a,v_1, v_2)\in\mathbb{R}^{\Nc+6}$ for each pixel. The first $\Nc$ components of the output feature map $f:\mathbb{R}^{2} \rightarrow \mathbb{R}^{\Nc+4}$ transform as scalars as before. The three vectors $a, v_1, v_2$ on the other hand transform in a non-trivial two-dimensional representation $\rho$ of $G$. The output feature map $f_\mathrm{out}=\mathcal{N}(f_\mathrm{in})$ hence transforms in the representation $\pi_{\mathrm{out}}$ according to
\begin{align}
  (\pi_{\mathrm{out}}f_{\mathrm{out}})(x)=\rho_{\mathrm{out}}(g)f_\mathrm{out}(\sigma^{-1}(g)x)\,,
\end{align}
where $\rho_{\mathrm{out}}=\id_{\Nc}\oplus\,\rho\oplus\rho\oplus\rho$. The network is then equivariant with respect to $\pi_{\mathrm{reg}}$ in the input and $\pi_{\mathrm{out}}$ in the output if
\begin{align}
  \mathcal{N}(\pi_{\mathrm{reg}}(g)f_{\mathrm{in}})=\pi_{\mathrm{out}}(g)[\mathcal{N}(f_{\mathrm{in}})]\,.
\end{align}
For more details on equivariant object detection see Sections~\ref{sec:obj_detection} and \ref{ex:SE3Outputs}.

\subsection{Gauge equivariant networks}

 In the group equivariant networks discussed above, we exploited that the domain of the input feature map had global symmetries. Using inspiration from the physics of gauge theories and general relativity, the framework of equivariant layers can be extended to feature maps which are defined on a general manifold $\mathcal{M}$. A manifold can be thought of as consisting of a union of \emph{charts}, giving rise to coordinates on $\mathcal{M}$, subject to suitable gluing conditions where the charts overlap. However, the choice of charts is arbitrary and tensors transform in a well-defined way under changes of charts. Such transformations are called \emph{gauge transformations} in physics and correspond to the freedom of making local coordinate transformations across the manifold. Feature maps can in this context be viewed as sections of vector bundles associated to a principal bundle, called \emph{fields} in physics parlance. A \emph{gauge equivariant network} for such fields consists of layers which are equivariant with respect to change of coordinates in $\mathcal{M}$, such that the output of the network transforms like a tensor.

In order to realize gauge equivariant layers using convolutions, we need to shift the kernel across $\mathcal{M}$. In general, a manifold does not have any global symmetries to utilize for this. Instead, one may use \emph{parallel transport} to move the filter on the manifold. This transport of features will generically depend on the chosen path. A gauge equivariant CNN is constructed precisely such that it is independent of the choice of path. In other words, making a coordinate transformation at $x\in \mathcal{M}$ and transporting the filter to $y\in \mathcal{M}$ should give the same result as first transporting the filter from $x$ to $y$, and then performing the coordinate transformation. Therefore, the resulting layers are gauge equivariant. The first steps toward a general theory of gauge equivariant convolutional neural networks on manifolds were  taken in \cite{CohenGauge2019,CohenChengGauge2019}.

\subsection{Summary of results}
This paper aims to put geometric aspects of deep learning into a mathematical context. Our intended audience includes mathematicians, theoretical physicists as well as mathematically minded machine learning experts.

The main contribution of this paper is to give a mathematical overview of the recent developments in group equivariant and gauge equivariant neural networks. We strive to develop the theory in a mathematical fashion, emphasizing the bundle perspective throughout. In contrast to most of the literature on the subject we start from the point of view of neural networks on arbitrary manifolds $\mathcal{M}$ (sometimes called ``geometric deep learning''). This requires gauge equivariant networks which we develop using the gauge theoretic notions of principal bundles and associated vector bundles. Feature maps will be sections of associated vector bundles. These notions have been used previously in different equivariant architectures \cite{bronstein2021,CohenChengGauge2019,cohen2018} and we present a unified picture. We analyze when maps between feature spaces are equivariant with respect to gauge transformations and define \emph{gauge equivariant layers} accordingly in Section \ref{sec:gaugeeq}. Furthermore, we develop gauge equivariant convolutional layers for arbitrary principal bundles in Section \ref{subsec:gauge_equivariant_convoultion} and thereby define gauge equivariant CNNs. In Section~\ref{subsec:gauge_equivariant_convoultion} we generalize the gauge equivariant convolution presented in~\cite{CohenChengGauge2019} to the principal bundle setting.

Different principal bundles $P$ describe different local (gauge) symmetries. One example of a local symmetry is the freedom to choose a basis in each tangent space or, in other words, the freedom to choose local frames of the frame bundle $P = \mathcal{LM}$. In this case, local gauge transformations transform between different bases in tangent spaces. When the manifold is a homogeneous space $\mathcal{M} = G/K$, the global symmetry group forms a principal bundle $P=G$. Here, the local symmetry is the freedom to perform translations that do not move a given point, e.g. the north pole on $S^2$ being invariant to rotations about the $z$-axis, but we are more interested in the global translation symmetry for this bundle. Building on \cite{aronsson2021}, we motivate group equivariant networks from the viewpoint of homogeneous vector bundles in Section \ref{sec:equiv-conv-layers} and connect these to the gauge equivariant networks. In Section \ref{subsec:equiv-intensity}, we discuss equivariance with respect to intensity; point-wise scaling of feature maps.

Furthermore, starting from a very general setup of a symmetry group acting on functions defined on topological spaces, we connect and unify various equivariant convolutions that are available in the literature.

Having developed the mathematical framework underlying equivariant neural networks, we give an overview of equivariant nonlinearities and in particular extend vector field nonlinearities to arbitrary semi-direct product groups, cf.\ Proposition~\ref{prop:vfieldnonlinearity}. We review the entire equivariant network architecture for semantic segmentation and object detection and  the associated representations.

Finally, we consider spherical networks corresponding to data defined on the two-sphere $\mathcal{M}=S^2=\SO(3)/\SO(2)$. For this case, we explain how convolutions can be computed in Fourier space and we give a detailed description of the convolution in terms of Wigner matrices and Clebsch-Gordan coefficients, involving in particular the decomposition of tensor products into irreducible representations of $\SO(3)$. This is well-known in the mathematical physics community but we collect and present this material in a coherent way which we found was lacking in the literature. We illustrate the formalism in terms of object detection for $\SE(3)$ (see Section~\ref{ex:SE3Outputs}).

\subsection{Related literature}
\label{sec:related-literature}
The notion of geometric deep learning was first discussed in the seminal paper of LeCun et.\ al.~ \cite{LeCunGeometric}. They emphasized the need for neural networks defined on arbitrary data manifolds and graphs. In a different development, group equivariant convolutional neural networks, which incorporate global symmetries of the input data beyond the translational equivariance of ordinary CNNs, were proposed by Cohen and Welling \cite{Cohen2016}.
Kondor and Trivedi \cite{Kondor2018} proved that for compact groups $G$, a neural network architecture can be $G$-equivariant if and only if it is built out of convolutions of the form~\eqref{convgeneral}. The theory of equivariant neural networks on homogeneous spaces $G/K$ was further formalized in \cite{cohen2018} using the theory of vector bundles in conjunction with the representation theory of $G$. A proposal for including \emph{attention}~\cite{2017arXiv170603762V} into group equivariant CNNs was also put forward in~\cite{2020arXiv200203830R}. Equivariant normalizing flows were recently constructed in \cite{satorras2021a}.

The case of neural networks on spheres has attracted considerable attention due to its extensive applicability. Group equivariant CNNs on $S^2$ were studied in \cite{cohen2018b} by  implementing efficient Fourier analysis on $\mathcal{M} = S^2$ and $G = \SO(3)$. Some applications instead benefit from equivariance with respect to azimuthal rotations, rather than arbitrary rotations in $\SO(3)$ \cite{toft2021azimuthal}. One such example is the use of neural networks in self-driving cars to identify vehicles and other objects.

The approach presented here follows \cite{cohen2018b} and extends the results in the reference at some points. The Clebsch--Gordan nets introduced in \cite{kondor2018a} have a similar structure but use as nonlinearities tensor products in the Fourier domain, instead of point-wise nonlinearities in the spatial domain. Several modifications of this approach led to a more efficient implementation in \cite{cobb2020}. The constructions mentioned so far involve convolutions which map spherical features to features defined on $\SO(3)$. The construction in \cite{esteves2018} on the other hand uses convolutions which map spherical features to spherical features, at the cost of restricting to isotropic filters. Isotropic filters on the sphere have also been realized by using graph convolutions in \cite{defferrard2020deepsphere}. In \cite{esteves2020b}, spin-weighted spherical harmonics are used to obtain anisotropic filters while still keeping the feature maps on the sphere.

A somewhat different approach to spherical signals is taken in \cite{jiang2018}, where a linear combination of differential operators acting on the input signal is evaluated. Although this construction is not equivariant, an equivariant version has been developed in \cite{shen2021}.

An important downside to many of the approaches outlined above is their poor scaling behavior in the resolution of the input image. To improve on this problem, \cite{mcewen2021} introduces scattering networks as an equivariant preprocessing step.

Aside from the equivariant approaches to spherical convolutions, much work has also been done on modifying flat convolutions in $\RR^{2}$ to deal with the distortions in spherical data without imposing equivariance \cite{coors2018, boomsma2017, su2017, monroy2018}.

A different approach to spherical CNNs was proposed in \cite{CohenGauge2019}, were the basic premise is to treat $S^2$ as a manifold and use a gauge equivariant CNN, realized using the icosahedron as a discretization of the sphere. A general theory of gauge equivariant CNNs is discussed in \cite{CohenChengGauge2019}. Further developments include gauge CNNs on meshes and grids~\cite{CohenMesh,2020arXiv200601570W} and applications to lattice gauge theories~\cite{Favoni:2020reg,Luo:2020stn}.
In this paper we continue to explore the mathematical structures underlying gauge equivariant CNNs and clarify their relation to GCNNs.

 A further important case studied extensively in the literature are networks equivariant with respect to $\E(n)$ or $\SE(n)$. These networks have been applied with great success to 3d shape classification~\cite{weiler2018, thomas2018}, protein structure classification~\cite{weiler2018}, atomic potential prediction~\cite{kondor2018b} and medical imaging~\cite{muller2021, worrall2018, winkels2018}.

  The earliest papers in the direction of $\SE(n)$ equivariant network architectures extended classical GCNNs to 3d convolutions and discrete subgroups of $\SO(3)$~\cite{worrall2018, winkels2018, marcos2017, weiler2018a}.

  Our discussion of $\SE(n)$ equivariant networks is most similar to the $\SE(3)$ equivariant networks in \cite{weiler2018}, where an equivariance constraint on the convolution kernel of a standard 3d convolution is solved  by expanding it in spherical harmonics. A similar approach was used earlier in \cite{worrall2017} to construct $\SE(2)$ equivariant networks using circular harmonics. A comprehensive comparison of different architectures which are equivariant with respect to $\E(2)$ and various subgroups was given in~\cite{weiler2019}.

  Whereas the aforementioned papers specialize standard convolutions by imposing constraints on the filters and are therefore restricted to data on regular grids, \cite{thomas2018, kondor2018b} operate on irregular point clouds and make the positions of the points part of the features. These networks operate on non-trivially transforming input features and also expand the convolution kernels into spherical harmonics but use Clebsch--Gordan coefficients to combine representations.

  So far, the applied part of the literature is mainly focused on specific groups (mostly rotations and translations in two and three dimensions and their subgroups). However, in the recent paper \cite{lang2020}, a general approach to solve the kernel constraint for arbitrary compact groups is constructed by deriving a Wigner--Eckart theorem for equivariant convolution kernels. The implementation in \cite{finzi2021} uses a different algorithm to solve the kernel constraint for matrix groups and allows to automatically construct equivariant CNN layers.

  The review \cite{esteves2020} discusses various aspects of equivariant networks. The book~\cite{bronstein2021} gives an exhaustive survey of many of the developments related to geometric deep learning and equivariant CNNs.

\subsection{Outline of the paper}
Our paper is structured as follows. In Section~\ref{sec:gaugeeq} we introduce gauge equivariant convolutional neural networks on manifolds. We discuss global versus local symmetries in neural networks. Associated vector bundles are introduced and maps between feature spaces are defined. Gauge equivariant CNNs are constructed using principal bundles over the manifold. We conclude Section~\ref{sec:gaugeeq} with a discussion of some concrete examples of how gauge equivariant CNNs can be implemented for neural networks on graphs. In Section~\ref{sec:equiv-conv-layers} we restrict to homogeneous spaces $\mathcal{M}=G/K$ and introduce homogeneous vector bundles. We show that when restricting to homogeneous spaces the general framework of Section~\ref{sec:gaugeeq} gives rise to group equivariant convolutional neural networks with respect to the global symmetry $G$. We also introduce intensity equivariance and investigate its compatibility with group equivariance. Still  restricting to global symmetries, Section~\ref{sec:GC} explores the form of the convolutional integral and the kernel constraints in various cases.  Here, the starting point are vector valued maps between arbitrary topological spaces. This allows us to investigate convolutions between non-scalar features as well as non-transitive group actions. As an application of this we consider semi-direct product groups which are relevant for steerable neural networks. In Section~\ref{sec:Eqdeeparch} we assemble the pieces and discuss how one can construct equivariant deep architectures using our framework. To this end we begin by discussing how nonlinearities can be included in an equivariant setting. We further illustrate the group equivariant formalism by analyzing deep neural networks for semantic segmentation and object detection tasks.  In Section~\ref{sec:spherical} we provide a detailed analysis of spherical convolutions. Convolutions on $S^2$ can be computed using Fourier analysis with the aid of spherical harmonics and Wigner matrices. The output of the network is characterized through certain tensor products which decompose into irreducible representations of $\SO(3)$. In the final Section~\ref{sec:conclusions} we offer some conclusions and suggestions for future work.

\section{Gauge equivariant convolutional layers}
\label{sec:gaugeeq}
\noindent In this section we present the structure needed to discuss local transformations and symmetries on general manifolds. We also discuss the gauge equivariant convolution in~\cite{CohenChengGauge2019} for features defined on a smooth manifold $\mathcal{M}$ along with lifting this into the principal bundle formalism. We end this section by expanding on two applications of convolutions on manifolds via a discretization to a mesh and compare these to the convolution on the smooth manifold.

\subsection{Global and local symmetries}\label{sec:global_local_symmetries}
In physics the concept of symmetry is a central aspect when constructing new theories. 
An object is symmetric to a transformation if applying that transformation leaves the object as it started.

In any theory, the relevant symmetry transformations on a space form a \emph{group} $K$. When the symmetry transformations act on vectors via linear transformations, we have a \emph{representation} of the group. This is needed since an abstract group has no canonical action on a vector space; to allow the action of a group $K$ on a vector space $V$, one must specify a representation. Formally a representation is a map $\rho:K\to \GL(\dim(V),F)$ into the space of all invertible $\dim(V)\times \dim(V)$ matrices over a field $F$. Unless otherwise stated, we use complex representations ($F=\mathbb{C}$). In contrast, real representations use $F=\mathbb{R}$. The representation needs to preserve the group structure, i.e. 
\begin{equation}
    \rho(kk')=\rho(k)\rho(k')\,,
\end{equation}
for all $k,k'\in K$. In particular, $\rho(k^{-1})=\rho(k)^{-1}$ and $\rho(e)=\mathrm{id}_V$ where $e \in K$ is the identity element. 
\begin{remark}
    There are several ways to denote a representation and in this paper we use $V_\rho$ to represent the representation $\rho$ acting on some vector space, hence $V_\rho$ and $V_\eta$ will be viewed as two (possibly) different vector spaces acted on by $\rho$ and $\eta$ respectively.
\end{remark}

Returning to symmetries there exists, in brief, two types of symmetries: \emph{global} and \emph{local}. An explicit example of a global transformation of a field $ \phi:\mathbb{R}^{2}\to \mathbb{R}^{3} $ is a rotation $ R\in\SO(2) $ of the domain as
\begin{equation}
    \phi(x)\xrightarrow{R}\phi'(x)=\rho(R)\phi(\eta(R^{-1})x)\,,
\end{equation}
where $ \rho $ is a representation for how $\SO(2)$ acts on $ \mathbb{R}^{3} $ and $\eta$ is the standard representation for how $\SO(2)$ acts on $\mathbb{R}^2$.
\begin{remark}
    Note that this transformation not only transforms the vector $\phi(x)$ at each point $x \in \mathbb{R}^2$, but also moves the point $x$ itself.
\end{remark}
\begin{example}
    If we let $R\in \SO(2)$ be the action of rotating with an angle $R$, then the standard representation of $R$ would be
    \begin{equation}
        \eta(R)=\begin{pmatrix} \cos(R) & \sin(R) \\ -\sin(R) & \cos(R) \end{pmatrix}\,.
    \end{equation}
\end{example}

\begin{example}
    An example of a rotationally symmetric object is when $\phi$ is three scalar fields, i.e. $\rho_3(R)=1\oplus1\oplus1$, where each scalar field only depends on the distance from the origin. This yields
\begin{equation}
     \phi'(x)=\rho_3(R)\phi(R^{-1}x)=\phi(R^{-1}x)=\phi(x)\,,
\end{equation}
since rotation of the domain around the origin leaves distances to the origin unchanged.
\end{example}

With the global transformation above we act with the same transformation on every point; with local transformations we are allowed to transform the object at each point differently. We can construct a similar explicit example of a local symmetry: Given a field $ \phi:\mathbb{R}^{2}\to \mathbb{C} $ we can define a local transformation as
\begin{equation}\label{eq:field_example_gauge}
    \phi(x)\to \phi'(x)=\exp(if(x))\phi(x)\,,
\end{equation}
where $ f:\mathbb{R}^{2}\to [0,2\pi) $. This is local in the sense that at each point $x$ the field $\phi$ is transformed by $\exp(if(x))$ where $f$ is allowed to vary over $\mathbb{R}^2$. We will refer to a local transformation as a \emph{gauge transformation}. If an object is invariant under local (gauge) transformations, it is called \emph{gauge invariant} or that it has a \emph{gauge symmetry}. The group consisting of all local transformations is called the \emph{gauge group}.
\begin{remark}
    The example of a local transformation presented in~\eqref{eq:field_example_gauge} does not move the base point but in general there are local transformations that move the base point; the main example of which is locally defined diffeomorphisms which are heavily used in general relativity. In this section we will only consider transformations that do not move the base point and gauge transformations falls in this category. For a more detailed discussion on this see Remark~5.3.10 in \cite{hamiltonMathematicalGaugeTheory2017}.
\end{remark}
\begin{example}
    A simple example of a gauge invariant object using~\eqref{eq:field_example_gauge} is the field
    \begin{equation}\label{eq:gauge_invariant_field}
        \overline{\phi(x)}\phi(x)\,,
    \end{equation}
    where the bar denotes complex conjugate. This works since multiplication of complex numbers is commutative. This is an example of a commutative gauge symmetry. 
\end{example}
Note that in the above example the phase of $\phi$ at each point can be transformed arbitrarily without affecting the field in~\eqref{eq:gauge_invariant_field}, hence we have a redundancy in the phase of $\phi$. For any object with a gauge symmetry one can get rid of the redundancy by choosing a specific gauge.
\begin{example}
    The phase redundancy in the above example can be remedied by choosing a phase for each point. For example this can be done by 
    \begin{equation}
        \phi(x)\to\phi'(x)= |\phi(x)| = \exp\Big(-i\arg\big(\phi(x)\big)\Big)\phi(x)\,.
    \end{equation}
    Thus $\phi'$ only takes real values at each point and since~\eqref{eq:gauge_invariant_field} is invariant to this transformation we have an equivalent object with real fields.
\end{example}

To introduce equivariance, let $ K $ be a group acting on two vector spaces $V_\rho$, $V_\eta $ through representations $ \rho$, $\eta $ and let $ \Phi:V_\rho\to V_\eta $ be a map. We say that $ \Phi $ is \emph{equivariant} with respect to $ K $ if for all $ k\in K $ and $ v\in V_\rho $,
\begin{equation}\label{eq:equivariance}
    \Phi(\rho(k)v)=\eta(k)\Phi(v)\,,
\end{equation}
or equivalently, with the representations left implicit, expressed $ \Phi\circ k=k\circ \Phi $.

\subsection{Motivation and intuition for the general formulation}
When a neural network receives some numerical input, unless specified, it does not know what basis the data is expressed in, be it local or global.  
The goal of the general formulation presented in Section~\ref{sec:general_gauge} is thus that if two numerical inputs are related by a transformation between equivalent states, e.g.\ change of basis, then the output from a layer should be related by the same transformation: if $ u=k\triangleright v $ then $ \phi(u)=k\triangleright \phi(v) $, where $k\triangleright$ is a general action of the group element on the input data $v$; in words $\phi$ is an equivariant map. The intuition for this is that it ensures that there is no basis dependence in the way $\phi$ acts on data. 
        
To construct such a map $ \phi $ we will construct objects which are gauge invariant but contain components that transform under gauge transformations. We then define a map $ \Phi $ on those using $ \phi $ to act on one of the components.

Our objects will be equivalence classes consisting of two elements: an element that specifies which gauge the numerical values are in and one element which are the numerical values. By construction these will transform in “opposite” ways and hence each equivalence class is gauge invariant. The intuition for the first element is that it will serve as book-keeping for the theoretical formulation.

\begin{figure}[t]
    \centering
    \begin{subfigure}[t]{0.3\linewidth}
        \begin{tikzpicture}
            \node[anchor=south west] at (0,0) {\includegraphics[width=\linewidth]{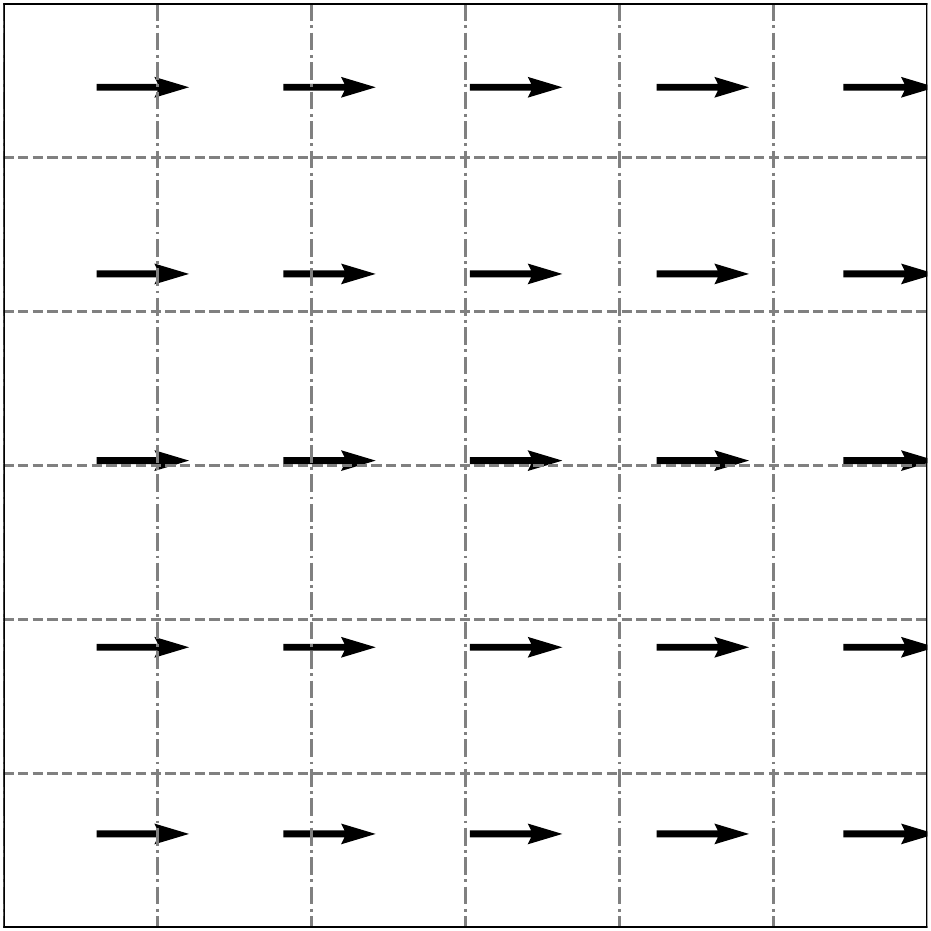}};
            \node[anchor=south west] at (2.05,-0.5) {(A)};
        \end{tikzpicture}
    \end{subfigure}
    \rulesep
    \begin{subfigure}[t]{0.3\linewidth}
        \begin{tikzpicture}
            \node[anchor=south west] at (0,0) {\includegraphics[width=\linewidth]{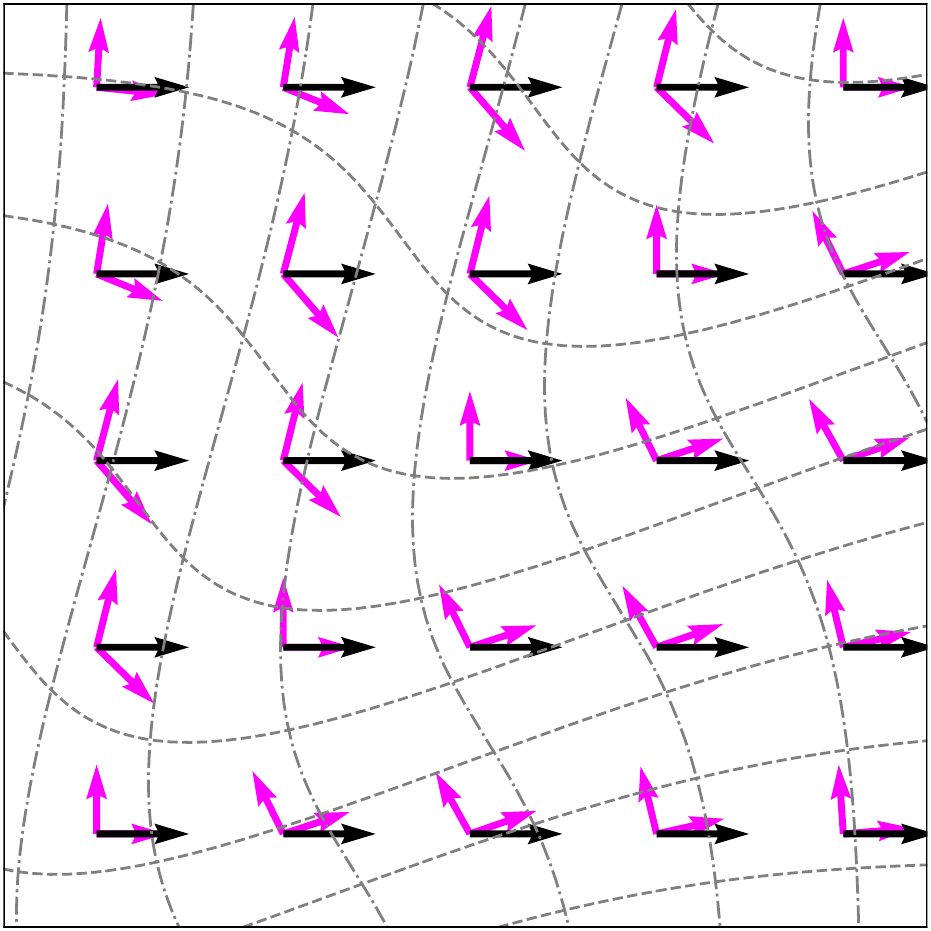}};
            \node[anchor=south west] at (2.05,-0.5) {(B)};
        \end{tikzpicture}
    \end{subfigure}
    \rulesep
    \begin{subfigure}[t]{0.3\linewidth}
        \begin{tikzpicture}
            \node[anchor=south west] at (0,0) {\includegraphics[width=\linewidth]{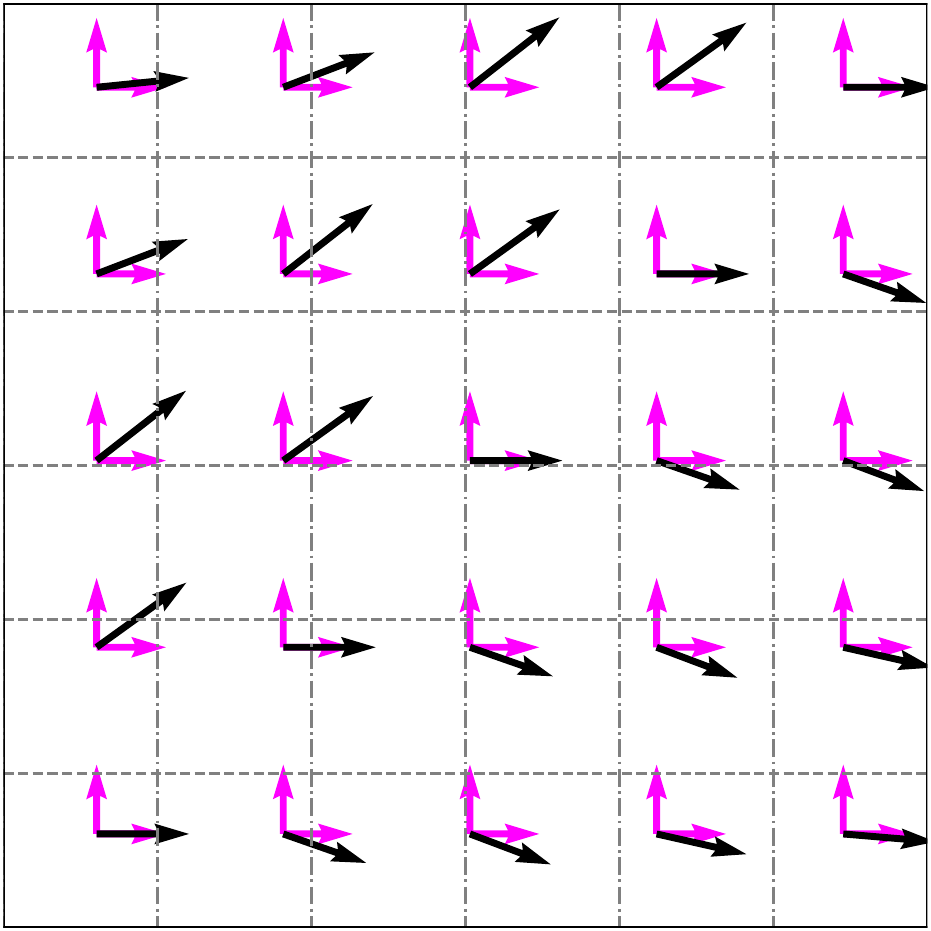}};
            \node[anchor=south west] at (2.05,-0.5) {(C)};
        \end{tikzpicture}
    \end{subfigure}
    \vspace{-0.4cm}
    \caption{A vector field on a base space $\mathcal{M}$ in different local coordinates. The local coordinates can be viewed either as induced by the coordinate system $u:\mathcal{M}\to \mathbb{R}^2$ or as a section $\omega:\mathcal{M}\to P$ of the frame bundle $P = \mathcal{LM}$ that specifies a local basis at each point -- see Section \ref{sec:general_gauge} for details. (A)  The local basis for the vector field is at every point aligned with the standard basis in $\mathbb{R}^2$. (B) A new local basis induced by the coordinate system $u':\mathcal{M}\to \mathbb{R}^2$, or equivalently as a section $\omega':\mathcal{M}\to {P}$. The transition map is either $u'\circ u^{-1}$ or $\omega'(x)=\omega(x)\triangleleft\sigma(x)$ where $\sigma:\mathcal{M}\to K$ is a map from the base space to the gauge group $K$. (C) Here the vector field is at each point expressed in the new local coordinates. This illustrates that two vector fields can look very different when expressed in components, but when taking the local basis into account, they are the same.} \label{fig:local_gauge_transformation_of_vector_field}
\end{figure}

The input to a neural network can then be interpreted as the numerical part of an equivalence class, and if two inputs are related by a gauge transformation the outputs from $ \phi $ are also related by the same transformation. (Through whatever representation is chosen to act on the output space.)

Images can be thought of as compactly supported vector valued functions on $ \mathbb{R}^{2} $ (or $ \mathbb{Z}^{2} $ after discretization) but when going to a more general surface, a smooth $ d $-dimensional manifold, the image needs instead to be interpreted as a section of a fiber bundle. In this case we cannot, in general, have a global transformation and we really need the local gauge viewpoint.

\begin{example}
    If our image is an RGB image every pixel has a “color vector” and one could apply a random permutation of the color channels for each pixel. Applying a random $\SO(3) $ element would not quite work since each element of the “color vector” needs to lie in $ [0,255]\cap\mathbb{Z} $ and under a random $\SO(3) $ element a “color vector” could be rotated outside this allowable space.
\end{example}

\begin{remark}
    As mentioned in the previous section there are local transformations that moves base points locally, e.g. a diffeomorphism $\psi:\mathcal{M}\to \mathcal{M}$  such that $\psi(x)=x$ for some $x\in\mathcal{M}$. Note that this is not the same as a local change of coordinates at each point. In the neighborhood, and the tangent space, of $x$ though one can view this as a local change of coordinates. Hence if one transforms a feature map at each point, the transformation of each feature vector can be viewed as a diffeomorphism centered at that point. See Figure~\ref{fig:changeOfGauge}.
\end{remark}

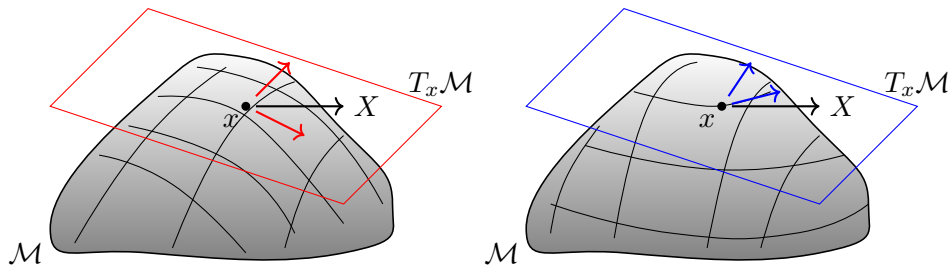
\begin{figure}[t]
    \centering
    \begin{subfigure}{0.4\linewidth}
        \begin{tikzpicture}[scale=0.65]

            \draw[top color=white,bottom color=gray,semithick]  plot[smooth cycle, tension=0.9] coordinates
            {(-4,-2.5) (-2,0.2) (0,1) (2,-0.5) (3,-2) (2,-3) (-2,-3)};  

            \node (or) at (0,0) {};

            \draw plot[smooth, tension=0.7,gray] coordinates {(-1.8,0.2) (0,-0.2) (2,-2.4)};
            \draw plot[smooth, tension=0.9,gray] coordinates {(-3,-1) (-1.5,-1.6) (0,-3)};
            \draw plot[smooth, tension=0.9,gray] coordinates {(-3.5,-2.8) (-2,-0.6) (-0.4,0.8)};

            \draw plot[smooth, tension=0.9,gray] coordinates {(-2.4,-0.4) (-0.2,-1.2) (1,-2.6)};
            \draw plot[smooth, tension=0.9,gray] coordinates {(-1,0.8) (1.2,0) (2.8,-2)};
            \draw plot[smooth, tension=0.9,gray] coordinates {(-1.5,-2.9) (-0.3,-0.5) (1,0.5)};
            \draw plot[smooth, tension=0.9,gray] coordinates {(0.8,-2.9) (1.3,-1.5) (2,-0.6)};

            \draw[fill=black]  (or) circle (0.08);
            \node[anchor=west] (vec) at ($(or)+(2,0)$) {$X$};
            \draw[->,thick] (or) -- (vec);
            \node at ($ (or)-(0.3,0.3) $) {$x$};
            
            \draw[->, thick, red] (or) -- ($(or)+(1.5*0.8,-1.5*0.4)$);
            \draw[->, thick, red] (or) -- ($(or)+(1.5*0.6,1.5*0.6)$);
            
            \node at (-4.5,-3) {$\mathcal{M}$};

            \coordinate (v1) at ($(or)+(4,0)$);
            \coordinate (v2) at ($(or)+(2,-2)$);

            \draw[red] ($(or)+(v2)$) -- ($(or)+(v1)$) -- ($(or)-(v2)$) -- ($(or)-(v1)$) -- cycle;
            \node[anchor=south] at ($(or)+(v1)$) {$ T_x\mathcal{M} $};

        \end{tikzpicture}
    \end{subfigure}
    ~
    \begin{subfigure}{0.4\linewidth}
        \begin{tikzpicture}[scale=0.65]
            \draw[top color=white,bottom color=gray,semithick]  plot[smooth cycle, tension=0.9] coordinates
            {(-4,-2.5) (-2,0.2) (0,1) (2,-0.5) (3,-2) (2,-3) (-2,-3)};

            \node (or) at (0,0) {};

            \draw plot[smooth, tension=0.7,gray] coordinates {(-1.8,0.3) (-0.2,0) (1,0.3)};
            \draw plot[smooth, tension=0.9,gray] coordinates {(-2.8,-0.8) (0,-1.3) (2.5,-1)};
            \draw plot[smooth, tension=0.9,gray] coordinates {(-3.5,-2) (-1,-2.6) (1.5,-2.6) (3,-2)};

            \draw plot[smooth, tension=0.9,gray] coordinates {(-3.5,-2.6) (-3.2,-1.5) (-2.5,-0.4)};
            \draw plot[smooth, tension=0.9,gray] coordinates {(-2.2,-2.7) (-1.5,0) (-0.5,0.9)};
            \draw plot[smooth, tension=0.9,gray] coordinates {(-0.3,-2.9) (0.3,-0.5) (1,0.5)};
            \draw plot[smooth, tension=0.9,gray] coordinates {(0.8,-2.9) (1.3,-1.5) (2,-0.6)};

            \draw[fill=black]  (or) circle (0.08);
            \node[anchor=west] (vec) at ($(or)+(2,0)$) {$X$};
            \draw[->,thick] (or) -- (vec);
            \node at ($ (or)-(0.3,0.3) $) {$x$};
            
            \draw[->, thick, blue] (or) -- ($(or)+(1.5*0.8,1.5*0.2)$);
            \draw[->, thick, blue] (or) -- ($(or)+(1.5*0.4,1.5*0.6)$);

            \node at (-4.5,-3) {$\mathcal{M}$};

            \coordinate (v1) at ($(or)+(4,0)$);
            \coordinate (v2) at ($(or)+(2,-2)$);

            \draw[blue] ($(or)+(v2)$) -- ($(or)+(v1)$) -- ($(or)-(v2)$) -- ($(or)-(v1)$) -- cycle;

            \node[anchor=south] at ($(or)+(v1)$) {$ T_x\mathcal{M} $};
            
        \end{tikzpicture}
    \end{subfigure}
    \caption[Change of gauge]{The manifold $ \mathcal{M} $ has a different choice of gauge --- local coordinates on $ \mathcal{M} $ --- at $ x $ in the left and the right figure leading to a different choice of basis in the corresponding tangent space $ T_x\mathcal{M} $. This is the same process as in Figure~\ref{fig:local_gauge_transformation_of_vector_field} where the difference is that in this case the base space is curved.}\label{fig:changeOfGauge}
\end{figure}

\subsection{General formulation}\label{sec:general_gauge}

The construction presented here is based on the one used in~\cite{aronsson2021}. Since in the general case all transformations will be local, we need to formulate the theory in the language of fiber bundles. In short, a \emph{bundle} $ E\xrightarrow{\pi}\mathcal{M} $ is a triple $ (E,\pi,\mathcal{M}) $ consisting of a \emph{total space} $ E $ and a surjective continuous projection map $ \pi $ onto a \emph{base manifold} $ \mathcal{M} $. Given a point $ x\in\mathcal{M} $, the \emph{fiber} over $ x $ is $ \pi^{-1}(x)=\{v\in E:\pi(v)=x\} $ and is denoted $ E_x $. If for every $ x\in\mathcal{M} $ the fibers $ E_x $ are isomorphic the bundle is called a \emph{fiber bundle}. Furthermore, a \emph{section} of a bundle $ E $ is a map $ \sigma:\mathcal{M}\to E $ such that $ \pi\circ\sigma=\id_\mathcal{M} $. We will use associated, vector, and principal bundles and will assume a passing familiarity with these, but will for completeness give a short overview. For more details see \cite{nakaharaGeometryTopologyPhysics2007, kolarNaturalOperationsDifferential1993, marshGaugeTheoriesFiber2019a}.

In this section we follow the construction of~\cite{aronsson2021} and begin by defining a principal bundle encoding some symmetry group $K$ we want to incorporate into our network:
\begin{definition}
    Let $ K $ be a Lie-group. A \emph{principal $K$-bundle} over $ \mathcal{M} $ is a fiber bundle $ {P}\xrightarrow{\pi_{P}}\mathcal{M} $ with a fiber preserving, free and transitive right action of $ K $ on $ {P} $,
    \begin{equation}
        \triangleleft: {P}\times K\to {P}\,,\quad \text{satisfying} \quad \pi_{P}(p\triangleleft k)=\pi_{P}(p)\,,
    \end{equation}
    and such that $ p\triangleleft e=p $ for all $ p\in{P} $ where $ e $ is the identity in $ K $. As a consequence of the action being free and transitive we have that $ {P}_x $ and $ K $ are isomorphic as sets.
\end{definition}
\begin{remark}
    Note that even though $ {P}_x $ and $ K $ are isomorphic as sets we cannot view $ {P}_x $ as a group since there is no identity in $ {P}_x $.
    To view the fiber $ {P}_x $ as isomorphic to the group we need to specify  $p\in {P}_x $ as a reference-point. With this we can make the following map from $K$ to ${P}_x$
    \begin{equation}
        k\mapsto p\triangleleft k\,,
    \end{equation}
    which is an isomorphism for each fixed $p\in {P}_x$ since the group action on the fibers is free and transitive. We will refer to this choice of reference point as \emph{choosing a gauge for the fiber} ${P}_x$.
\end{remark}

Given a local patch $U\subseteq \mathcal{M}$, a local section $\omega:U\to {P} $ of the principal bundle $ {P} $ provides an element $ \omega(x) $ which can be used as reference point in the fiber $ {P}_x $, yielding a local trivialization
\begin{equation}\label{eq:P_local_trivialization}
    U\times K\to P\,,\quad (x,k)\mapsto\omega(x)\triangleleft k\,.
\end{equation}
This is called choosing a \emph{(local) gauge} on the patch $ U \subseteq \mathcal{M} $.
\begin{remark}
    For the case when $\mathcal{M}$ is a surface, i.e. is of (real) dimension 2, \cite{melziGFramesGradientBasedLocal2019} presents a method for assigning a gauge (basis for the tangent space) to each point. This is done by letting one basis vector be the gradient of some suitable scalar function on $\mathcal{M}$ and the other being the cross product of the normal at that point with the gradient. This requires $\mathcal{M}$ to be embedded in $\mathbb{R}^3$.
\end{remark}
If it is possible to choose a continuous global gauge ($ U=\mathcal{M} $) then the principal bundle is trivial since \eqref{eq:P_local_trivialization} is then a global trivialization. On the other hand, if $ {P} $ is non-trivial and we allow $ \omega:U\to {P} $ to be discontinuous we can always choose a reference-point independently for each fiber. Alternatively, we may choose a set of continuous sections $ \{\omega_i\} $ whose domains $ U_i $ are open subsets covering $ \mathcal{M} $. In the latter setting, if $ x\in U_i\cap U_j $ then there is a unique element $ k_x\in K $ relating the gauges $\omega_i$ and $\omega_j$ at $ x $, such that $ \omega_i(x)=\omega_j(x)\triangleleft k_x $.
\begin{remark}
    As stated, a principal bundle is trivial if and only if there exists a global continuous gauge. Consequently, even if one has a covering $\{U_i\}$ of the manifold $\mathcal{M}$ and a local gauge $\omega_i$ for each $U_i$ there is no way to combine these into a global continuous gauge unless the principal bundle is trivial. It is, however, possible to define a global section if one drops the continuity condition. In this paper will not implicitly assume that a chosen gauge is continuous unless specified.
\end{remark}
Continuing in the same fashion we can view a local map $ \sigma: U\subseteq\mathcal{M}\to K $ as changing the gauge at each point, or in other words a gauge transformation. 
With this we now define associated bundles: 

\begin{definition}
Let $ {P}\xrightarrow{\pi_{P}}\mathcal{M} $ be a principal $ K $-bundle and let $ V $ be a vector space on which $ K $ acts from the left through some representation $ \rho $
\begin{equation}
    K\times V\to V\,,\quad k\triangleright v = \rho(k)v\,.
\end{equation}
Now, consider the space $ {P}\times V $ consisting of pairs $ (p,v) $ and define an equivalence relation on this space by
\begin{equation}\label{eq:gauge_equivalence_relation}
    (p,v)\sim_{\rho}(p\triangleleft k,k^{-1} \triangleright v) \,, \quad k\in K\,.
\end{equation}
Denoting the resulting quotient space $P \times V/\sim_{\rho}$ by $ {E}_{\rho} = {P}\times_\rho V_\rho $ and equipping it with a projection $ \pi_{\rho}:{E}_{\rho}\to \mathcal{M} $ acting as $ \pi_\rho([p,v])=\pi_{{P}}(p) $ makes $ {E}_{\rho} $ a fiber bundle over $ \mathcal{M} $ associated to $P$. Moreover, $ \pi_\rho^{-1}(x) $ is isomorphic to $ V $ and thus $ {P}\times_\rho V_\rho $ is a vector bundle over $ \mathcal{M} $. The space of sections $\Gamma(E_{\rho})$ is a vector space with respect to the point-wise addition and scalar multiplication of $V$.    
\end{definition}
Moving on we provide the definition of data points and feature maps in the setting of networks.
\begin{definition}
    We define a \emph{data point} $ s\in\Gamma(E_{\rho}) $ as a section of an associated bundle $ E_\rho $ and a \emph{feature map} $ f\in C(P;{\rho}) $ as a map from the principal bundle to the vector space $ V_\rho $. A feature map should also satisfy the following equivariance condition
    \begin{equation}\label{eq:feature_map_equivariance}
        k\triangleright f(p)=f(p\triangleleft k^{-1})\,.
    \end{equation}
\end{definition}
To connect the two notions we can use the following lemma.
\begin{lemma}[\cite{kolarNaturalOperationsDifferential1993}]
    The linear map $ \varphi_{\rho}:C(P;{\rho}) \to \Gamma(E_{\rho}) \,,\, f \mapsto s_{f}=[\pi^{-1}_{P},f \circ \pi^{-1}_{P}] $, is a vector space isomorphism.
\end{lemma}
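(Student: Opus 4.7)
The plan is to verify the four required properties — well-definedness, linearity, injectivity, and surjectivity — in turn, relying on the free and transitive right action of $K$ on fibers of $P$ to pass between $f$ and $s_f$.

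First I would unpack the notation: the expression $s_f(x) = [\pi_P^{-1}(x), f(\pi_P^{-1}(x))]$ means that for any $p \in \pi_P^{-1}(x)$ one sets $s_f(x) = [p, f(p)] \in E_\rho$. Well-definedness must therefore be checked: if $p' = p \triangleleft k$ is another preimage of $x$, then by the equivariance condition~\eqref{eq:feature_map_equivariance} we have $f(p') = f(p \triangleleft k) = k^{-1} \triangleright f(p)$, so $[p',f(p')] = [p \triangleleft k, k^{-1}\triangleright f(p)] = [p,f(p)]$ by~\eqref{eq:gauge_equivalence_relation}. Hence $s_f(x)$ depends only on $x$, and $\pi_\rho(s_f(x)) = \pi_P(p) = x$, so $s_f \in \Gamma(E_\rho)$. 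Linearity of $\varphi_\rho$ is then immediate since the vector-space structure on $\Gamma(E_\rho)$ is induced point-wise from $V_\rho$: $[p, \alpha f_1(p) + \beta f_2(p)] = \alpha [p, f_1(p)] + \beta [p, f_2(p)]$.

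For injectivity, suppose $\varphi_\rho(f) = 0$, i.e.\ $[p, f(p)] = [p, 0]$ for every $p \in P$. Since two representatives $(p,v)$ and $(p,v')$ with the same first entry are equivalent only if $v = v'$ (the stabilizer of $p$ under $\triangleleft$ is trivial because the action is free), we conclude $f(p) = 0$ for all $p$, i.e.\ $f = 0$.

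The main technical step is surjectivity, which also produces the inverse explicitly. Given $s \in \Gamma(E_\rho)$, I would define $f \in C(P;\rho)$ as follows: for $p \in P$ with $x = \pi_P(p)$, pick any representative $s(x) = [q, v]$ with $q \in \pi_P^{-1}(x)$; by freeness and transitivity of the $K$-action on $P_x$, there exists a unique $k \in K$ with $p = q \triangleleft k$, and I set $f(p) = k^{-1} \triangleright v$. Independence of the choice of representative follows because replacing $(q,v)$ by $(q\triangleleft h, h^{-1}\triangleright v)$ shifts $k$ to $h^{-1} k$ and $v$ to $h^{-1}\triangleright v$, leaving $k^{-1} \triangleright v$ unchanged. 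The equivariance property~\eqref{eq:feature_map_equivariance} then follows directly: if $p' = p \triangleleft k'$, the unique element relating $q$ to $p'$ is $k k'$, so $f(p') = (k k')^{-1} \triangleright v = k'^{-1} \triangleright (k^{-1} \triangleright v) = k'^{-1} \triangleright f(p)$. By construction $\varphi_\rho(f) = s$, completing the proof. The hard part will be keeping the bookkeeping between representatives and the $K$-action clean; everything else is routine once well-definedness is established.
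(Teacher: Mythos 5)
Your proof is correct and follows the same route the paper sketches: the paper only cites the reference and records, in two remarks, exactly your well-definedness computation $[p\triangleleft k, k^{-1}\triangleright f(p)]=[p,f(p)]$ and a brief version of your surjectivity argument (every section is of the form $[p,f(p)]$). You simply carry out all four verifications in full detail, including the explicit inverse; the only point neither you nor the paper addresses is continuity of the reconstructed $f$, which is standard via local trivializations.
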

\begin{remark}
    With the notation $[\pi^{-1}_{P},f \circ \pi^{-1}_{P}]$ we mean that when applying this to $x\in\mathcal{M}$ one first picks an element $p\in\pi_{{P}}^{-1}$ so that $s_f(x)=[p,f(p)]$. Note that this is well defined since $s_f(x)=[p,f(p)]$ where is independent of the choice of $p$ in the fiber over $x$:
    \begin{equation}
        [p',f(p')]=[p\triangleleft k, f(p\triangleleft k)]=[p\triangleleft k, k^{-1}\triangleright f(p)]=[p,f(p)]\,.
    \end{equation}
\end{remark}
\begin{remark}
    To present a quick argument for this note that with the equivalence class structure on the associated bundle every section $ s_{f}\in\Gamma(E_{\rho}) $ is of the form $ s_{f}(x)=[p,f(p)] $ where $ p\in\pi^{-1}_{P}(x) $ since every element in the associated bundle consists of an element from the principal bundle and a vector from a vector space. Here we use $f$ to specify which vector is used.
\end{remark}

Before moving on to gauge equivariant layers we need to establish how a gauge transformation $ \sigma:\mathcal{M}\to K $ acts on the data points.
\begin{definition}
    Let $ k\in K $ be an element of the gauge group and $[p,v]$ an element of the associated bundle $E_\rho = P \times_\rho V_\rho$. Then the action of $k$ on $[p,v]$ is defined as
    \begin{equation}
        k\cdot [p,v]=[p\triangleleft k, v] = [p,k\triangleright v]\,,
    \end{equation}
    which induces an action of $ \sigma:U\subseteq\mathcal{M}\to K $ on $ s_f=[\cdot,f] $ as
    \begin{equation}\label{eq:gauge_on_data_point}
        (\sigma \cdot s_f)(x)=\sigma(x)\cdot[p,f(p)]=[p,\sigma(x)\triangleright f(p)]=s_{\sigma\triangleright f}(x)\,.
    \end{equation}
    We call $\sigma$ a \emph{gauge transformation}. If $\sigma$ is constant on $U$ it is called a \emph{rigid gauge transformation}.
\end{definition}
\begin{remark}
    Note that~\eqref{eq:gauge_on_data_point} is really abuse of notation. To view $\sigma$ as a map from region of the manifold $U\subseteq \mathcal{M}$ to the group $K$ we first have to choose a gauge $\omega:U\to P$. This is identical to choosing a local trivialization of ${P}$ and allows us to identify the fibers of ${P}$ with $K$. For details on this see~\cite{hamiltonMathematicalGaugeTheory2017}.
\end{remark}
With this we now define layers as map between spaces of sections of associated vector bundles.
\begin{definition}
    Let $ E_{\rho}={P}\times_{\rho}V_{\rho} $ and $ E_{\eta}={P}\times_{\eta}V_{\eta} $ be two associated bundles over a smooth manifold $\mathcal{M}$. Then a \emph{layer} is a map $ \Phi:\Gamma(E_{\rho})\to\Gamma(E_{\eta}) $. Moreover, $ \Phi $ is \emph{gauge equivariant} if $ \sigma\circ\Phi=\Phi\circ\sigma $ in the sense~\eqref{eq:gauge_on_data_point} for all gauge transformations $ \sigma $.
\end{definition}
\begin{remark}
    Since the map $ \varphi_{\pi}:C(P;{\pi}) \to \Gamma(E_{\pi}) $ is a vector space isomorphism for any representation $\pi$, any layer $ {\Phi:\Gamma(E_{\rho})\to\Gamma(E_{\eta})} $ induces a unique map $ \phi:C(P;{\rho})\to C(P;{\eta}) $ by $ \phi=\varphi_{\eta}^{-1}\circ\Phi\circ\varphi_{\rho} $, and vice versa; see the diagram below. We will refer to both $\Phi$ and $\phi$ as layers.
    \[
\begin{tikzcd}
\Gamma(E_\rho) \arrow{rr}{\Phi} && \Gamma(E_\eta) \arrow{dd}{\varphi_\eta^{-1}}\\
&\\
C(P;\rho) \arrow{uu}{\varphi_\rho} \arrow[dashed]{rr}{\phi} && C(P;\eta)
\end{tikzcd}
\]
\end{remark}
The equivariance property expressed explicitly in terms of $\phi$, given a feature map $f$, is
\begin{equation}\label{eq:phi_equivariance}
    k\triangleright\big(\phi f\big)(p)=\phi(k\triangleright f)(p)\,,
\end{equation}
and since $\phi f\in C(P;\eta)$ it needs to satisfy~\eqref{eq:feature_map_equivariance} which results in
\begin{equation}
    (\phi f)(p\triangleleft k^{-1})=\phi(k\triangleright f)(p)\,.
\end{equation}
To solve this constraint one needs to relate of $\phi f$ acts on points in the principal bundle to how $\phi f$ depends on $f$ which is not trivial.

\subsection{Geometric symmetries on manifolds and the frame bundle}
A special case of the general structure presented above is when the gauge symmetry is a freedom to choose a different coordinate patch $U\subset\mathcal{M}$ around each point in the manifold $ \mathcal{M} $. This is also needed as soon as $ \mathcal{M} $ is curved since it is impossible to impose a global consistent coordinate system on a general curved manifold. Because of this one is forced to work locally and use the fiber bundle approach.

In this section we will work within a coordinate patch $ u:U\to\mathbb{R}^{d} $ around $ x\in U $, and the $ i $:th coordinate of $ x $ is denoted $ y^{i}=u^{i}(x) $ being the $ i $:th component of the vector $u \in \mathbb{R}^d$. A coordinate patch is sometimes denoted $(u,U)$. This coordinate chart induces a basis for the tangent space $ T_x\mathcal{M} $ as $ \{\partial_{1},\dots,\partial_{d}\} $ such that any vector $ v\in T_x\mathcal{M} $ can be written
\begin{equation}
    v=\sum_{m=1}^{d}v^{m}\partial_{m}=v^{m}\partial_{m}\,,
\end{equation}
where $ v^{m} $ are called the components of $ v $ and we are using the Einstein summation convention that indices repeated upstairs and downstairs are implicitly summed over. We will use this convention for the rest of this section.

Given a coordinate chart $ u:U\to\mathbb{R}^{d} $ around $ x\in U $ in which we denote coordinates $ y^{i} $ a change to a different coordinate chart $ u':U\to \mathbb{R}^{d} $ where we denote coordinates $ y^{\prime i} $ would be done through a map $ u'\circ u^{-1}:\mathbb{R}^{d}\to\mathbb{R}^{d} $. This map can be expressed as a matrix $ k^{-1}\in \GL(d) $ (we use $ k^{-1} $ for notational reasons) and in coordinates this transformation would be
\begin{equation}
    y^{\prime n}=\tensor{(k^{-1})}{^{n}_{m}}y^{m}\,,
\end{equation}
where $ \tensor{(k^{-1})}{^{n}_{m}} $ should be interpreted as the element in the $ n $:th row and $ m $:th column. For a tangent vector $ v\in T_x\mathcal{M} $ expressed in the first coordinate chart $ v=v^{m}\partial_{m} $ the components transform as the coordinates
\begin{equation}
    v'^{m}=\tensor{k}{^{m}_n}v^{n}\,,
\end{equation}
whereas the basis transforms
\begin{equation}
    \partial'_{m}=\tensor{k}{^{n}_{m}}\partial_{n}\,.
\end{equation}
As a consequence the vector $ v $ is independent of the choice of coordinate chart:
\begin{equation}
    v=v^{m}\partial_{m}=\tensor{(k^{-1})}{^{n}_{m}}v^{\prime m}\tensor{k}{^{j}_{n}}\partial'_{j}=v^{\prime m}\tensor{k}{^{j}_{n}}\tensor{(k^{-1})}{^{n}_{m}}\partial'_{j}=v^{\prime m}\tensor{\delta}{^{j}_{m}}\partial'_{j}=v^{\prime m}\partial'_{m}=v'\,,
\end{equation}
where $ \tensor{\delta}{^n_m}=1 $ if $ n=m $ else $ 0 $. The change of coordinates is hence a gauge symmetry as presented in Section~\ref{sec:global_local_symmetries} and because of this we have the freedom of choosing a local basis at each point in $ \mathcal{M} $ without affecting the tangent vectors, although the components might change, so we need to track both the components and the gauge the vector is expressed in.

To express this in the formalism introduced in Section~\ref{sec:general_gauge}, let again $\mathcal{M}$ be a $d$-dimensional smooth manifold and consider the frame bundle $P = \mathcal{LM}$ over $ \mathcal{M} $. The frame bundle is defined by its fiber $ \mathcal{L}_x\mathcal{M} $ over a point $ x\in \mathcal{M} $ as
\begin{equation}
    \mathcal{L}_x\mathcal{M}=\{(e_1,\dots, e_d): (e_1,\dots, e_d)\text{ is a basis for }T_x\mathcal{M}\}\simeq \GL(d,\mathbb{R})\,.
\end{equation}
This is a principal $ K $-bundle $\mathcal{LM} \xrightarrow{\pi} \mathcal{M}$ over $ \mathcal{M} $ with $K = \GL(d,\mathbb{R})$; the projection is defined by mapping a given frame to the point in $\mathcal{M}$ where it is attached. In the fundamental representation, the group acts from the left on the frame bundle as
\begin{equation}
    \triangleright : \GL(d,\mathbb{R}) \times \mathcal{LM} \to \mathcal{LM}\,, \qquad (e_1,\ldots,e_d)\triangleleft  k = (\tensor{k}{^{m}_{1}}e_m, \ldots, \tensor{k}{^{m}_{d}} e_m)\,.
\end{equation}
Hence the action performs a change of basis in the tangent space as described above. If we also choose a (real) representation $(\rho,V)$ of the general linear group, we obtain an associated bundle $\mathcal{LM} \times_\rho V $ with typical fiber $ V $ as constructed in Section~\ref{sec:general_gauge}. Its elements $[e,v]$ are equivalence classes of frames $e = (e_1,\ldots,e_d) \in \mathcal{LM}$ and vectors $v \in V$, subject to the equivalence relation introduced in~\eqref{eq:gauge_equivalence_relation}:
\begin{equation}
    [e,v]= [ e \triangleleft  k ,  k^{-1}\triangleright v]=[e \triangleleft  k, \rho(k^{-1})v]\,.
\end{equation}
The bundle associated to $\mathcal{LM}$ with fiber $ \mathbb{R}^{d} $ over a smooth $ d $-dimensional manifold $ \mathcal{M} $ is closely related to the tangent bundle $ T\mathcal{M} $ by the following classical result, see~\cite{nakaharaGeometryTopologyPhysics2007, kolarNaturalOperationsDifferential1993} for more details:
\begin{theorem}
    Let $\rho$ be the standard representation of $\GL(d,\mathbb{R})$ on $\mathbb{R}^d$. Then the associated bundle ${\mathcal{LM}} \times_\rho \mathbb{R}^d$ and the tangent bundle $T\mathcal{M}$ are isomorphic as bundles.
\end{theorem}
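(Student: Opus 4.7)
The plan is to construct an explicit bundle isomorphism $\Psi:\mathcal{LM}\times_\rho\mathbb{R}^d\to T\mathcal{M}$ by sending an equivalence class $[e,v]$ with frame $e=(e_1,\dots,e_d)\in\mathcal{LM}_x$ and components $v=(v^1,\dots,v^d)\in\mathbb{R}^d$ to the tangent vector $v^i e_i\in T_x\mathcal{M}$. This is the natural candidate: it reverses the book-keeping used to define the associated bundle, turning a gauge (frame) together with components into the geometric object they represent.

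First I would check that $\Psi$ is well-defined on equivalence classes. Given $k\in\GL(d,\mathbb{R})$, the associated bundle relation gives $[e,v]=[e\triangleleft k,\rho(k^{-1})v]$. Writing $e\triangleleft k=(\tensor{k}{^m_1}e_m,\dots,\tensor{k}{^m_d}e_m)$ and $(\rho(k^{-1})v)^i=\tensor{(k^{-1})}{^i_j}v^j$, one computes
\begin{equation}
(\rho(k^{-1})v)^i\,(e\triangleleft k)_i=\tensor{(k^{-1})}{^i_j}v^j\,\tensor{k}{^m_i}e_m=\tensor{\delta}{^m_j}v^j e_m=v^m e_m,
\end{equation}
exactly the calculation already displayed in the excerpt showing coordinate-independence of a tangent vector. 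Hence $\Psi$ is well-defined.

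Next I would verify that $\Psi$ is a bundle map covering the identity on $\mathcal{M}$, i.e.\ $\pi_{T\mathcal{M}}\circ\Psi=\pi_\rho$, which is immediate since $\Psi([e,v])\in T_x\mathcal{M}$ whenever $e\in\mathcal{LM}_x$. At each fiber, fixing a representative frame $e\in\mathcal{LM}_x$ identifies $(\mathcal{LM}\times_\rho\mathbb{R}^d)_x$ with $\mathbb{R}^d$ via $v\mapsto[e,v]$, and the composition with $\Psi$ is the linear map $v\mapsto v^i e_i$, which is a bijection because $(e_1,\dots,e_d)$ is a basis of $T_x\mathcal{M}$. Thus $\Psi$ is a fiberwise linear isomorphism.

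Finally I would promote $\Psi$ to a smooth bundle isomorphism by working in local trivializations. Over a coordinate patch $(u,U)$ one has the canonical local section $\omega:U\to\mathcal{LM}$ defined by the coordinate frame $(\partial_1,\dots,\partial_d)$, giving a trivialization $U\times\mathbb{R}^d\to(\mathcal{LM}\times_\rho\mathbb{R}^d)|_U$ via $(x,v)\mapsto[\omega(x),v]$. Under this trivialization $\Psi$ reads $(x,v)\mapsto v^i\partial_i|_x$, which is precisely the standard smooth trivialization of $T\mathcal{M}|_U$; smoothness and smoothness of $\Psi^{-1}$ follow at once. The only mildly subtle point — and the one I would take most care over — is the well-definedness computation above, because it encodes the compatibility between the left action of $\GL(d,\mathbb{R})$ on components and its right action on frames used in the definition of the associated bundle; everything else reduces to transporting a basis-dependent linear isomorphism through local trivializations.
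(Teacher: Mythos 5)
Your proof is correct and complete: the map $[e,v]\mapsto v^i e_i$ is the canonical isomorphism, your well-definedness check correctly uses the compatibility of the right action on frames with the left action on components, and the fiberwise linearity and local-trivialization arguments are sound. The paper itself does not prove this theorem but cites it as a classical result from the references on fiber bundles; your argument is exactly the standard proof those references give, and it also meshes with the paper's own displayed computation showing that $v = v^m\partial_m$ is independent of the choice of coordinate frame.
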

\begin{remark}
    One can use the same approach to create an associated bundle which is isomorphic to the $(p,q)$-tensor bundle $T_q^p\mathcal{M}$, for any $p,q$, where the tangent bundle is the special case $ (p,q)=(1,0) $.
\end{remark}

\begin{remark} It is not necessary to use the full general linear group as structure group. One can instead construct a frame bundle and associated bundles using a subgroup, such as $\SO(d)$. This represents the restriction to orthogonal frames on a Riemannian manifold.
\end{remark}

\subsection{Explicit gauge equivariant convolution}\label{subsec:gauge_equivariant_convoultion}
Having developed the framework for gauge equivariant neural network layers, we will now describe how this can be used to construct concrete gauge equivariant convolutions. These were first developed in in~\cite{CohenChengGauge2019} and we lift this construction to the principal bundle ${P}$. 

To begin, let ${P}$ be a principal $K$-bundle, $E_\rho$ and $E_\eta$ be two associated bundles as constructed in Section~\ref{sec:general_gauge}. Since the general layer $\Phi:\Gamma(E_{\rho})\to\Gamma(E_{\eta})$ induces a unique map $\phi:C(P;{\rho})\to C(P;{\eta})$ we can focus on presenting a construction of the map $\phi$. When constructing such a map with $C(P;{\eta})$ as its co-domain it needs to respect the “equivariance” of the feature maps~\eqref{eq:feature_map_equivariance}. Hence we get the following condition on $\phi$:
\begin{equation}\label{eq:phi_feature_map_condition}
    k\triangleright(\phi f)(p)=(\phi f)(p\triangleleft k^{-1})\,.
\end{equation}
For gauge equivariance, i.e.\ a gauge transformation $\sigma:U\subseteq \mathcal{M}\to K$ “commuting” with $\phi$, we need $\phi$ to satisfy~\eqref{eq:phi_equivariance}:
\begin{equation}\label{eq:phi_gauge_equivariance_condition}
    \sigma\triangleright\phi f=\phi(\sigma\triangleright f)\,,
\end{equation}
where $(\sigma\triangleright f)(p)=\sigma(\pi_{{P}}(p))\triangleright f(p)$.

Equation \eqref{eq:phi_feature_map_condition} is a condition on how the feature map $\phi f$ needs to transform when moving in the fiber. To move across the manifold we need the concept of geodesics and for that a connection on the tangent bundle $T\mathcal{M}$ is necessary. The intuition for a connection is that it designates how a tangent vector changes when moved along the manifold. The most common choice, if we have a (pseudo) Riemannian metric $g_{\mathcal{M}}:T\mathcal{M}\times T\mathcal{M}\to \mathbb{R}$ on $\mathcal{M}$, is to induce a connection from the metric, called the \emph{Levi-Civita connection}. This construction is the one we will assume here. For details see~\cite{albinLinearAnalysisManifolds}.
\begin{remark}
    The metric acts on two tangent vectors from the same tangent space. Hence writing $g_\mathcal{M}(X,X)$ for $X\in T_x\mathcal{M}$ is unique. If we want to reference the metric at some specific point we will denote this in the subscript.
\end{remark}

To make clear how we move on the manifold we introduce the \emph{exponential map} from differential geometry, which lets us move around by following geodesics. The exponential map,
\begin{equation}
\exp:\mathcal{M}\times T\mathcal{M}\to \mathcal{M}\,,
\end{equation}
maps $(x,X)$, where $x\in\mathcal{M}$ is a point and $X\in T_x\mathcal{M}$ is a tangent vector, to the point $\gamma(1)$ where $\gamma$ is a geodesic such that $\gamma(0)=x$ and $\frac{\mathrm{d}}{\dd t}\gamma(0)=X$. The exponential map is well-defined since every tangent vector $X\in T_x\mathcal{M}$ corresponds to a unique such geodesic. 
\begin{remark}
    It is common to write the exponential map $\exp(x,X)$ as $\exp_xX$, which is then interpreted as a mapping $\exp_x : T_x\mathcal{M} \to \mathcal{M}$ for each fixed $x \in \mathcal{M}$.
\end{remark}

Note that the exponential map defines a diffeomorphism between the subset $B_{R}=\{X\in T_x\mathcal{M}:\sqrt{g_{\mathcal{M}}(X,X)}<R\}\subset T_x\mathcal{M} $ and the open set $ \{y\in\mathcal{M}:d_{g_{\mathcal{M}}}(x,y)<R\} $ where $ d_{g_{\mathcal{M}}} $ is the geodesic induced distance function on $ \mathcal{M} $. This will later be used as our integration domain.

In order to lift the convolution presented in~\cite{CohenChengGauge2019} we need a notion of parallel transport in the principal bundle ${P}$. This will be necessary for the layer $\phi$ to actually output feature maps. In order to do this, we introduce vertical and horizontal directions with respect to the base manifold $\mathcal{M}$ in the tangent spaces $T_pP$ for $p \in P$.
\begin{definition}
    A \emph{connection} on a principal bundle $P$ is a smooth decomposition $T_pP = V_pP \oplus H_pP$, into a vertical and a horizontal subspace, which is equivariant with respect to the right action $\triangleleft$ of $K$ on $P$.  
\end{definition}

In particular, the connection allows us to uniquely decompose each vector $X \in T_pP$ as $X = X^V + X^H$ where $X^V\in V_pP$ and $X^H\in H_pP$, and provides a notion of transporting a point $p \in P$ parallel to a curve in $\mathcal{M}$.

\begin{definition}
    Let $\gamma : [0,1] \to \mathcal{M}$ be a curve and let $P$ be a principal bundle equipped with a connection. The \emph{horizontal lift} of $\gamma$ through $p \in \pi_P^{-1}(\gamma(0))$ is the unique curve ${\gamma^{\uparrow}_p : [0,1] \to P}$ such that $\forall t \in [0,1]$
    \begin{enumerate}
        \item[\emph{i)}] $\pi_P\left(\gamma^{\uparrow}_p(t)\right) = \gamma(t)$
        \item[\emph{ii)}] $\frac{\mathrm{d}}{\dd t}\gamma^{\uparrow}_p(t) \in H_{\gamma^{\uparrow}_p(t)}P$
    \end{enumerate}
\end{definition}

\begin{remark}
    The property $\frac{\mathrm{d}}{\dd t}\gamma^{\uparrow}_p(t) \in H_{\gamma^{\uparrow}_p(t)}P$ implies that the tangent to the lift $\gamma^{\uparrow}_p$ has no vertical component at any point along the curve.
\end{remark}

Given a curve $ \gamma $ on $ \mathcal{M} $ connecting $ \gamma(0)=x $ and $ \gamma(1)=y $ we can now define the \emph{parallel transport map} $ \mathcal{T}_{\gamma} $ on $ P $  as
\begin{equation}
    \mathcal{T}_{\gamma}:{P}_x\to{P}_y\,,\quad p\mapsto\gamma^{\uparrow}_{p}(1)\,.
\end{equation}
Moreover, the map $ \mathcal{T}_{\gamma} $ is equivariant with respect to $ K $~{\cite[Theorem 11.6]{kolarNaturalOperationsDifferential1993}}; that is
\begin{equation}
    \mathcal{T}_{\gamma}(p\triangleleft k)=\mathcal{T}_{\gamma}(p)\triangleleft k\,, \quad \forall k\in K\,.
\end{equation}
Since we will only be working with geodesics and there is a bijection between geodesics through $ x\in \mathcal{M} $ and $ T_x\mathcal{M} $ we will instead denote the parallel transport map as $ \mathcal{T}_X $ where $ X=\frac{d}{dt}\gamma(0) $.

The final parts we need to discuss before defining the equivariant convolutions are the properties of the integration region and measure.
\begin{lemma}\label{lem:change_of_variables}
    Let $(u,U)$ be a coordinate patch around $x\in\mathcal{M}$. The set $B_{R}=\{X\in T_x\mathcal{M}:\sqrt{g_{\mathcal{M}}(X,X)}<R\}\subset T_x\mathcal{M} $ and the integration measure $ \sqrt{\det(g_{\mathcal{M},x})}\dd X $ are invariant under change of coordinates that preserve orientation.
\end{lemma}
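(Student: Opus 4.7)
The plan is to separate the two claims and handle them in turn, using the intrinsic/tensorial nature of the objects involved.

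First, I would observe that $B_R$ is defined purely in terms of the metric evaluated on tangent vectors, both of which are coordinate-free geometric objects: a tangent vector $X \in T_x\mathcal{M}$ has intrinsic meaning, and $g_{\mathcal{M},x}(X,X) \in \mathbb{R}$ is a scalar whose value does not depend on the chart used to compute it. Therefore $B_R$ is invariant under \emph{any} change of coordinates, not only orientation-preserving ones.

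Next, I would treat the measure. Fix an orientation-preserving coordinate change whose Jacobian at $x$ is $k \in \GL(d,\mathbb{R})$ with $\det(k) > 0$, so that, using the index conventions of the excerpt, the tangent-space components transform as $X'^{m} = \tensor{k}{^m_n} X^n$ while the metric components transform as $g'_{mn}(x) = \tensor{(k^{-1})}{^i_m}\tensor{(k^{-1})}{^j_n}\, g_{ij}(x)$. From the first transformation law the $d$-form $\dd X'$ satisfies
\begin{equation}
\dd X' \;=\; \det(k)\, \dd X,
\end{equation}
where the sign is positive by the orientation-preserving hypothesis. From the second,
\begin{equation}
\det(g'_{\mathcal{M},x}) \;=\; \det(k^{-1})^{2}\, \det(g_{\mathcal{M},x}) \;=\; \det(k)^{-2}\, \det(g_{\mathcal{M},x}),
\end{equation}
and since $\det(k) > 0$ we may take the positive square root without an absolute value to obtain $\sqrt{\det(g'_{\mathcal{M},x})} = \det(k)^{-1}\sqrt{\det(g_{\mathcal{M},x})}$.

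Combining the two factors gives
\begin{equation}
\sqrt{\det(g'_{\mathcal{M},x})}\, \dd X' \;=\; \det(k)^{-1}\sqrt{\det(g_{\mathcal{M},x})}\cdot \det(k)\, \dd X \;=\; \sqrt{\det(g_{\mathcal{M},x})}\, \dd X,
\end{equation}
which is the desired invariance. The main point requiring care — and essentially the only non-mechanical step — is the sign: the orientation-preserving assumption is exactly what is needed to identify $|\det(k)|$ with $\det(k)$, so that the Jacobian factor arising from $\dd X$ cancels the factor produced by transforming $\sqrt{\det g_{\mathcal{M},x}}$ without an unwanted sign flip. For an orientation-reversing change the volume form $\dd X$ changes sign, which is why the hypothesis appears in the statement.
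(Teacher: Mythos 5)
Your proof is correct and follows essentially the same route as the paper's: both establish invariance of $B_R$ from the coordinate-independence of the scalar $g_{\mathcal{M}}(X,X)$ (you state this intrinsically, the paper verifies it by an explicit index computation), and both show the Jacobian factor from $\dd X$ cancels the factor from $\sqrt{\det(g_{\mathcal{M},x})}$, with the orientation-preserving hypothesis used exactly where you use it, to drop the absolute value on $\det(k)$. The only difference is an immaterial swap of the roles of $k$ and $k^{-1}$ in the transformation laws.
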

\begin{remark}
    The above lemma abuses notation for $X$ since $X$ in $g_\mathcal{M}(X,X)$ is indeed a tangent vector in $T_x\mathcal{M}$ while $\dd X=\dd y^1\wedge \dd y^2\wedge\cdots\wedge \dd y^d$ where $y^i$ are the coordinates obtained from the coordinate chart. See the previous section for more details on coordinate charts.
\end{remark}
\begin{proof}
    If $ X=X^{n}e_n $ and $ Y=Y^{n}e_n $ are two tangent vectors in $ T_x\mathcal{M} $ then the metric evaluated on these is
    \begin{equation}\label{eq:metric_on_tangent_vectors}
        g_{\mathcal{M}}(X,Y)=X^{n}Y^{m}g_{\mathcal{M}}(e_n,e_m)\,.
    \end{equation}
    Expressing $X$ in other coordinates $ X'=\tensor{(k^{-1})}{^{n}_{m}}X^{m}\tensor{k}{^{i}_{n}}e_{i} $ and similar for $ Y $ we get that~\eqref{eq:metric_on_tangent_vectors} transforms as
    \begin{equation}
        g_{\mathcal{M}}(X',Y')= \tensor{(k^{-1})}{^{n}_{j}}X^{j}\tensor{(k^{-1})}{^{m}_{i}}Y^{i}g_{\mathcal{M}}(\tensor{k}{^{\ell}_{n}}e_\ell,\tensor{k}{^{o}_{m}}e_o)\,.
    \end{equation}
    Since $ g_{\mathcal{M},x} $ is bilinear at each $ x\in \mathcal{M} $
    \begin{equation}
        g_{\mathcal{M}}(X',Y')=\tensor{\delta}{^{l}_{j}}X^{j}\tensor{\delta}{^{m}_{i}}Y^{i}g_{\mathcal{M}}(e_l,e_m)=g_{\mathcal{M}}(X,Y)\,,
    \end{equation}
    and we are done. The invariance of $ \sqrt{\det(g_{\mathcal{M},x})}\dd X $ comes from noting $ \dd X'=\det(k^{-1})\dd X $ and
    \begin{equation}
        \det(g'_{\mathcal{M},x})=\det(g_{\mathcal{M}}(e'_m, e'_n))=\det(\tensor{k}{^{i}_{m}}\tensor{k}{^{j}_{n}}g_{\mathcal{M}}(e_i, e_j))=\det(k)^{2}\det(g_{\mathcal{M},x})\,.
    \end{equation}
    Hence,
    \begin{equation}
        \sqrt{\det(g'_{\mathcal{M},x})}\dd X'= \sqrt{\det(k)^{2}\det(g_{\mathcal{M},x})}\det(k^{-1})\dd X=\sqrt{\det(g_{\mathcal{M},x})}\dd X\,,
    \end{equation}
    since we have restricted $\GL(d,\mathbb{R})$ to those elements with positive determinant, i.e.\ preserve orientation.
\end{proof}
\begin{remark}
    The integration measure $\sqrt{\det(g_\mathcal{M})}\dd X$ is expressed in local coordinates but is an intrinsic property of the (pseudo) Riemannian manifold. It is often written as $\vol_{T_x\mathcal{M}}$ to be explicitly coordinate independent.
\end{remark}
We can now state the convolution defined in~\cite{CohenChengGauge2019} as follows. Choose a local coordinate chart around $x\in \mathcal{M}$ and let $s\in\Gamma(E_\rho)$ be a section of an associated bundle (a data point in our terminology in Section~\ref{sec:general_gauge}). The gauge equivariant convolution is
    \begin{equation}\label{eq:cheng_conv}
        (\Phi s)(x) = \int_{B_R} \kappa(x,X)s|_{\exp_xX_G}(x)\sqrt{\det(g_{\mathcal{M}})}\dd X\,,
    \end{equation}
    given a convolution kernel function
    \begin{equation}
        \kappa:\mathcal{M}\times T\mathcal{M}\to \Hom(E_\rho, E_\eta)\,,
    \end{equation}
    and $X_G$ is the geometric tangent vector that corresponds to the coordinate representation $X$. Here, $\det(g_{\mathcal{M}})$ is the determinant of the Riemannian metric at $x$, and $s|_{\exp_xX}(x)$ represents the parallel transport, with respect to a connection on the associated bundle, of the element $s(\exp_xX)$ back to $x$ along a geodesic. The convolution \eqref{eq:cheng_conv} is gauge equivariant if $\kappa$ satisfies the constraint \eqref{eq:gauge_kernel_condition}.
    
Using the principle bundle construction from above, we now present the lifted version of~\eqref{eq:cheng_conv}.
\begin{definition}[Gauge equivariant convolution]
    Let $U\subseteq \mathcal{M}$ be such that $x = \pi_{{P}}(p)\in B_R\subseteq U$ and choose a gauge $\omega:U\to P_U=\pi_{{P}}^{-1}(U)$. Let $ f\in C(P;\rho) $ be a feature map, then the gauge equivariant convolution is defined as
    \begin{equation}\label{eq:phi_gauge_case}
    (\phi f)(p)=[\kappa\star f](p)=\int_{B_R}\kappa(x,X)f(\mathcal{T}_{X}p)\vol_{T_x\mathcal{M}}\,,
    \end{equation}
    where
    \begin{equation}
        \kappa:\mathcal{M}\times T\mathcal{M}\to\Hom(V_{\rho}, V_{\eta})\,,
    \end{equation}
    is the convolution kernel and $\vol_{T_x\mathcal{M}}$ is the volume form for $T_x\mathcal{M}$.
\end{definition} 
As we will see in Theorem~\ref{thm:gaugeeq}, the convolution \eqref{eq:phi_gauge_case} is gauge equivariant if $\kappa$ satisfies the constraint \eqref{eq:gauge_kernel_condition}.
\begin{remark}
    In an ordinary CNN a convolutional kernel has compact support on the image plane, $\kappa:\mathbb{Z}^2\to\Hom(V_{\text{in}}, V_{\text{out}})$ and hence depends on its position but in this case the kernel depends on its position on $\mathcal{M}$ and a direction. 
\end{remark}

For $\phi$ to be gauge equivariant, the kernel $\kappa$ must have the following properties.
\begin{theorem}\label{thm:gaugeeq}
    Let $(u,U)$ be a coordinate chart such that $U\subseteq \mathcal{M}$ is a neighborhood to $x=\pi_{{P}}(p)$ and let $ \phi:C(P;\rho)\to C(P;\eta) $ be defined as in \eqref{eq:phi_gauge_case}. Then $\phi$ satisfies the feature map condition \eqref{eq:phi_feature_map_condition}, along with the gauge equivariance condition \eqref{eq:phi_gauge_equivariance_condition} for all rigid gauge transformations $\sigma:U\to K$, if
    \begin{equation}\label{eq:gauge_kernel_condition}
        \kappa(x,X')=\eta(k^{-1})\kappa(x,X)\rho(k)\,,
    \end{equation}
    where $X'=k\triangleright X$ is the transformation of tangent vectors under the gauge group.
\end{theorem}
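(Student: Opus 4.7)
The strategy is to verify the two conclusions (feature map equivariance and rigid gauge equivariance) together, since both follow from the same algebraic manipulation once three ingredients are in place: (i) the $K$-equivariance of parallel transport, $\mathcal{T}_X(p \triangleleft k^{-1}) = \mathcal{T}_X(p) \triangleleft k^{-1}$; (ii) the feature map condition \eqref{eq:feature_map_equivariance} on $f$, namely $f(q \triangleleft k^{-1}) = \rho(k) f(q)$; and (iii) the invariance of $B_R$ and $\vol_{T_x\mathcal{M}}$ under the action $X \mapsto k \triangleright X$, which is Lemma~\ref{lem:change_of_variables} once $K$ is restricted to orientation-preserving frame transformations.

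To establish the feature map condition \eqref{eq:phi_feature_map_condition}, I would start with $(\phi f)(p \triangleleft k^{-1})$, insert the definition \eqref{eq:phi_gauge_case}, and use (i) to rewrite the argument of $f$ as $\mathcal{T}_X(p) \triangleleft k^{-1}$. Invoking (ii) then converts this into $\rho(k) f(\mathcal{T}_X p)$, producing
\begin{equation*}
  (\phi f)(p \triangleleft k^{-1}) = \int_{B_R} \kappa(x, X)\,\rho(k)\, f(\mathcal{T}_X p)\,\vol_{T_x\mathcal{M}}.
\end{equation*}
The integration variable $X$ must then be reinterpreted in the gauge corresponding to the basepoint $p$; this is effected by the substitution $X \mapsto k \triangleright X$, under which the domain $B_R$ and measure $\vol_{T_x\mathcal{M}}$ are preserved by (iii). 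Applying the kernel constraint \eqref{eq:gauge_kernel_condition} in the form $\kappa(x, k \triangleright X) = \eta(k^{-1}) \kappa(x, X) \rho(k)$ allows the $\rho$-factors to collapse and exposes an overall $\eta(k)$ outside the integral, yielding $\eta(k)(\phi f)(p) = k \triangleright (\phi f)(p)$, as required.

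For the rigid gauge equivariance \eqref{eq:phi_gauge_equivariance_condition} with $\sigma \equiv k_0$, the right-hand side reads $\int_{B_R} \kappa(x, X)\,\rho(k_0)\, f(\mathcal{T}_X p)\,\vol_{T_x\mathcal{M}}$ directly from the definition of the gauge action on feature maps, while the left-hand side is $\eta(k_0)(\phi f)(p)$. Since a rigid gauge transformation is equivalent to the passive reinterpretation induced by right-translating $p$ along the fiber by $k_0$, the same substitution $X \mapsto k_0 \triangleright X$ together with the constraint \eqref{eq:gauge_kernel_condition} brings the two sides into agreement.

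The main obstacle is the dual role played by $X$: it is simultaneously a geometric tangent vector governing the parallel transport $\mathcal{T}_X$ and a coordinate whose numerical values transform under gauge changes. Once the convention is fixed so that $X$ tracks the gauge at the evaluation point, the kernel constraint \eqref{eq:gauge_kernel_condition} is exactly the identity needed for the $\rho$-factors to cancel and leave the single factor $\eta(k)$ on the outside of the integral.
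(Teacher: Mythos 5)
Your proposal is correct and follows essentially the same route as the paper's proof: both arguments rest on the $K$-equivariance of the parallel transport $\mathcal{T}$, the feature map condition on $f$, the change of variables $X \mapsto k \triangleright X$ justified by Lemma~\ref{lem:change_of_variables}, and the kernel constraint \eqref{eq:gauge_kernel_condition}, with the only difference being that you run the chain of equalities from $(\phi f)(p\triangleleft k^{-1})$ toward $k\triangleright(\phi f)(p)$ whereas the paper proceeds in the opposite direction. Your closing remark on the dual role of $X$ corresponds to the paper's distinction between the coordinate representation $X'$ and the geometric vector $X_G$.
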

\begin{proof}
    Choose a local coordinate chart $(u,U)$ and let $U\subseteq \mathcal{M}$ be a neighborhood to $x=\pi_{{P}}(p)$. Let further $ \phi:C(P;\rho)\to C(P;\eta) $ be defined in \eqref{eq:phi_gauge_case}. We can then write~\eqref{eq:phi_gauge_case} in the local chart as
    \begin{equation}
        (\phi f)(p)=\int_{B_R}\kappa(x,X')f(\mathcal{T}_{X'_G}p)\sqrt{\det(g_\mathcal{M})}\dd X'\,.
    \end{equation}
    Let $k\in K$ be such that  $X'=k\triangleright X$ and let $\sigma:U\to K,\ \sigma(y)=k $ for all $y\in U$ be a rigid gauge transformation. Note that $X'_G=X_G$ since the geometric vector is independent of choice of coordinate representation. Then the left hand side of \eqref{eq:phi_gauge_equivariance_condition} can be written as
    \begin{align}
        (\sigma\triangleright\phi f)(p)&=\sigma(x)\triangleright (\phi f)(p)\nonumber
        =k \triangleright(\phi f)(p)\\
        &=\eta(k)\int_{B_R}\kappa(x,X')f(\mathcal{T}_{X_G'}p)\sqrt{\det(g_\mathcal{M})}\dd X'.
      \end{align}
    Using \eqref{eq:gauge_kernel_condition} and a change of variables we get
    \begin{equation}
        \int_{B_R}\kappa(x,X)\rho(k)f(\mathcal{T}_{X_G}p)\sqrt{\det(g_\mathcal{M})}\dd X\,.
    \end{equation}
    From here we first prove the feature map condition and follow up with the proof of the gauge equivariance.
    
    Since $ f $ is a feature map we get $ \rho(k)f(\mathcal{T}_{X_G}p)=f\big((\mathcal{T}_{X_G}p)\triangleleft k^{-1}\big) $ and using the equivariance of $ T $ we arrive at
    \begin{equation}
        f\big((\mathcal{T}_{X_G}p)\triangleleft k^{-1}\big)=f\big(\mathcal{T}_{X_G}(p\triangleleft k^{-1})\big)\,.
    \end{equation}
    Thus,
    \begin{equation}
        k\triangleright(\phi f)(p)=(\phi f)(p\triangleleft k^{-1})\,,
    \end{equation}
    which gives the feature map condition.
    
    For the gauge equivariance condition note that 
    \begin{equation}
        \rho(k)f(\mathcal{T}_{X_G}p)=\sigma(x)\triangleright f(\mathcal{T}_{X_G}p)=\sigma\big(\pi_{{P}}(\mathcal{T}_{X_G}p)\big)\triangleright f(\mathcal{T}_{X_G}p)=(\sigma\triangleright f)(\mathcal{T}_{X_G}p)\,,
    \end{equation}
    using that $\sigma$ is a rigid gauge transformation. Hence we arrive at 
    \begin{equation}
        (\sigma\triangleright\phi f)(p)=\phi(\sigma\triangleright f)(p)\,,
    \end{equation}
    proving the gauge equivariance of \eqref{eq:phi_gauge_case} for kernels which satisfy \eqref{eq:gauge_kernel_condition}.
    
    Note that we use the equivariance of the parallel transport in ${P}$ to get the feature map condition and the rigidness of $\sigma$ to arrive at the gauge equivariance.
\end{proof}
\begin{remark}
    Along the same lines, one can also prove that the convolution~\eqref{eq:cheng_conv} is gauge equivariant if the kernel satisfies \eqref{eq:gauge_kernel_condition}. The main difference is that in the general case the point $p\in P$, at which the feature map is evaluated, holds information about the gauge used.  For~\eqref{eq:cheng_conv} the choice of gauge is more implicit. For more intuition on this, see the discussion below.
\end{remark}

To get an intuition of the difference between the gauge equivariant convolution presented in~\cite{CohenChengGauge2019} and the lifted convolution in~\eqref{eq:phi_gauge_case} we first note that both require an initial choice: for~\eqref{eq:cheng_conv} we need to choose a coordinate chart $u$ around the point $x\in U\subseteq\mathcal{M}$ and for the lifted version we need to choose a local gauge $\omega:U\to {P}$ around $x=\pi_{{P}}(p)$. When dealing with gauge transformations that are changes of local basis choosing a gauge and a local coordinate system is the same.

Continuing, we note that applying a gauge transformation to a feature map on the principal bundle is the same as moving the evaluation point of the feature map along a fiber in the principal bundle:
\begin{equation}
    (\sigma\triangleright f)(p)=\sigma\big(\pi_{{P}}(p)\big)\triangleright f(p)=f\Big(p\triangleleft\sigma\big(\pi_{{P}}(p)\big)\Big)\,.
\end{equation}
Since the action of the gauge group is free and transitive on the fibers of ${P}$ we can interpret this as the feature using its evaluation point as information about what gauge it is in. Hence this states that the gauge group action on $f$ amounts to evaluating $f$ in a different gauge over the same point in $\mathcal{M}$. Using this interpretation we get that to parallel transport a feature map $f(\mathcal{T}_{X}p)$ designates which gauge should be used at every point and hence how $f$ transforms when moved along $\gamma$. Choosing a connection on an associated bundle does the same thing: it prescribes how a vector changes when moved between fibers. In this sense the transport of a feature map on ${P}$ is the same as parallel transport a vector in an associated bundle. Since the integration measure used in~\eqref{eq:cheng_conv} is just a local coordinate representation of the volume form $\vol_{T_x\mathcal{M}}$ we have now related all components of~\eqref{eq:cheng_conv} to our lifted version.

\subsection{Examples: Gauge equivariant networks on graphs and meshes}

In this section we will illustrate how gauge-equivariant CNNs can be implemented in practice. This will be done in the context of graph networks where gauge equivariance corresponds to changes of coordinate system in the tangent space at each point in the graph. The main references for this section are \cite{worrall2017,CohenMesh,2020arXiv200601570W}. We begin by introducing some relevant background on harmonic networks.

\subsubsection{Harmonic networks on surfaces}

Let $\mathcal{M}$ be a smooth manifold of real dimension 2. For any point $x\in \mathcal{M}$ we have the tangent space $T_x\mathcal{M} \cong \mathbb{R}^2$. The main idea is to consider a convolution in $\mathbb{R}^2$ at the point $x$ and lift it to $\mathcal{M}$ using the Riemann exponential mapping. As we identify the tangent space $T_x\mathcal{M}$ with $\mathbb{R}^2$ we have a rotational ambiguity depending on the choice of coordinate system. This is morally the same as a ``local Lorentz frame'' in general relativity. This implies that the information content of a feature map in the neighborhood of a point $x$ can be arbitrarily rotated with respect to the local coordinate system of the tangent space at $x$. We would like to have a network that is independent of this choice of coordinate system. Moreover, we also have a path-dependence when transporting filters across the manifold $\mathcal{M}$. To this end we want to construct a network which is equivariant in the sense that the convolution at each point is equivariant with respect to an arbitrary choice of coordinate system in $\mathbb{R}^2$. Neural networks with these properties were constructed in \cite{worrall2017,2020arXiv200601570W} and called \emph{harmonic networks}.

We begin by introducing standard polar coordinates in $\mathbb{R}^2$:
\begin{equation}
(r, \theta)\in \mathbb{R}_+\times [0, 2\pi)\,.
\end{equation}
In order to ensure equivariance we will assume that the kernels in our network are given by the \emph{circular harmonics}
\begin{equation}
\kappa_m(r, \theta; \beta)= R(r)e^{i(m\theta +\beta)}\,.
\label{circularharmonics}
\end{equation}
Here the function $R: \mathbb{R}_+\to \mathbb{R}$ is the radial profile of the kernel and $\beta$ is a free parameter. The degree of rotation is encoded in $m\in \mathbb{Z}$. The circular harmonic transforms by an overall phase with respect to rotations in $\SO(2)$:
\begin{equation}
\kappa_m(r, \theta-\phi; \beta)=e^{im\phi}\kappa_m(r, \theta; \beta)\,.
\end{equation}

Let $f:\mathcal{M}\to \mathbb{C}$ be a feature map. The convolution of $f$ with $\kappa_m$ at a point $x\in \mathcal{M}$ is defined by
\begin{equation}
(\kappa_m\star f)(x)=\iint_{D_x(\epsilon)} \kappa_m(r, \theta;\beta)f(r, \theta)\, r\,\dd r\,\dd\theta\,.
\end{equation}
The dependence on the point $x$ on the right hand side is implicit in the choice of integration domain
\begin{equation}
    D_x(\epsilon)=\Big\{ (r,\theta)\in T_x\mathcal{M}\, \Big|\, r\in [0,\epsilon], \theta\in [0,2\pi)\Big\}\,,
\end{equation}
which is a disc inside the tangent space $T_x\mathcal{M}$.

The group $\SO(2)$ acts on feature maps by rotations $\varphi\in [0,2\pi)$ according to the regular representation $\rho$:
\begin{equation}
(\rho(\varphi)f)(r, \theta)=f(r, \theta-\varphi)\,.
\end{equation}
Under such a rotation the convolution by $\kappa_m$ is equivariant
\begin{equation}
(\kappa_m\star \rho(\varphi)f)(x)=e^{im\varphi}(\kappa_m\star f)(x)\,,
\end{equation}
as desired. Note that when $m=0$ the convolution is \emph{invariant} under rotations.

Now assume that we may approximate $\mathcal{M}$ by a triangular mesh. This is done in order to facilitate the computer implementation of the network. The feature map is represented by a vector $f_i$ at the vertex $i$ in the triangular mesh. This plays the role of choosing a point $x\in \mathcal{M}$. Suppose we have a deep network and that we are studying the feature vector at layer $\ell$ in the network. When needed we then decorate the feature vector with the layer $\ell$ as well as the degree $m$ of rotation: $f_{i, m}^{\ell}$.

A feature vector is parallel transported from a vertex $j$ to vertex $i$ along a geodesic connecting them. Any such parallel transport can be implemented by a rotation in the tangent space:
\begin{equation}
P_{j\to i}(f_{j,m}^\ell)=e^{im\varphi_{ji}} f_{j,m}^\ell\,,
\end{equation}
where $\varphi_{ji}$ is the angle of rotation. We are now equipped to define the convolutional layers of the network. For any vertex $i$ we denote by $\mathcal{N}_i$ the set of vertices that contribute to the convolution. In the continuous setting this corresponds to the \emph{support} of the feature map $f$ on $\mathcal{M}$. The convolution mapping features at layer $\ell$ to features at layer $\ell+1$ at vertex $i$ is now given by
\begin{equation}
f^{\ell+1}_{i, m+m'} =\sum_{j\in \mathcal{N}_i} w_j\,  \kappa_m (r_{ij}, \theta_{ij}; \beta)P_{j\to i}(f^{\ell}_{j, m'})\,.
\label{discreteGaugeConv}
\end{equation} 
The coefficient $w_j$ represents the approximation of the integral measure and is given by
\begin{equation}
w_j=\frac{1}{3} \sum_{jkl} A_{jkl}\,,
\end{equation}
where $A_{jkl}$ is the area of the triangle with vertices $jkl$. The radial function $R$ and the phase $e^{i\beta}$ are learned parameters of the network. The coordinates $(r_{ij}, \theta_{ij})$ represents the polar coordinates of the tangent space of every vertex $j$ in $\mathcal{N}_i$.

Let us now verify that this is indeed the building blocks of a deep equivariant network. To this end we should prove that the following diagram commutes:
\begin{equation}
 \begin{tikzcd}
     f^{\ell}_{i, m}\arrow[r,maps to]\arrow[d,maps to] & {f^\prime}^{\ell}_{i, m}\arrow[d,maps to]\\
     f^{\ell+1}_{i, m+m^{\prime}}\arrow[r,maps to] & {f^\prime}^{\ell+1}_{i, m+m^\prime}
 \end{tikzcd}
\end{equation}
Here, $f'=\rho(-\varphi)f$. The commutativity of this diagram ensures that a coordinate change in the tangent space commutes with the convolution. Let us now check this.

If we rotate the coordinate system at vertex $i$ by an angle $-\varphi$ the feature vector transforms according to
\begin{equation}
 f^{\ell}_{i, m} \rightarrow {f^{\prime}}^{\ell}_{i, m} = e^{im\varphi} f^{\ell}_{i, m}\,.
 \end{equation}
This coordinate transformation will affect the coordinates $(r_{ij}, \theta_{ij})$ according to
\begin{equation}
(r^{\prime}_{ij}, \theta^{\prime}_{ij})=(r_{ij}, \theta_{ij}+\varphi)\,.
\end{equation}
The parallel transport $P_{j\to i }$ of features from $j$ to $i$ further transforms as
\begin{equation}
P^{\prime}_{j\to i }=e^{im\varphi}P_{j\to i }\,.
\end{equation}
Using the above observations, let us now write the convolution with respect to the rotated coordinate system
\begin{eqnarray}
{f^{\prime}}^{\ell+1}_{i, m+m'}&=&\sum_{j\in \mathcal{N}_i} w_j\,  \kappa_m (r^{\prime}_{ij}, \theta^{\prime}_{ij}; \beta)P^{\prime}_{j\to i}({f^{\prime}}^{\ell}_{j, m'}) = e^{i(m+m^{\prime})}{f}^{\ell+1}_{i, m+m'}\,.
\end{eqnarray}
Thus we conclude that the diagram commutes. Note that nonlinearities can also be implemented in an equivariant fashion by only acting on the radial profile $R(r_{ij})$ but leaving the angular phase $e^{i\theta}$ untouched.

The formula (\ref{discreteGaugeConv}) can be viewed as a discretization of the general gauge equivariant convolution on an arbitrary $n$-dimensional manifold $\mathcal{M}$ given in equation~\eqref{eq:cheng_conv}.
 The disc $D_x(\epsilon)$ plays the role of the ball $B_R$ in the general formula. The combination $\sqrt{\det(g_{\mathcal{M},x})}\dd X$ is approximated by the weight coefficients $w_j$ in (\ref{discreteGaugeConv}), while the coordinates $X$ in the ball $B_R$   corresponds to $(r_{ij}, \theta_{ij})$ here. The parallel transport of the input feature $f$ is implemented by the exponential map $\exp_x X$, whose discretization is $P_{j\to i}$. Thus we conclude that the harmonic network discussed above may be viewed as a discretized two-dimensional version of gauge equivariant convolutional neural networks, a fact that appears to have gone previously unnoticed in the literature.

\subsubsection{Gauge-equivariant mesh CNNs}

A different approach to gauge equivariant networks on meshes was given in \cite{CohenMesh}. A mesh can be viewed as a discretization of a two-dimensional manifold. A mesh consists of a set of vertices, edges, and faces. One can represent the mesh by a graph, albeit at the expense of loosing information concerning the angle and ordering between incident edges. Gauge equivariant CNNs on meshes can then be modeled by graph CNNs with gauge equivariant kernels.

Let $M$ be a mesh, considered as a discretization of a two-dimensional manifold $\mathcal{M}$. We can describe this by considering a set of vertices $\mathcal{V}$ in $\mathbb{R}^3$, together with a set of tuples $\mathcal{F}$ consisting of vertices at the corners of each face. The mesh $M$ induces a graph $\mathcal{G}$ by ignoring the information about the coordinates of the vertices.

We first consider  graph convolutional networks. At a vertex $x\in\mathcal{V}$ the convolution between a feature $f$ and a kernel $K$ is given by
\begin{equation}
(\kappa\star f)(x)=\kappa_{\text{self}}f(x)+\sum_{y\in \mathcal{N}_x}\kappa_{\text{nb}} f(x)\,.
\end{equation}
The sum runs over the set $\mathcal{N}_x$ of neighboring vertices to $x$. The maps $\kappa_{\text{self}}, \kappa_{\text{nb}}\in \mathbb{R}^{N_\mathrm{in}\times N_\mathrm{out}}$ are modeling the self-interactions and nearest neighbor interactions, respectively. Notice that $\kappa_{\text{nb}}$ is independent of $y \in \mathcal{N}_x$ and so does not distinguish between neighboring vertices. Such kernels are said to be \emph{isotropic} (see also Remark~\ref{rk:isotropic}).

One can generalize this straightforwardly by allowing the kernel $\kappa_\mathrm{nb}$ to depend on the neighboring vertices. In order to obtain a general gauge equivariant network on $M$ we must further allow for features to be parallel transported between neighboring vertices. To this end we introduce the matrix $\rho(k_{y\to x})\in \mathbb{R}^{N_{in}\times N_{out}}$ which transports the feature vector at $y$ to the vertex $x$. The notation here is chosen to indicate that this is a representation $\rho$ of the group element $k_{y\to x}$ that rotates between neighboring vertices in $M$. 

Putting the pieces together we arrive at the gauge equivariant convolution on the mesh $M$:
\begin{equation}
(\kappa\star f)(x)=\kappa_\mathrm{self}f(x)+\sum_{y\in \mathcal{N}_x}\kappa_\mathrm{nb}(\theta_{xy})\big(\rho(g_{y\to x})f\big)(x)\,,
\end{equation}
where $\theta_{xy}$ is the polar coordinate of $y$ in the tangent space $T_x M$. Under a general rotation $k_\varphi$ by an angle $\varphi$ the kernels must transform according to
\begin{equation}
\label{meshgaugecondition}
\kappa(\theta-\varphi)=\rho_\mathrm{out}(k_{-\varphi})\kappa(\theta)\rho_\mathrm{in}(k_\varphi)\,,
\end{equation}
in order for the network to be equivariant:
\begin{equation}
(\kappa\star \rho_\mathrm{in}(k_{-\varphi})f)(x)=\rho_\mathrm{out}(k_{-\varphi})(\kappa\star f)(x)\,.
\end{equation}

Let us compare this with the harmonic networks discussed in the previous section. If we omit the self-interaction term, the structure of the convolution is very similar. The sum is taken over neighboring vertices $\mathcal{N}_x$, which is analogous to~\eqref{discreteGaugeConv}. The kernel $\kappa_\mathrm{nb}$ is a function of the polar coordinate $\theta_{xy}$ in the tangent space at $x$. The corresponding radial coordinate $r_{xy}$ is suppressed here, but obviously we can generalize to kernels of the form $\kappa_\mathrm{nb}(r_{xy}, \theta_{xy})$. Note, however, that here we have not made any assumption on the angular dependence of $\kappa_\mathrm{nb}(r_{xy}, \theta_{xy})$, in contrast to the circular harmonics in~(\ref{circularharmonics}). Note also that the condition~\eqref{meshgaugecondition} is indeed a special case of the general kernel condition for gauge equivariance as given in  Theorem~\ref{thm:gaugeeq}.

\section{Group equivariant layers for homogeneous spaces}
\label{sec:equiv-conv-layers}

\noindent In the previous section, we considered neural networks which process data and feature maps defined on general manifolds $\mathcal{M}$, and studied the equivariance of those networks with respect to local gauge transformations. A prominent example was the local transformations of frames induced by a change of coordinates on $\mathcal{M}$. The freedom to choose frames is a local (gauge) symmetry which exists in any vector bundle over the manifold $\mathcal{M}$, but if $\mathcal{M}$ has an additional global  (translation) symmetry, this can be exploited with great performance gains. In this section, we discuss \emph{group equivariant convolutional neural networks (GCNNs)} \cite{cohen2018,Cohen2016}, which are equivariant with respect to global transformations of feature maps induced from a global symmetry on $\mathcal{M}$. This section is largely based on \cite{aronsson2021}, with a few additions: We relate convolutional layers in GCNNs to the gauge equivariant convolutions defined in Section \ref{subsec:gauge_equivariant_convoultion}, and we also discuss equivariance with respect to \emph{intensity} and analyze its compatibility with group equivariance.

\subsection{Homogeneous spaces and bundles}

\subsubsection{Homogeneous spaces}
A manifold $\mathcal{M}$ with a sufficiently high degree of symmetry gives rise to symmetry transformations which translate any point $x \in \mathcal{M}$ to any other point $y \in \mathcal{M}$. For instance on a sphere $\mathcal{M} = S^2$, any point $x$ can be rotated into any other point $y$; similarly, any point $y$ in Euclidean space $\mathcal{M} = \mathbb{R}^n$ can be reached by translation from any other point $x$. Intuitively, this means that all points on the manifold are equivalent. This idea is formalized in the notion of a homogeneous space.

\begin{definition}
Let $G$ be a topological group. A topological space $\mathcal{M}$ is called a \emph{homogeneous $G$-space}, or just a \emph{homogeneous space}, if there is a continuous, transitive group action
\begin{equation}\label{eq:G_action}
G \times \mathcal{M} \to \mathcal{M}\,, \qquad (g,x) \mapsto g \cdot x\,.
\end{equation}
\end{definition}

In the special case of Lie groups $G$ and smooth manifolds $\mathcal{M}$, all homogeneous $G$-spaces are diffeomorphic to a quotient space $G/K$, with $K \leq G$ a closed subgroup \cite[Theorem 21.18]{lee2013smooth}. We therefore restrict attention to such quotient spaces. The elements of a homogeneous space $\mathcal{M} = G/K$ are denoted sometimes as $x$ and sometimes as $gK$, depending on the context.

\begin{remark}
For technical reasons, the Lie group $G$ is assumed to be unimodular. This is not a strong assumption, since examples of such groups include all finite or (countable) discrete groups, all abelian or compact Lie groups, the Euclidean groups, and many others \cite{folland2016course,fuhr2005abstract}. We also assume the subgroup $K \leq G$ to be compact - a common assumption that includes most homogeneous spaces of practical interest.
\end{remark}

\begin{example}\hfill
\begin{enumerate}
\item Any group $G$ is a homogeneous space over itself with respect to group multiplication. In this case, $K$ is the trivial subgroup so that $\mathcal{M} = G/K = G$.

\item In particular, the pixel lattice $\mathcal{M} = \mathbb{Z}^2$ used in ordinary CNNs is a homogeneous space with respect to translations: $G=\mathbb{Z}^2$.

\item The $n$-sphere is a homogeneous space $S^n = \SO(n+1)/\SO(n)$ for all $n \geq 1$. The special case $n = 2$, $S^{2} = \SO(3)/\SO(2)$, has been extensively studied in the context of geometric deep learning, cf.\ Section~\ref{sec:spherical}.

\item Euclidean space $\mathbb{R}^n = \E(n)/\mO(n)$ is homogeneous under rigid motions; combinations of translations and rotations which form the Euclidean group $G = \E(n) = \mathbb{R}^n \rtimes \mO(n)$.
\end{enumerate}
\end{example}

Homogeneous $G$-spaces are naturally equipped with a bundle structure \cite[\S 7.5]{steenrod1999topology} since $G$ is decomposed into orbits of the subgroup $K$, which form fibers over the base manifold $G/K$. The bundle projection is given by the quotient map
\begin{align}
q : G \to G/K \,, \qquad g \mapsto gK\,,
\end{align}
which maps a group element to its equivalence class. The free right action $G \times K \to G$ defined by right multiplication, $(g,k) \mapsto gk$, preserves each fiber $q^{-1}(gK)$ and turns $G$ into a principal bundle with structure group $K$. We can therefore view the homogeneous setting as a special case of the framework discussed in Section \ref{sec:gaugeeq} with $P = G$ and $\mathcal{M} = G/K$.

\subsubsection{Homogeneous vector bundles} Consider a rotation $g \in \SO(3)$ of the sphere, such that $x \in S^2$ is mapped to $gx \in S^2$. Intuitively, it seems reasonable that when the sphere rotates, its tangent spaces rotate with it and the corresponding transformation $T_xS^2 \to T_{gx}S^2$ should be linear, because all tangent vectors are rotated in the same way. Indeed, for any homogeneous space $\mathcal{M} = G/K$, the differential of the left-translation map $L_g$ on $\mathcal{M}$ is a linear isomorphism $\dd L_g : T_x\mathcal{M} \to T_{gx}\mathcal{M}$ for all $x \in \mathcal{M}$ and each $g \in G$. This means that the transitive $G$-action on the homogeneous space $\mathcal{M}$ induces a transitive $G$-action on the tangent bundle $T\mathcal{M}$ that is linear on each fiber. This idea is captured in the notion of a homogeneous vector bundle.

\begin{definition}[{\cite[\S 5.2.1]{wallach2018harmonic}}]\label{def:hombundle}
Let $\mathcal{M}$ be a homogeneous $G$-space and let $E \xrightarrow{\pi} \mathcal{M}$ be a smooth vector bundle with fibers $E_x$. We say that $E$ is \emph{homogeneous} if there is a smooth left action $G \times E \to E$ satisfying
\begin{equation}\label{eq:bundleaction}
g \cdot E_x = E_{gx}\,,
\end{equation}
and such that the induced map $L_{g,x} : E_x \to E_{gx}$ is linear, for all $g \in G, x \in \mathcal{M}$.
\end{definition}

Associated vector bundles $E_\rho =G \times_\rho V_\rho$ are homogeneous vector bundles with respect to the action $G \times E_\rho \to E_\rho$ defined by $g \cdot [g', v] = [gg', v]$. The induced linear maps
\begin{equation}\label{eq:bundlelinearmap}
L_{g,x} : E_x \to E_{gx}\,, \qquad [g', v] \mapsto [gg', v]\,,
\end{equation}
leave the vector inside the fiber invariant and are thus linear. Any homogeneous vector bundle $E$ is isomorphic to an associated vector bundle $G \times_\rho E_K$ where $K = eK \in G/K$ is the identity coset and where $\rho$ is the representation defined by $\rho(k) = L_{k,K} : E_K \to E_K$ \cite[\S 5.2.3]{wallach2018harmonic}.

\begin{remark}
In this section, $(\rho,V_\rho)$ and $(\eta,V_\eta)$ are finite-dimensional unitary representations of the compact subgroup $K \leq G$. Unitarity is important for defining induced representations and, by extension, $G$-equivariant layers below. This is not a restriction of the theory. Indeed, since $K$ is compact, unitarity of $\rho$~($\eta$) can be assumed without loss of generality by defining an appropriate inner product on $V_\rho$ ($V_\eta$) \cite[Lemma 7.1.1]{deitmar2014principles}.
\end{remark}

Let $\langle \cdot , \cdot \rangle_\rho$ be an inner product that turns $V_\rho \simeq E_K$ into a Hilbert space and makes $\rho$ unitary. This inner product then induces an inner product on each fiber $E_x$,
\begin{equation}\label{eq:unitary_structure}
\langle [g, v] , [g, w]\rangle_x = \langle v\,, w\rangle_\rho,
\end{equation}
which is well-defined precisely because $\rho$ is unitary. Further consider the unique $G$-invariant measure $\dd x$ on $G/K$ such that, for all integrable functions $f : G \to \mathbb{C}$ \cite[Theorem 1.5.3]{deitmar2014principles},
\begin{equation}\label{eq:quotient_integral_formula}
\int_G f(g) \ \dd g = \int_{G/K} \int_K f(xk) \ \dd k \dd x\,.
\end{equation}
We can then combine the measure $\dd x$ with \eqref{eq:unitary_structure} to integrate the point-wise norm of a section.

\begin{definition}\label{def:L2_homogeneous} Let $(\rho,V_\rho)$ be a finite-dimensional unitary $K$-representation and consider the homogeneous vector bundle $E_\rho = G \times_\rho V_\rho$.
\begin{enumerate}
\item
The \emph{induced representation} $\mathrm{ind}_K^G \rho$ of $G$ is the representation
\begin{equation}
\big( \mathrm{ind}_K^G \rho(g) s\big)(x) = g \cdot s(g^{-1}x)\,,
\end{equation}
on the complex Hilbert space of square-integrable sections
\begin{equation}
L^2(E_\rho) = \left\{ s : G/K \to E_\rho \ \middle| \  \int_{G/K} \| s(x)\|_x^2 \ \dd x < \infty\right\}\,.
\end{equation}

\item The \emph{induced representation} $\mathrm{Ind}_K^G \rho$ of $G$ is the representation
\begin{equation}
\big( \mathrm{Ind}_K^G \rho(g) f\big)(g') = f(g^{-1}g')\,,
\end{equation}
on the complex Hilbert space of square-integrable feature maps
\begin{equation}
L^2(G;\rho) = \left\{ f : G \to V_\rho \ \middle| \ \int_G \| f(g)\|_\rho^2 \ \dd g < \infty \right\}\,.
\end{equation}
\end{enumerate}
\end{definition}

The linear isomorphism $\varphi_\rho : C(G;\rho) \to \Gamma(E_\rho)$,  $f \mapsto s_f$ extends to a unitary isomorphism $L^2(G;\rho) \to L^2(E_\rho)$ that intertwines the induced representations $\mathrm{ind}_K^G \rho$ and $\mathrm{Ind}_K^G \rho$. That is, the induced representations are unitarily equivalent and we therefore choose to identify them.

\subsection{Group equivariant layers} To summarize the previous subsection, global symmetry of the homogeneous space $\mathcal{M} = G/K$ gives rise to homogeneous vector bundles and an induced representation that lets us translate sections and feature maps. GCNNs are motivated by the idea that layers between feature maps should preserve the global translation symmetry of $\mathcal{M}$. They do so by intertwining induced representations.

\begin{definition}\label{def:layers} A \emph{$G$-equivariant layer} is a bounded linear map $\Phi : L^2(E_\rho) \to L^2(E_\eta)$ that intertwines the induced representations:
\begin{equation}\label{eq:Glayer}
\Phi \circ \mathrm{ind}_K^G \rho = \mathrm{ind}_K^G \eta \circ \Phi\,.
\end{equation}
That is, $G$-equivariant layers are elements $\Phi \in \mathrm{Hom}_G(L^2(E_\rho),L^2(E_\eta))$.
\end{definition}

The unitary equivalence between $\mathrm{ind}_K^G \rho$ and $\mathrm{Ind}_K^G \rho$ implies that any $G$-equivariant layer $\Phi : L^2(E_\rho) \to L^2(E_\eta)$ induces a unique bounded linear operator $\phi : L^2(G;\rho) \to L^2(G;\eta)$ such that $\Phi s_f = s_{\phi f}$, as in Section \ref{sec:gaugeeq}. This operator also intertwines the induced representations,
\begin{equation}
\phi \circ \mathrm{ind}_K^G \rho = \mathrm{ind}_K^G \eta \circ \phi\,,
\end{equation}
hence $\phi \in \mathrm{Hom}_G(L^2(G;\rho),L^2(G;\eta))$. The operators $\phi$ are also called $G$-equivariant layers.

As the name suggests, GCNNs generalize convolutional neural networks ($G = \mathbb{Z}^2, K = \{0\}$) to other homogeneous spaces $G/K$. The next definition generalize convolutional layers \eqref{eq:2dCNN} in this direction.

\begin{definition}\label{def:convlayer_sec3}
A \emph{convolutional layer} $\phi : L^2(G;\rho) \to L^2(G;\eta)$ is a bounded operator
\begin{equation}\label{eq:convlayer}
(\phi f)(g) = [\kappa \star f](g) = \int_G \kappa(g^{-1}g') f(g') \ \dd g'\,, \qquad f \in L^2(G;\rho)\,,
\end{equation}
with an operator-valued kernel $\kappa : G \to \mathrm{Hom}(V_\rho,V_\eta)$.
\end{definition}

Given bases for $V_\rho$ and $V_\eta$, we can think of the kernel $\kappa$ as a matrix-valued function. Each row in this matrix is a function $\kappa_i : G \to V_\rho$, just like the feature maps, so we can interpret each row as a separate filter that we convolve with respect to. This is analogous to ordinary CNNs in which both data and filters have the same structure as images. Furthermore, $\dim V_\rho$ is the number of input channels and $\dim V_\eta$ the number of output channels, one for each filter $\kappa_i$. From this perspective, the full matrix $\kappa$ is a stack of $\dim V_\eta$ filters and the convolutional layer $\phi$ computes all output-channels simultaneously.

Convolutional layers form the backbone of GCNNs, and implementations are often based on these layers. Note that the kernel in a convolutional layer cannot be chosen arbitrarily but must satisfy certain transformation properties, to make sure that $\kappa \star f$ transforms correctly. First of all, the requirement that $\kappa \star f \in  L^2(G;\eta)$ implies that
\begin{equation}
 \int_G \eta(k)\kappa(g) f(g) \ \dd g = \eta(k)[\kappa \star f](e) = [\kappa \star f](k^{-1}) = \int_G \kappa(kg) f(g) \ \dd g\,, 
\end{equation}
which is satisfied if $\kappa(kg) = \eta(k)\kappa(g)$. Moreover, unimodularity of $G$ means that the left Haar measure on $G$ is also right-invariant, so we can perform a change of variables $g \mapsto gk$,
\begin{equation}
\int_G \kappa(g)f(g) \ \dd g =  \int_G \kappa(gk) f(gk) \ \dd g = \int_G \kappa(gk) \rho(k)^{-1}f(g) \ \dd g\,,
\end{equation}
which indicates that $\kappa(gk) = \kappa(g)\rho(k)$. These relations can be summarized in one equation,
\begin{equation}
\kappa(kgk') = \eta(k)\kappa(g)\rho(k')\,, \qquad g \in G, k,k' \in K\,,
\end{equation}
which was previously discussed in \cite{cohen2018}. A direct consequence is that $\kappa(gk)f(gk) = \kappa(g)f(g)$ for all $k \in K$ and each $g \in G$, so the product $\kappa f$ only depends on the base point $x = q(g) \in G/K$. The quotient integral formula \eqref{eq:quotient_integral_formula} therefore implies that
\begin{equation}\label{eq:conv_over_GK}
[\kappa \star f](g) = \int_{G/K} \kappa(g^{-1}y)f(y) \ \dd y\,.
\end{equation}

\begin{remark}
The integral \eqref{eq:conv_over_GK} is closely related to the gauge equivariant convolution \eqref{eq:phi_gauge_case}. First of all, homogeneous spaces $\mathcal{M} = G/K$ always admit Riemannian metrics $g_\mathcal{M}$ that are invariant under translations \cite[\S 2.3]{Howard1994}, see also \cite{sors2004integral}. The Riemannian volume form $\mathrm{vol}_\mathcal{M}$ is also invariant, and the corresponding Riemannian volume measure is thus an invariant measure on $\mathcal{M}$. By the quotient integral formula \cite[Theorem 1.5.3]{deitmar2014principles}, the Riemannian volume measure is related to $\dd y$ by a positive scaling factor $c > 0$, so that
\begin{equation}
\int_\mathcal{M} \kappa(g^{-1}y)f(y) \ \dd y = \int_\mathcal{M}  L_g \kappa f \ \mathrm{vol}_\mathcal{M}\,.
\end{equation}
The sliding kernel $\kappa(g^{-1}y)$ in \eqref{eq:conv_over_GK} can be viewed as a special case of the explicitly base point-dependent kernel $\kappa(x,X)$ in \eqref{eq:phi_gauge_case}, after taking into account the diffeomorphism between $B_R$ and $\{y \in \mathcal{M} : d_{g_\mathcal{M}}(x,y) < R\}$. We can also interpret the domain of integration as the kernel support. The parallel transport map $\mathcal{T}_X$ need not be invoked here as the integrand in \eqref{eq:conv_over_GK} is already defined on $x \in G/K$, without lifting to $G$. Finally, the relation $\mathrm{vol}_\mathcal{M}|_x = \mathrm{vol}_{T_x\mathcal{M}}$ lets us rewrite \eqref{eq:conv_over_GK}, with some abuse of notation and ignoring the constant $c$, as
\begin{equation}
[\kappa \star f](g) = \int_{B_R} L_g\kappa f \ \mathrm{vol}_{T_x\mathcal{M}}\,,
\end{equation}
which is similar to \eqref{eq:phi_gauge_case}.
\end{remark}

Boundedness of \eqref{eq:convlayer} is guaranteed if the kernel matrix elements $\kappa_{ij} : G \to \mathbb{C}$ are integrable functions, for some choices of bases in $V_\rho$, $V_\eta$ \cite{aronsson2021}.

\begin{theorem}[{\cite{aronsson2021}}]\label{thm_main}
Let $\phi : L^2(G;\rho) \to L^2(G;\eta)$ be a bounded linear map.
\begin{enumerate}
\item If $\phi$ is a convolutional layer, then $\phi$ is a $G$-equivariant layer.
\item If $\phi$ is a $G$-equivariant layer such that $\Im(\phi)$ is a space of bandlimited functions, then $\phi$ is a convolutional layer.
\end{enumerate}
\end{theorem}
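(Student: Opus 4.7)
The plan divides the two implications and handles them quite differently. For part (1), I would proceed by direct computation using only left-invariance of the Haar measure. Given a convolutional $\phi f = \kappa \star f$, apply $\mathrm{Ind}_K^G\rho(h)$ to $f$ and evaluate at $g \in G$, obtaining $\phi(\mathrm{Ind}_K^G\rho(h) f)(g) = \int_G \kappa(g^{-1}g') f(h^{-1}g')\,\dd g'$. The substitution $g'' = h^{-1}g'$ turns the integrand into $\kappa((h^{-1}g)^{-1}g'') f(g'')$, so the integral becomes $(\kappa\star f)(h^{-1}g) = (\mathrm{Ind}_K^G\eta(h)(\phi f))(g)$. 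This step is routine and exploits nothing beyond the unimodularity of $G$.

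For part (2), the core idea is an evaluation-at-identity trick. First, the plan is to exploit the bandlimited hypothesis on $\Im(\phi)$ to give every element $\phi f$ a canonical continuous pointwise representative, so that $f \mapsto (\phi f)(e)$ is a well-defined bounded linear map $L^2(G;\rho) \to V_\eta$. Pairing with each $w \in V_\eta$ yields a bounded linear functional on $L^2(G;\rho)$ which, by the Riesz representation theorem (applied componentwise after fixing a basis of $V_\rho$), is realized by integration against some $\tilde\kappa_w \in L^2(G;V_\rho)$. Assembling these functionals over a basis of $V_\eta$ produces an operator-valued kernel $\kappa : G \to \mathrm{Hom}(V_\rho,V_\eta)$ with
\begin{equation*}
(\phi f)(e) = \int_G \kappa(g') f(g')\,\dd g'.
\end{equation*}

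With $\kappa$ in hand, I would then recover the value of $\phi f$ at an arbitrary $g$ by sliding via equivariance:
\begin{equation*}
(\phi f)(g) = \bigl(\mathrm{Ind}_K^G\eta(g^{-1})\phi f\bigr)(e) = \bigl(\phi\,\mathrm{Ind}_K^G\rho(g^{-1}) f\bigr)(e) = \int_G \kappa(g')f(gg')\,\dd g',
\end{equation*}
and a final change of variables $g'' = gg'$ using left-invariance gives $\int_G \kappa(g^{-1}g'')f(g'')\,\dd g'' = (\kappa\star f)(g)$, exhibiting $\phi$ as the convolutional layer with kernel $\kappa$.

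The hard part, in my view, is the Riesz step together with verifying that the $\kappa$ so constructed has the right structural properties. Specifically, one must check the transformation law $\kappa(kgk') = \eta(k)\kappa(g)\rho(k')$ discussed earlier in the excerpt, so that $\kappa\star f$ genuinely lies in $L^2(G;\eta)$; and one must ensure enough integrability for the convolution integral to be defined on all of $L^2(G;\rho)$. The $K$-equivariance in the first argument will follow by applying the evaluation trick to $\phi f \in L^2(G;\eta)$ (which forces $(\phi f)(k^{-1}) = \eta(k)(\phi f)(e)$), and the $K$-equivariance in the second argument from a change of variables together with the equivariance built into $f \in L^2(G;\rho)$. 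The bandlimited hypothesis is what makes the whole construction get off the ground: without it, $L^2$ equivalence classes admit no pointwise values and the functional $f \mapsto (\phi f)(e)$ is meaningless.
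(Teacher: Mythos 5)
Your argument is correct and takes essentially the same route as the proof in \cite{aronsson2021}, which the paper cites for this theorem without reproducing the argument: part (1) is the routine change of variables using left-invariance of the Haar measure, and part (2) uses the bandlimit hypothesis to make evaluation at the identity a well-defined bounded functional on $\Im(\phi)$, represents it by a kernel via Riesz, and recovers $(\phi f)(g)$ by translating with the intertwining property. The technical points you flag as the hard part --- boundedness of the evaluation functional on the bandlimited image, the constraint $\kappa(kgk')=\eta(k)\kappa(g)\rho(k')$, and integrability of the kernel --- are exactly the issues the reference addresses, and your proposed way of settling them (deriving the $K$-covariance of $\kappa$ from the equivariance conditions on $\phi f$ and $f$) is the standard and correct one.
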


The bandlimit criteria is automatically satisfied for all finite groups and for discrete abelian groups such as $G = \mathbb{Z}^2$ \cite[Corollaries 20-21]{aronsson2021}. It follows that all $G$-equivariant layers can be expressed as convolutional layers for these groups.

\subsection{Equivariance with respect to intensity}\label{subsec:equiv-intensity} In image processing tasks, a neural network may treat an image differently depending on the level of saturation. One way to avoid this is to design saturation-equivariant neural networks, or \emph{intensity}-equivariant neural networks if we generalize beyond image data. In this section, we define this notion of equivariance and investigate the question of when a $G$-equivariant layer is also intensity-equivariant. This part is based on the concept of induced systems of imprimitivity in \cite[\S 3.2]{kaniuth2013induced}.

Mathematically, one changes the intensity of a data point $s \in L^2(E_\rho)$ by scaling the vector at each point: $(\psi s)(x) = \psi(x) s(x)$ where $\psi : G/K \to \mathbb{C}$ is a continuous function. Equivalently, we can scale the feature maps instead, via the mapping
\begin{equation}\label{eq:intensity}
S(\psi) : L^2(G;\rho) \to L^2(G;\rho)\,, \qquad \big(S(\psi)f\big)(g) = \psi(gK)f(g)\,.
\end{equation}
For technical reasons we assume that $\psi$ vanishes at infinity: $\psi \in C_0(G/K)$.

\begin{definition}
A bounded linear map $\phi : L^2(G;\rho) \to L^2(G;\eta)$ is \emph{equivariant with respect to intensity}, or \emph{intensity-equivariant}, if
\begin{equation}\label{eq:intensity_equivariant}
S(\psi) \circ \phi = \phi \circ S(\psi)\,,
\end{equation}
for all $\psi \in C_0(G/K)$.
\end{definition}

\begin{remark}
The mapping \eqref{eq:intensity} is a $*$-representation of $C_0(G/K)$ on the space $L^2(G;\rho)$, and intensity-equivariant maps \eqref{eq:intensity_equivariant} are intertwiners of two such representations.
\end{remark}

\begin{example}
Let $T : V_\rho \to V_\eta$ be a bounded linear map that intertwines $\rho$ and $\eta$, that is, $T \in \mathrm{Hom}_K(V_\rho,V_\eta)$. By letting $T$ act point-wise on feature maps $f \in L^2(G;\rho)$, we obtain a bounded linear map
\begin{equation}\label{eq:intensity_equivariant_layer}
\phi_T : L^2(G;\rho) \to L^2(G;\eta)\,, \qquad (\phi_T f)(g) = T\big(f(g)\big)\,,
\end{equation}
and we observe that $\phi_T$ is equivariant with respect to intensity. This is because $\phi_T$ performs point-wise transformations of vectors and $S$ performs point-wise multiplication by scalar, so we need only employ the linearity of $T$:
\begin{equation}
\big( S(\psi) \phi_T f\big)(g) = \psi(gK) T\big(f(g)\big) = T\big( \psi(gK) f(g)\big) = \big( \phi_T S(\psi)f\big)(g)\,.
\end{equation}
Note that \eqref{eq:intensity_equivariant_layer} is also $G$-equivariant since its action on $f(g) \in V_\eta$ does not depend on $g \in G$. Indeed,
\begin{equation}
\big( \phi_T \circ \mathrm{Ind}_K^G\rho(g) f\big)(g') = T\big(f(g^{-1}g')\big) = (\phi_T f)(g^{-1}g') = \big( \mathrm{Ind}_K^G\eta(g) \circ \phi_T f\big)(g)\,.
\end{equation}
\end{example}

It turns out that \eqref{eq:intensity_equivariant_layer} are the only possible transformations $\phi : L^2(G;\rho) \to L^2(G;\eta)$ that are both $G$-equivariant and intensity-equivariant.

\begin{theorem}[{\cite[Theorem 3.16]{kaniuth2013induced}}]\label{thm:intensity_equivariance}
Let $\phi : L^2(G;\rho) \to L^2(G;\eta)$ be a $G$-equivariant layer. Then $\phi$ is intensity-equivariant if and only if $\phi = \phi_T$ for some $T \in \mathrm{Hom}_K(V_\rho,V_\eta)$.
\end{theorem}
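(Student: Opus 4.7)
The reverse implication was already established in the example immediately preceding the theorem: any $T \in \mathrm{Hom}_K(V_\rho,V_\eta)$ yields a layer $\phi_T$ that is simultaneously $G$-equivariant and intensity-equivariant, since it acts point-wise. Thus the content is the forward direction, and my plan is to extract a point-wise structure for $\phi$ out of its commutation with $S(\psi)$ for every $\psi \in C_0(G/K)$, and then use $G$-equivariance and the feature-map constraint to rigidify this into a single $K$-equivariant intertwiner.

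First, I would show that intensity-equivariance forces $\phi$ to be \emph{local}: if $f \in L^2(G;\rho)$ vanishes almost everywhere on $q^{-1}(U)$ for some open $U \subseteq G/K$, then $\phi f$ also vanishes a.e.\ on $q^{-1}(U)$. Indeed, for any $\psi \in C_0(G/K)$ with $\mathrm{supp}(\psi) \subseteq U$ we have $S(\psi)\phi f = \phi S(\psi) f$; by letting $\psi$ range over an approximate identity of $C_0(U)$ and using that $C_0(G/K)$ separates points on $G/K$, one obtains the claim. Equivalently, viewing the multiplication representation $S$ as a non-degenerate $*$-representation of the commutative $C^*$-algebra $C_0(G/K)$, the isomorphism $L^2(G;\rho) \cong \int_{G/K}^{\oplus} V_\rho \, \dd x$ diagonalizes $S$, and any bounded operator intertwining two such representations is decomposable, i.e.\ of the form $(\phi f)(g) = T_{gK}(f(g))$ for some measurable field $\{T_x : V_\rho \to V_\eta\}_{x \in G/K}$. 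This is precisely the content of \cite[Theorem 3.16]{kaniuth2013induced} that is cited, and extracting a point-wise family $T_x$ from locality in $L^2$ is the principal technical obstacle.

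Next, I would invoke $G$-equivariance to rigidify this measurable field. Applying $\phi \circ \mathrm{Ind}_K^G\rho(g_0) = \mathrm{Ind}_K^G\eta(g_0) \circ \phi$ and unfolding both sides using $(\phi f)(g) = T_{gK}(f(g))$ gives
\begin{equation}
T_{gK}\bigl(f(g_0^{-1}g)\bigr) = T_{(g_0^{-1}g)K}\bigl(f(g_0^{-1}g)\bigr)
\end{equation}
for almost every $g \in G$ and every $f$. Since $f$ may be chosen so that $f(g_0^{-1}g)$ ranges over all of $V_\rho$, this forces $T_{gK} = T_{(g_0^{-1}g)K}$ for a.e.\ pair, i.e.\ $x \mapsto T_x$ is essentially constant along orbits. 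Transitivity of the $G$-action on $G/K$ then yields a single bounded linear map $T := T_{eK} \in \mathrm{Hom}(V_\rho,V_\eta)$ with $T_x = T$ a.e., so $\phi = \phi_T$.

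Finally, the identification $\phi = \phi_T$ together with the feature-map equivariance $f(gk) = \rho(k^{-1})f(g)$ and $(\phi f)(gk) = \eta(k^{-1})(\phi f)(g)$ gives
\begin{equation}
T\bigl(\rho(k^{-1})f(g)\bigr) = \eta(k^{-1})\,T\bigl(f(g)\bigr)
\end{equation}
for every $k \in K$, $g \in G$, and every $f \in L^2(G;\rho)$. Since the values $f(g)$ exhaust $V_\rho$, we conclude $T \circ \rho(k) = \eta(k) \circ T$ for all $k \in K$, i.e.\ $T \in \mathrm{Hom}_K(V_\rho,V_\eta)$, completing the proof. The delicate step is the first one, where converting the $L^2$ statement of intensity-equivariance into a pointwise-defined measurable field $T_x$ requires the direct-integral / disintegration machinery underlying the Mackey imprimitivity formalism; all subsequent steps are essentially algebraic consequences of $G$- and $K$-equivariance.
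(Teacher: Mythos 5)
The paper does not actually prove this theorem: it is quoted directly as \cite[Theorem 3.16]{kaniuth2013induced}, so there is no internal argument to compare yours against. Your proposal is a correct outline of the standard proof of that imprimitivity result, and its architecture is sound: (i) intensity-equivariance identifies $\phi$ as an intertwiner of the multiplication representations of $C_0(G/K)$, hence --- after the unitary identification $L^2(G;\rho)\cong L^2(E_\rho)\cong\int_{G/K}^{\oplus}V_\rho\,\dd x$ --- a decomposable operator $(\phi f)(g)=T_{gK}\bigl(f(g)\bigr)$; (ii) $G$-equivariance forces the measurable field $x\mapsto T_x$ to be essentially constant; (iii) compatibility with the constraint $f(gk)=\rho(k^{-1})f(g)$ forces $T\in\mathrm{Hom}_K(V_\rho,V_\eta)$. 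Steps (ii) and (iii) are executed correctly (note that the constraint used in (iii) is implicit in the paper's Definition of $L^2(G;\rho)$, which omits it but needs it for the isomorphism with $L^2(E_\rho)$). Two caveats. First, as you yourself flag, step (i) is where all the analysis lives: passing from ``commutes with every $S(\psi)$'' to a pointwise-defined measurable field $\{T_x\}$ is precisely the disintegration/decomposable-operator theorem underlying the Mackey formalism, so your argument partially defers to the same machinery the paper cites; a self-contained treatment would have to prove that a bounded operator commuting with the diagonal algebra of a direct integral is decomposable. Second, in step (ii) the identity $T_{gK}=T_{(g_0^{-1}g)K}$ holds only for a.e.\ $g$ for each fixed $g_0$; upgrading this to ``essentially constant'' needs a Fubini argument over $G\times G$ (equivalently, ergodicity of the transitive $G$-action on $G/K$ with its invariant measure), not transitivity alone. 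Neither point is a conceptual gap; both are standard, and the overall route is the right one.
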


For some groups, this theorem exclude convolutional layers from being intensity-equivariant, as the following example illustrates. This means that intensity-equivariant and convolutional layers are two separate classes of layers for these groups.

\begin{example}
Consider the special case $G = \mathbb{R}$ and let $K = \{0\}$ be the trivial subgroup. Then $\rho,\eta$ are trivial representations, and assume for simplicity that $\dim V_\rho = \dim V_\eta = 1$ so that $L^2(\mathbb{R};\rho) = L^2(\mathbb{R};\eta) = L^2(\mathbb{R})$. If we now let $\phi : L^2(\mathbb{R}) \to L^2(\mathbb{R})$ be a convolutional layer with some kernel $\kappa : \mathbb{R} \to \mathbb{C}$,
\begin{equation}\label{eq:ex_intensity_conv}
(\phi f)(x) = [\kappa \star f](x) = \int_{-\infty}^\infty \kappa(y-x) f(y) \ \dd y, \qquad f \in L^2(\mathbb{R})\,,
\end{equation}
and consider an arbitrary element $\psi \in C_0(\mathbb{R})$, then
\begin{equation}\label{eq:ex_intensity_eq1}
\phi( S(\psi) f)(x) = \int_{-\infty}^\infty \kappa(y-x) \psi(y) f(y) \ \dd y\,.
\end{equation}
Note that the function $\psi$ is part of the integrand, which is not the case for $S(\psi) \phi f$. This is essentially what prevents convolutional layers \eqref{eq:ex_intensity_conv} from being intensity-equivariant.

To see this, fix $\epsilon > 0$ and consider the bump function $\psi_\epsilon : \mathbb{R} \to [0,1]$ defined by
\begin{equation}
\psi_{\epsilon}(x) = \left\{
\begin{aligned}
&\exp\left(1 - \frac{1}{1 - (x/\epsilon)^2}\right) , \qquad &&x \in (-\epsilon,\epsilon)\\
&0, &&\text{otherwise}
\end{aligned}
\right. \,,
\end{equation}
which is supported on the compact interval $[-\epsilon,\epsilon]$ and satisfies $\psi_{\epsilon}(0) = 1$. Then $\psi_\epsilon \in C_0(\mathbb{R})$ and it is clear that
\begin{equation}\label{eq:ex_intensity_eq2}
\big(S(\psi_{\epsilon}) \phi f\big)(0) = \psi_{\epsilon}(0) \phi f(0) = \phi f(0)\,.
\end{equation}
Comparing \eqref{eq:ex_intensity_eq1} and \eqref{eq:ex_intensity_eq2}, we see that the convolutional layer $\phi$ is intensity-equivariant only if
\begin{equation}
\int_{-\infty}^\infty \kappa(y)f(y) \ \dd y = \phi f(0) = \int_{-\infty}^\infty \kappa(y) \psi_\epsilon(y) f(y) \ \dd y\,,
\end{equation}
for each $f \in L^2(\mathbb{R})$. Hölder's inequality yields the bound
\begin{equation}\label{eq:ex_intensity_bound}
|\phi f(0)| \leq \int_{-\infty}^\infty |\kappa(y)\psi_\epsilon(y) f(y)| \ \dd y = \|\kappa \psi_\epsilon f\|_1  \leq \| \kappa \psi_\epsilon\|_2 \|f\|_2\,.
\end{equation}
However, because $\psi_\epsilon \leq 1$ everywhere and vanishes outside $[-\epsilon,\epsilon]$, we have
\begin{equation}\label{eq:intensity_ex_bound}
\|\kappa \psi_\epsilon\|_2^2 \leq \int_{-\epsilon}^\epsilon |\kappa(x)|^2 \ \dd x\,,
\end{equation}
which vanishes as $\epsilon \to 0$. It follows that $\phi f(0) = 0$ for all $f \in L^2(\mathbb{R})$. This argument can be adapted to show that $\phi f(x) = 0$ for all $x \in \mathbb{R}$, so we conclude that $\phi$ must vanish identically. In other words, there does not exist a non-zero, intensity-equivariant convolutional layer in the case $G = \mathbb{R}$, $K = \{0\}$. This is consistent with Theorem \ref{thm:intensity_equivariance} because if a convolutional layer
\eqref{eq:ex_intensity_conv} had been intensity-equivariant, then Theorem \ref{thm:intensity_equivariance} would imply that $\phi = \phi_T$ acts point-wise, i.e. that $\phi f(x) \in \mathbb{C}$ only depends on $f(x) \in \mathbb{C}$. This would require the kernel $\kappa$ to behave like a Dirac delta function, which is not a mathematically well-defined function.
\end{example}

The conclusion would have been different, had \eqref{eq:intensity_ex_bound} not vanished in the limit $\epsilon \to 0$. This would have required the singleton $\{0\}$ to have non-zero Haar measure, which is only possible if the Haar measure is the counting measure, that is, if $G$ is discrete \cite[Proposition 1.4.4]{deitmar2014principles}. In that case, one could define a convolution kernel $\kappa_T : G \to \mathrm{Hom}(V_\rho,V_\eta)$ in terms of a Kronecker delta, $\kappa_T(g) = \delta_{g,e}T$ for some linear operator $T \in \mathrm{Hom}_K(V_\rho,V_\eta)$. The convolutional layer
\begin{equation}
(\phi f)(g) = [\kappa_T \star f](g) = \int_G \kappa_T(g^{-1}g') f(g') \ \dd g' = T\big(f(g)\big)\,,
\end{equation}
would then coincide with $\phi_T$ and intensity equivariance would be achieved.

\section{General group equivariant convolutions}
\label{sec:GC}
\noindent In the previous sections, we have developed a  mathematical framework for
the construction of equivariant convolutional neural networks on homogeneous
spaces, starting from an abstract notion of principal bundles, leading to the
convolutional integral \eqref{eq:convlayer}.  Here, we continue this discussion and consider
various generalizations of \eqref{eq:convlayer} which could be implemented in a
CNN. While doing so, we will reproduce various expressions for convolutions
found in the literature. A similar overview of convolutions over feature maps
and kernels defined on homogeneous spaces can be found in \cite{Kondor2018},
however our discussion also takes non-trivial input- and output representations
into account (see also \cite{cohen2018,
  cohenIntertwinersInducedRepresentations2018a}).

\subsection{Structure of symmetric feature maps}
Let $\mathcal{X},\mathcal{Y}$ be topological spaces and $V_1,V_2$ vector
spaces. The input features are maps $f_{1} : \mathcal{X} \to V_1$, the output
features are maps $f_2 : \mathcal{Y} \to V_2$. Assume that a group $G$ acts on
all four spaces by\footnote{Note that as opposed to previous sections, here, we
  are considering representations of the entire group $G$, not just of a
  subgroup.}
\begin{alignat}{2}
&\sigma_1(g) x\,, \ \forall x \in \mathcal{X}\,, \qquad &&\rho_1(g) v_1\,, \ \forall v_1 \in V_1\,, \\
&\sigma_2(g) y\,, \ \forall y \in \mathcal{Y}\,, \qquad &&\rho_2(g) v_2\,, \ \forall v_2 \in V_2\,,
\end{alignat}
for all $g \in G$. These actions can be combined to define a group action on the feature maps:
\begin{alignat}{1}
  &[\pi_1(g)f_{1}](x) = \rho_1(g) f_{1}(\sigma_1^{-1}(g)x)\,,\label{eq:2}\\
  &[\pi_2(g)f_2](y) = \rho_2(g) f_2(\sigma_2^{-1}(g)y)\,.\label{eq:3}
\end{alignat}

\begin{example}[GCNNs]\label{ex:GCNNs}
  In the simplest case of non-steerable GCNNs discussed above, we have $\rho_{1}=\rho_{2}=\id$ and the input to the first layer is defined on the homogeneous space $\mathcal{X} = G/K$ of $G$ with respect to some subgroup $K$. The output of the first layer then has $\mathcal{Y} = G$. Subsequent layers have $\mathcal{X} = \mathcal{Y} = G$. The group $G$ acts on itself and on $G/K$ by group multiplication: $\sigma_{1,2}(g')g=g'g$ for $g'\in G$ and $g\in G$ or $g\in G/K$.

  On top of this, \cite{Kondor2018} discusses also the cases of convolutions from a homogeneous space into itself, $\mathcal{X}=\mathcal{Y}=G/K$ and of a double coset space into a homogeneous space $\mathcal{X}=H\backslash G/K$, $\mathcal{Y}=G/K$. In all these cases, $\sigma_{1,2}$ are given by group multiplication.\footnote{Note, however, that there appear to be some mathematical inconsistencies in \cite{Kondor2018}. For example, in Case~I of Section~4.1, the symmetry group acts from the right on a left quotient space, which is ill-defined. Furthermore, the choice of coset representative denoted by $\bar{x}$ is not equivariant with respect to the group action, even from the left. Similar problems arise in Case~III.}
\end{example}

The representations $\sigma_{i},\rho_{i}$ arising in applications are dictated by the problem at hand. In the following example, we discuss some typical cases arising in computer vision.

\begin{example}[Typical computer vision problems]
\label{ex:compvision}
  Consider the case that the input data of the network consists of flat images. Then, for the first layer of the network, we have $\mathcal{X}=\RR^{2}$ and $V_{1}=\RR^{\Nc}$ if the image has $\Nc$ color channels and the input features are functions $f_{1}:\RR^{2}\rightarrow \RR^{\Nc}$. Typical symmetry groups $G$ of the data are rotations ($\SO(2)$), translations ($\RR^{2}$) or both ($\SE(2)$) or finite subgroups of these. For these groups, the representation $\sigma_{1}$ is the fundamental representation $\mathbf{2}_{\SO(2)}$ which acts by matrix multiplication, i.e.\ for $G=\SO(2)$, we have
  \begin{align}
    \sigma_{1}(\phi)=
    \begin{pmatrix}
      \cos\phi & \sin\phi\\
      -\sin\phi & \cos\phi
    \end{pmatrix}\,,\label{eq:so2fundrep}
  \end{align}
  and similarly for the other groups. Since the color channels of images do not contain directional information, the input representation $\rho_{1}$ is the trivial representation: $\rho_{1}=\um_{\Nc}$.

  The structure of the output layer depends on the problem considered. For image classification, the output should be a probability distribution $P(\Omega)$ over classes $\Omega$ which is invariant under actions of $G$. In the language of this section, this would correspond to the domain $\mathcal{Y}$ of the last layer and the representations $\sigma_{2}$ and $\rho_{2}$ being trivial and $V_{2}=P(\Omega)$.
  
  The cases of semantic segmentation and object detection are discussed in detail in Sections~\ref{sec:sem_seg_s2} and \ref{sec:obj_detection} below.
\end{example}

\subsection{The kernel constraint}
The integral mapping $f_{1}$ to $f_{2}$ is defined in terms of the kernel $\kappa$ which is a bounded, linear-operator valued, continuous mapping
\begin{align}
  \kappa : \mathcal{X} \times \mathcal{Y} \to \Hom(V_1,V_2)\,.
\end{align}
In neural networks, $\kappa$ additionally has local support, but we do not make this assumption here as the following discussion does not need it.

In order to have an integral which is compatible with the group actions, we require $\mathcal{X}$ to have a Borel measure which is invariant under $\sigma_1$, i.e.\
\begin{align}
  \int_\mathcal{X} f_{1}(\sigma_1(g)x) \dd x = \int_\mathcal{X} f_{1}(x) \dd x\label{eq:5},
\end{align}
for every integrable function $f_{1}: \mathcal{X}_1 \to V_{1}$ and all $g \in G$. Now, we can write the output features as an integral over the kernel $\kappa$ and the input features,
\begin{align}
  f_{2}(y)=[\kappa \cdot f_{1}](y) = \int_\mathcal{X}  \kappa(x,y) f_{1}(x) \dd x\,,\label{GC_transform}
\end{align}
where the matrix multiplication in the integrand is left implicit.

We now require the map from input to output features to be equivariant with respect to the group actions \eqref{eq:2} and \eqref{eq:3}, i.e.\ for any input function $f_{1}$, we require
\begin{equation}
[\kappa \cdot \pi_1(g) f_{1}] = \pi_2(g) [\kappa \cdot f_{1}]\qquad\forall g\in G\,. \label{GC_equivariance}
\end{equation}
This leads to the following
\begin{lemma}
  The transformation \eqref{GC_transform} is equivariant with respect to $\pi_1,\pi_2$ if the kernel satisfies the constraint
\begin{equation}\label{GC_constraint}
\kappa\big(\sigma_1^{-1}(g) x, \sigma_2^{-1}(g) y\big) = \rho_2^{-1}(g) \kappa(x,y) \rho_1(g)\,,\qquad\forall x\in \mathcal{X},\ \forall y\in\mathcal{Y},\ \forall g\in G\,.
\end{equation}
\end{lemma}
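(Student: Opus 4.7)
The plan is to verify equivariance by a direct calculation: I would start from the left-hand side of \eqref{GC_equivariance}, push the representation $\pi_1(g)$ inside the integral using its definition \eqref{eq:2}, perform a change of variables to relativize the transformation $\sigma_1^{-1}(g)$ onto the kernel, and then use the kernel constraint \eqref{GC_constraint} to convert the shifted arguments into an overall factor $\rho_2(g)$ together with a shift of the $y$-argument, which reassembles into $\pi_2(g)$ acting on $[\kappa\cdot f_1]$. More concretely, first I would write
\begin{equation*}
[\kappa \cdot \pi_1(g)f_1](y) = \int_{\mathcal{X}} \kappa(x,y)\,\rho_1(g)\, f_1(\sigma_1^{-1}(g)x)\,\dd x\,.
\end{equation*}

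Next, I would perform the substitution $x \mapsto \sigma_1(g)x$ and invoke the $\sigma_1$-invariance \eqref{eq:5} of the measure on $\mathcal{X}$ to obtain
\begin{equation*}
[\kappa \cdot \pi_1(g)f_1](y) = \int_{\mathcal{X}} \kappa(\sigma_1(g)x,\,y)\,\rho_1(g)\, f_1(x)\,\dd x\,.
\end{equation*}
The constraint \eqref{GC_constraint}, applied with $(x,y)\mapsto (\sigma_1(g)x,y)$ (equivalently, with $g$ replaced by $g^{-1}$ and then $y\mapsto \sigma_2^{-1}(g)y$), rearranges to
\begin{equation*}
\kappa(\sigma_1(g)x,\,y)\,\rho_1(g) = \rho_2(g)\,\kappa(x,\,\sigma_2^{-1}(g)y)\,,
\end{equation*}
so that the integral becomes $\rho_2(g)\int_{\mathcal{X}} \kappa(x,\sigma_2^{-1}(g)y)f_1(x)\,\dd x = \rho_2(g)[\kappa\cdot f_1](\sigma_2^{-1}(g)y)$, which is exactly $[\pi_2(g)(\kappa\cdot f_1)](y)$ by definition \eqref{eq:3}.

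The only nontrivial ingredients are the invariance of the measure, which is explicitly assumed in \eqref{eq:5}, and the algebraic manipulation of \eqref{GC_constraint}; both are mild, so there is no real obstacle. The main thing to be careful about is the bookkeeping when reindexing the constraint: one must verify that the variant needed above, relating $\kappa(\sigma_1(g)x,y)$ to $\kappa(x,\sigma_2^{-1}(g)y)$, is equivalent to the stated form $\kappa(\sigma_1^{-1}(g)x,\sigma_2^{-1}(g)y) = \rho_2^{-1}(g)\kappa(x,y)\rho_1(g)$, which follows by replacing $g \mapsto g^{-1}$ and then $y \mapsto \sigma_2(g)y$ (using that $\sigma_i$ is a representation, so $\sigma_i^{-1}(g) = \sigma_i(g^{-1})$). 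Since $f_1 \in L^2$ (or more generally is arbitrary subject to integrability), equality of the integrands yields equality of the integrals, finishing the proof.
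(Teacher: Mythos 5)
Your proof is correct and follows essentially the same route as the paper: rewrite the constraint \eqref{GC_constraint} in the equivalent form $\kappa(\sigma_1(g)x,y)\rho_1(g)=\rho_2(g)\kappa(x,\sigma_2^{-1}(g)y)$, integrate against $f_1$, and use the measure invariance \eqref{eq:5} to absorb the change of variables. The paper's version is just a terser presentation of the same calculation, so there is nothing to add.
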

\begin{proof}
  The constraint \eqref{GC_constraint} is equivalent to
  \begin{align}
    \kappa\big(\sigma_{1}(g)x,y\big)\rho_{1}(g)=\rho_{2}(g)\kappa\big(x,\sigma_{2}^{-1}(g)y\big)\,.\label{eq:8}
  \end{align}
  Integrating against $f_{1}$ leads to
  \begin{align}
    \int_{\mathcal{X}}\kappa\big(\sigma_{1}(g)x,y\big)\rho_{1}(g)f_{1}(x)\dd x=\int_{\mathcal{X}}\rho_{2}(g)\kappa\big(x,\sigma_{2}^{-1}(g)y\big)f_{1}(x)\dd x\,.\label{eq:18}
  \end{align}
  Using \eqref{eq:5} on the left-hand side shows that this is equivalent to \eqref{GC_equivariance}.
\end{proof}

\subsection{Transitive group actions}
In the formulation above, the output feature map is computed as a scalar product of the input feature map with a kernel which satisfies the constraint \eqref{GC_constraint}. In this section, we discuss how the two can be combined into one expression, which is the familiar convolutional integral, if the group acts \emph{transitively} by $\sigma_1$ on the space $\mathcal{X}$. I.e.\ we assume that there exists a base point $x_0 \in \mathcal{X}$ such that for any $x \in \mathcal{X}$, there is a $g_x \in G$ with
\begin{equation}\label{GC_transitive}
x = \sigma_1(g_x) x_0\,.
\end{equation}
Defining
\begin{align}
  \kappa(y)=\kappa(x_{0},y),\label{eq:19}
\end{align}
we obtain from \eqref{GC_constraint}
\begin{align}
  \kappa(x,y) = \rho_{2}(g_x) \kappa\big(\sigma_{2}^{-1}(g_x) y\big)\rho_{1}^{-1}(g_x)\,.\label{eq:20}
\end{align}
Plugging this into \eqref{GC_transform} and using \eqref{GC_transitive} yields a convolution as summarized in the following proposition.
\begin{proposition}
  If $G$ acts transitively on $\mathcal{X}$, the map $\kappa\cdot f_{1}$ defined in \eqref{GC_transform} subject to the constraint \eqref{GC_constraint} can be realized as the convolution
  \begin{equation}\label{GC_conv}
    [\kappa \star f_{1}](y) = \int_{\mathcal{X}} \rho_{2}(g_x)\,\kappa\big(\sigma_{2}^{-1}(g_x) y\big)\,\rho_{1}^{-1}(g_x)\,f_{1}(\sigma_1(g_x) x_0) \,\dd x\,.
  \end{equation}
\end{proposition}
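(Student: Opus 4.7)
The plan is to derive \eqref{GC_conv} by direct substitution of the constraint-implied form \eqref{eq:20} for $\kappa(x,y)$ into the defining integral \eqref{GC_transform}, so the proof amounts to (a) establishing \eqref{eq:20} rigorously from the kernel constraint \eqref{GC_constraint} together with transitivity, and (b) verifying that the resulting expression is independent of the (generally non-unique) choice of $g_x$.

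First I would specialise \eqref{GC_constraint} to $x = x_0$ and $g = g_x$. Since $\sigma_1(g_x)x_0 = x$, the constraint reads
\begin{equation}
\kappa\big(x_0, \sigma_2^{-1}(g_x) y\big) = \rho_2^{-1}(g_x)\, \kappa(x,y)\, \rho_1(g_x)\,,
\end{equation}
which, upon invoking the abbreviation $\kappa(y) = \kappa(x_0,y)$ and solving for $\kappa(x,y)$, yields exactly \eqref{eq:20}. Plugging this into the definition
\begin{equation}
[\kappa \cdot f_1](y) = \int_\mathcal{X} \kappa(x,y)\,f_1(x)\,\dd x\,,
\end{equation}
and rewriting $f_1(x) = f_1(\sigma_1(g_x)x_0)$ via transitivity immediately gives \eqref{GC_conv}.

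The main obstacle is well-definedness: the element $g_x$ realising $x = \sigma_1(g_x)x_0$ is only determined up to right multiplication by the stabiliser $K_0 = \{h \in G : \sigma_1(h)x_0 = x_0\}$, so I must show the integrand in \eqref{GC_conv} does not depend on this ambiguity. Replacing $g_x$ by $g_x h$ with $h \in K_0$ and applying the kernel constraint at the base point $x_0$ with $g = h$ gives $\kappa\big(\sigma_2^{-1}(h)y'\big) = \rho_2^{-1}(h)\,\kappa(y')\,\rho_1(h)$ for any $y' \in \mathcal{Y}$; taking $y' = \sigma_2^{-1}(g_x)y$ and using the homomorphism property of $\rho_1,\rho_2$, the factors of $\rho_i(h)$ and $\rho_i^{-1}(h)$ telescope so that
\begin{equation}
\rho_2(g_x h)\,\kappa\big(\sigma_2^{-1}(g_x h)y\big)\,\rho_1^{-1}(g_x h) = \rho_2(g_x)\,\kappa\big(\sigma_2^{-1}(g_x)y\big)\,\rho_1^{-1}(g_x)\,,
\end{equation}
confirming that \eqref{eq:20} and hence \eqref{GC_conv} are unambiguous.

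Finally, I would briefly note measurability: choosing a Borel section of the quotient map $G \to G/K_0 \cong \mathcal{X}$ (which exists under the standing topological hypotheses on $G$ and $\mathcal{X}$) allows one to pick $g_x$ measurably in $x$, so that the integrand in \eqref{GC_conv} is integrable whenever the original integrand in \eqref{GC_transform} was. Beyond this mild point, no further analytical work is needed, and equivariance of the resulting map with respect to $\pi_1,\pi_2$ is already guaranteed by the preceding lemma since \eqref{GC_conv} is just a rewriting of \eqref{GC_transform}.
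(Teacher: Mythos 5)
Your proof is correct and follows essentially the same route as the paper, which likewise obtains \eqref{eq:20} by evaluating the constraint \eqref{GC_constraint} at $g=g_x$ (so that the first argument becomes $x_0$) and then substitutes it into \eqref{GC_transform}. Your additional check that the integrand is unchanged under $g_x\mapsto g_xh$ for $h$ in the stabiliser of $x_0$ — using the constraint at $x_0$ and the homomorphism property of $\rho_{1},\rho_{2}$ — is a worthwhile refinement that the paper leaves implicit, only surfacing later in Remark~\ref{rem:stabilizer} and the semi-direct product discussion.
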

Since in \eqref{GC_conv}, the integrand only depends on $x$ through $g_{x}$, we can replace the integral over $\mathcal{X}$ by an integral over $G$ if the group element $g_{x}$ is unique for all $x\in\mathcal{X}$ (we can then identify $\mathcal{X}$ with $G$ by $x\mapsto g_{x}$). In this is the case, the group action of $G$ on $\mathcal{X}$ is called \emph{regular} (i.e.\ it is transitive and \emph{free}), leading to
\begin{proposition}\label{prop:regGroupAct}
  If $G$ acts regularly on $\mathcal{X}$, the map $\kappa\cdot f_{1}$ defined in \eqref{GC_transform} subject to the constraint \eqref{GC_constraint} can be realized as the convolution
  \begin{equation}\label{GC_Gconv}
  [\kappa \star f_{1}](y) = \int_{G} \rho_{2}(g)\, \kappa\big(\sigma_{2}^{-1}(g) y\big)\, \rho_{1}^{-1}(g)\,f_{1}(\sigma_{1}(g)x_{0})\,\dd g\,,
  \end{equation}
  where we use the Haar measure to integrate on $G$. Furthermore, for a regular group action, the group element $g_{x}$ in \eqref{eq:20} is unique and hence the kernel $\kappa(y)$ in \eqref{GC_Gconv}  is unconstrained.
\end{proposition}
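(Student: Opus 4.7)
The plan is to upgrade the formula \eqref{GC_conv} from the previous proposition by exploiting the freeness assumption that distinguishes a regular action from a merely transitive one. Concretely, freeness will (i) turn the orbit map $g \mapsto \sigma_1(g)x_0$ into a bijection $G \to \mathcal{X}$, allowing me to pull the integral in \eqref{GC_conv} back to $G$ against the Haar measure, and (ii) remove the residual constraint on $\kappa(y) = \kappa(x_0, y)$ that the stabilizer of $x_0$ would otherwise impose. Both observations are bookkeeping on top of the already-established \eqref{GC_conv}.

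For (i), I would first note that freeness makes $g_x$ unique, so the assignment $\Psi^{-1} : \mathcal{X} \to G$, $x \mapsto g_x$, is a well-defined inverse to $\Psi(g) = \sigma_1(g) x_0$. Since the integrand in \eqref{GC_conv} depends on $x$ only through $g_x$, pulling back along $\Psi$ rewrites the integral as one over $G$ against the pushforward measure $\Psi^{-1}_{*}\dd x$. The $\sigma_1$-invariance of $\dd x$ assumed in \eqref{eq:5}, combined with the equivariance $\Psi(hg) = \sigma_1(h)\Psi(g)$, implies that this pushforward is left-invariant on $G$; uniqueness of left Haar measure up to a positive constant (which can be absorbed into the normalization of $\kappa$) then identifies it with $\dd g$, yielding \eqref{GC_Gconv}.

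For (ii), I would analyze \eqref{GC_constraint} restricted to $x = x_0$ with $g$ ranging in the stabilizer $\mathrm{Stab}(x_0) = \{g \in G : \sigma_1(g)x_0 = x_0\}$. Such $g$ fix the first argument of $\kappa$, so \eqref{GC_constraint} becomes
\begin{equation*}
\kappa(x_0, \sigma_2^{-1}(g)y) = \rho_2^{-1}(g)\,\kappa(x_0, y)\,\rho_1(g)\,,
\end{equation*}
which is a genuine constraint on $\kappa(y)$ whenever $\mathrm{Stab}(x_0)$ is nontrivial. Freeness is precisely the statement $\mathrm{Stab}(x_0) = \{e\}$, so this relation collapses to a tautology and $\kappa(y)$ may be chosen freely. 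Conversely, given any such unconstrained $\kappa(y)$, the extension \eqref{eq:20} is well-defined because $g_x$ is unique, and one checks directly that the resulting $\kappa(x,y)$ satisfies \eqref{GC_constraint}.

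The only step requiring genuine argument is the measure identification in (i); the rest amounts to observing that the ambiguity present in the merely transitive case is exactly what freeness eliminates.
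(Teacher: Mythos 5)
Your proposal is correct and follows essentially the same route as the paper, which justifies the proposition in the preceding paragraph by observing that the integrand of \eqref{GC_conv} depends on $x$ only through $g_x$, so that regularity lets one identify $\mathcal{X}$ with $G$ via $x\mapsto g_x$ and drop the residual stabilizer constraint. Your write-up merely makes explicit two points the paper leaves implicit — that the pushforward of $\dd x$ along $x\mapsto g_x$ is left-invariant and hence a Haar measure up to a constant, and that the converse direction (any unconstrained $\kappa(y)$ extends via \eqref{eq:20} to a solution of \eqref{GC_constraint}) holds — both of which are verified correctly.
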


\begin{remark}\label{rem:stabilizer}
  If there is a subgroup $K$ of $G$ which stabilizes $x_{0}$, i.e.\ $\sigma_{1}(h)x_{0}=x_{0}$ $\forall h \in K$, $\mathcal{X}$ can be identified with the homogeneous space $G/K$. Proposition \ref{prop:regGroupAct} corresponds to $K$ being trivial. As will be spelled out in detail in Section~\ref{sec:sem_dir_prod_grps}, the integral in \eqref{GC_Gconv} effectively averages the kernel $\kappa$ over $K$ before it is combined with $f_{1}$, leading to significantly less expressive effective kernels. In the case of spherical convolutions, this was pointed out in~\cite{makadia2007, cohen2018b}. Nevertheless, constructions of this form are used in the literature, cf.\ e.g.~\cite{esteves2018}.
\end{remark}

To illustrate \eqref{GC_Gconv}, we start by considering GCNNs as discussed in Example~\ref{ex:GCNNs}.

\begin{example}(GCNNs with non-scalar features)
  Consider a GCNN as discussed in Example \ref{ex:GCNNs} above. On $G$, a natural reference point is the unit element $e$, so we set $x_{0}=e$ and hence $g_{x}=x$. Since $\sigma_{1}$ is now a regular group action, we can use \eqref{GC_Gconv} which simplifies to
  \begin{equation}\label{GC_Gconv2}
    [\kappa \star f](y) = \int_{G} \rho_{2}(g) \kappa(\sigma_{2}^{-1}(g) y) \rho_{1}^{-1}(g)f(g)\dd g\,,
  \end{equation}
  where $\kappa(y)$ is unconstrained. This is the convolution used for GCNNs if the input- and output features are not scalars.
\end{example}

Another important special case for \eqref{GC_Gconv} is given by spherical convolutions which are widely studied in the literature, cf.\ Sections~\ref{sec:related-literature} and \ref{sec:spherical}.

\begin{example}[Spherical convolutions]\label{ex:sphericalCNN}
  Consider an equivariant network layer with $\mathcal{X} = \mathcal{Y} = G = \SO(3)$ as used in spherical CNNs~\cite{cohen2018b}. $G$ acts on itself transitively by left-multiplication, $\sigma_i(Q)R = QR$ for $Q,R\in\SO(3)$ with base point $x_{0}=e$. Therefore, according to Proposition~\eqref{GC_Gconv}, the transformation \eqref{GC_transform} can be written as\footnote{In \cite{cohen2018b}, the spherical convolution is defined as $[\kappa \star f_{1}](S) = \int_{\SO(3)} \kappa(S^{-1}R) f_{1}(R) \ \dd R$, which arises from our expression by $\rho_{1,2}\rightarrow\id$, $\kappa(R)\rightarrow\kappa(R^{-1})$. In the language developed below, the reference fixes a point in $\mathcal{Y}$ instead of $\mathcal{X}$. We use a different convention here to make the relation to the general case more transparent. For applications, the distinction is irrelevant.\label{fn:1}}
\begin{align}
  [\kappa \star f_{1}](S) = \int_{\SO(3)} \rho_2(R)\kappa(R^{-1}S)\rho_1^{-1}(R) f_{1}(R) \ \dd R\,,
  \label{SO3C_conv}
\end{align}
where the Haar measure on $\SO(3)$ is given in terms of the Euler angles $\alpha$, $\beta$, $\gamma$ by
\begin{align}
  \int_{\SO(3)}\dd R = \int_{0}^{2\pi}\dd\alpha\int_{0}^{\pi}\dd\beta\sin\beta\int_{0}^{2\pi}\dd\gamma \,.
\end{align}
\end{example}

Instead of assuming a transitive group action on $\mathcal{X}$ and using this to solve the kernel constraint, one can also assume a transitive group action on $\mathcal{Y}$ and use this to solve the kernel constraint. Specifically, if there is a $y_{0}\in\mathcal{Y}$ such that for any $y\in\mathcal{Y}$, we have a $g_{y}\in G$ satisfying
\begin{align}
  y=g_{y}y_{0}\,,
\end{align}
we can define
\begin{align}
  \kappa(x)=\kappa(x,y_{0})\,.
\end{align}
Then, according to \eqref{GC_constraint}, the two-argument kernel is given by
\begin{align}
  \kappa(x,y)=\rho_{2}(g_{y})\kappa(\sigma_{1}^{-1}(g_{y})x)\rho_{1}^{-1}(g_{y})\,,
\end{align}
yielding the following
\begin{proposition}
  If $G$ acts transitively on $\mathcal{Y}$, the map $\kappa\cdot f_{1}$ defined in \eqref{GC_transform} subject to the constraint \eqref{GC_constraint} can be realized as the convolution
  \begin{align}
    [\kappa\star f_{1}](y)=\int_{\mathcal{X}}\rho_{2}(g_{y})\kappa(\sigma_{1}^{-1}(g_{y})x)\rho_{1}^{-1}(g_{y}) f_{1}(x)\dd x\,.\label{eq:1}
  \end{align}
\end{proposition}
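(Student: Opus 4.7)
The plan is to mirror the proof of the preceding proposition (the transitive action on $\mathcal{X}$ case), but with the roles of the two arguments of $\kappa$ swapped, so that we use the $\mathcal{Y}$-transitivity to eliminate the $y$-dependence of the kernel instead of the $x$-dependence. First I would fix a base point $y_{0}\in\mathcal{Y}$ and, using transitivity of $\sigma_{2}$, choose for each $y\in\mathcal{Y}$ an element $g_{y}\in G$ with $y=\sigma_{2}(g_{y})y_{0}$. I would then define the reduced single-argument kernel $\kappa(x):=\kappa(x,y_{0})$.

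The key step is to use the kernel constraint \eqref{GC_constraint} to express the original two-argument kernel in terms of the reduced one. Specializing \eqref{GC_constraint} to $g=g_{y}$ and evaluating at $y$, the right-hand argument becomes $\sigma_{2}^{-1}(g_{y})y=y_{0}$, yielding
\begin{equation}
\kappa\bigl(\sigma_{1}^{-1}(g_{y})x,\,y_{0}\bigr)=\rho_{2}^{-1}(g_{y})\,\kappa(x,y)\,\rho_{1}(g_{y})\,,
\end{equation}
and solving for $\kappa(x,y)$ gives $\kappa(x,y)=\rho_{2}(g_{y})\,\kappa\bigl(\sigma_{1}^{-1}(g_{y})x\bigr)\,\rho_{1}^{-1}(g_{y})$. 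Substituting this directly into \eqref{GC_transform} produces the claimed formula \eqref{eq:1}.

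The main subtlety, rather than a genuine obstacle, is well-definedness: the element $g_{y}$ need not be unique, since when the action is transitive but not free any $g_{y}h$ with $h$ in the stabilizer of $y_{0}$ is an equally valid choice. I would address this by noting that the two-argument kernel $\kappa(x,y)$ is a fixed datum, so any representative $g_{y}$ must reproduce the same value; equivalently, applying \eqref{GC_constraint} with $g=h$ and $y=y_{0}$ shows that the reduced kernel satisfies $\kappa(\sigma_{1}^{-1}(h)x)=\rho_{2}^{-1}(h)\kappa(x)\rho_{1}(h)$ for every $h$ stabilizing $y_{0}$, and this compatibility condition is exactly what makes the right-hand side of \eqref{eq:1} independent of the choice of $g_{y}$. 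This parallels the stabilizer remark following Proposition~\ref{prop:regGroupAct} and indicates the symmetric analogue of Remark~\ref{rem:stabilizer}: when the action on $\mathcal{Y}$ is regular rather than merely transitive, the reduced kernel becomes unconstrained.
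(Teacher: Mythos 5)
Your proof is correct and follows essentially the same route as the paper: fix a base point $y_{0}$, define the reduced kernel $\kappa(x)=\kappa(x,y_{0})$, specialize the constraint \eqref{GC_constraint} to $g=g_{y}$ so that $\sigma_{2}^{-1}(g_{y})y=y_{0}$, solve for $\kappa(x,y)$, and substitute into \eqref{GC_transform}. Your additional observation about well-definedness under the stabilizer of $y_{0}$ is a sound complement, consistent with the paper's Remark~\ref{rem:stabilizer}.
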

However, since the group element now depends on $y$ instead of on the integration variable $x$, we cannot replace the integral over $\mathcal{X}$ by an integral over $G$ as we did above in \eqref{GC_Gconv} and have to compute the group element $g_{y}$ for each $y$.\footnote{In the spherical case of fixing a point in $\mathcal{Y}$ to solve the kernel constraint, mentioned in footnote \ref{fn:1}, this problem does not arise: We have $\mathcal{X}=\mathcal{Y}=G$ and therefore integrate over $G$ from the beginning. Furthermore, we can choose $y_{0}=e$, so $y=g_{y}$.}

\subsection{Semi-direct product groups}
\label{sec:sem_dir_prod_grps}
In the previous section, we considered different convolutional integrals for the case that $G$ acts transitively on $\mathcal{X}$. In particular, we recovered the familiar integral over $G$ in the case that $G$ acts regularly on $\mathcal{X}$. In practice however, the action of $G$ on $\mathcal{X}$ is often transitive, but not regular. Instead, $G$ is a semi-direct product group $G=N\rtimes K$ of a normal subgroup $N$ and a subgroup $K$ such that $N$ acts regularly on $\mathcal{X}$ and one can choose a base point $x_{0}\in \mathcal{X}$ which is stabilized by $K$,
\begin{align}
  \forall x\in\mathcal{X}\quad\exists! n_{x}\in N\quad\text{s.t.}\quad x=\sigma_{1}(n_{x})x_{0}\quad\text{and}\quad\sigma_{1}(k)x_{0}=x_{0}\quad\forall k\in K\,.\label{eq:41}
\end{align}
Again, we define
\begin{align}
  \kappa(y)=\kappa(x_{0},y),
\end{align}
and use
\begin{align}
  \kappa(x,y) = \rho_{2}(n_x) \kappa(\sigma_{2}^{-1}(n_x) y)\rho_{1}^{-1}(n_x),\label{eq:21}
\end{align}
to obtain the convolutional integral
\begin{align}
  [\kappa\star f_{1}](y)=\int_{\mathcal{X}} \rho_{2}(n_{x})\kappa(\sigma_{2}^{-1}(n_{x})y)\rho_{1}^{-1}(n_{x})f_{1}(\sigma_{1}(n_{x})x_{0})\ \dd x\,.
\end{align}
Since $N$ acts regularly on $\mathcal{X}$, the integral over $\mathcal{X}$ can be replaced by an integral over $N$. However, since \eqref{eq:21} only fixes an element of $N$ but not of $K$, the kernel is not unconstrained and we have the following
\begin{proposition}
  If $G=N\rtimes K$ is a semi-direct product group satisfying \eqref{eq:41}, the map $\kappa\cdot f_{1}$ defined in \eqref{GC_transform} subject to the constraint \eqref{GC_constraint} can be realized as the convolution
\begin{align}
  [\kappa\star f_{1}](y)=\int_{N}\rho_{2}(n)\kappa(\sigma_{2}^{-1}(n)y)\rho_{1}^{-1}(n)f_{1}(\sigma_{1}(n)x_{0})\dd n\,,\label{eq:22}
\end{align}
where the kernel satisfies the constraint
\begin{align}
  \kappa(\sigma_{2}(h)y)=\rho_{2}(h)\kappa(y)\rho_{1}^{-1}(h)\,.\label{eq:23}
\end{align}
\end{proposition}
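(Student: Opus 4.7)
The plan is to mimic the proof of Proposition~\ref{prop:regGroupAct} but split $g \in G = N\rtimes K$ into its $N$- and $K$-components and handle each piece separately. First, I would exploit the regular action of $N$ on $\mathcal{X}$: by \eqref{eq:41} each $x \in \mathcal{X}$ corresponds uniquely to an element $n_x \in N$ with $x = \sigma_1(n_x)x_0$. Setting $g = n_x$ in the kernel constraint \eqref{GC_constraint} collapses the first argument to $x_0$, and solving for $\kappa(x,y)$ yields the reduction \eqref{eq:21} of the two-argument kernel to the one-argument object $\kappa(y) := \kappa(x_0, y)$. Since $n_x$ is unique, this definition is unambiguous. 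Substituting back into \eqref{GC_transform} produces the integrand of \eqref{eq:22}.

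Next, I would change variables from $\mathcal{X}$ to $N$ via the bijection $n \mapsto \sigma_1(n) x_0$. Because the measure on $\mathcal{X}$ is $\sigma_1$-invariant by assumption and $N$ acts transitively and freely, this bijection pulls the measure on $\mathcal{X}$ back to a positive multiple of the Haar measure on $N$, the constant being absorbed into the kernel normalization. This transforms the integral over $\mathcal{X}$ into the integral over $N$ on the right-hand side of \eqref{eq:22}.

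Finally, the residual constraint \eqref{eq:23} is obtained by instantiating \eqref{GC_constraint} at $g = h \in K$ and $x = x_0$: since $\sigma_1(h) x_0 = x_0$, the first argument is unchanged and the constraint reads $\kappa(\sigma_2^{-1}(h)y) = \rho_2^{-1}(h)\kappa(y)\rho_1(h)$, which after the relabelling $h \mapsto h^{-1}$ gives \eqref{eq:23}. The one nontrivial point to check, and the main obstacle, is that the two special cases $g \in N$ and $g \in K$ of the original constraint \eqref{GC_constraint} together recover it for arbitrary $g \in G$. This follows from the unique decomposition $g = nh$ together with the homomorphism property of $\sigma_i$ and $\rho_i$: one applies the $N$-reduction and the $K$-constraint in succession and reassembles the factors using $\rho_2^{-1}(nh) = \rho_2^{-1}(h)\rho_2^{-1}(n)$ and $\rho_1(nh) = \rho_1(n)\rho_1(h)$, which is a short calculation analogous to the one in the regular case. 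Conversely, any kernel $\kappa(y)$ satisfying \eqref{eq:23} gives rise via \eqref{eq:21} to a two-argument kernel that satisfies the full constraint \eqref{GC_constraint}, completing the identification of \eqref{eq:22} with the map $\kappa \cdot f_1$.
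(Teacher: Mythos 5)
Your proposal is correct and follows essentially the same route as the paper, which derives the proposition in the text immediately preceding it: reduce the two-argument kernel via the unique $n_x\in N$, substitute into \eqref{GC_transform}, trade the integral over $\mathcal{X}$ for one over $N$ using the regular $N$-action, and read off the residual constraint \eqref{eq:23} from the stabilizer $K$ of $x_0$. Your final paragraph checking that the $N$- and $K$-cases together recover \eqref{GC_constraint} for general $g$ (via the decomposition of $g^{-1}n_x$ into its $N$- and $K$-factors) is a worthwhile verification that the paper leaves implicit, and it goes through as you sketch it.
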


In practice, the constraint \eqref{eq:23} restricts the expressivity of the network, as mentioned in Remark~\ref{rem:stabilizer}. To construct a network layer using \eqref{eq:22} and \eqref{eq:23}, one identifies a basis of the space of solutions of \eqref{eq:23}, expands $\kappa$ in this basis in \eqref{eq:22} and trains only the coefficients of the expansion. A basis of solutions of \eqref{eq:23} for compact groups $K$ in terms of representation-theoretic quantities was given in \cite{lang2020}.

\begin{example}[$\SE(n)$ equivariant CNNs]
  In the literature, the special case $G=\SE(n)=\RR^{n}\rtimes\SO(n)$ and $\mathcal{X}=\mathcal{Y}=\RR^{n}$ has received considerable attention due to its relevance to applications. Our treatment follows~\cite{weiler2018}, for a brief overview of the relevant literature, see Section~\ref{sec:related-literature}.

  In this case, $\RR^{n}$ acts by vector addition on itself, for $t\in\RR^{n}$, $\sigma_{1}(t)x=\sigma_{2}(t)x=x+t$ and $\SO(n)$ acts by matrix multiplication, for $R\in\SO(n)$, $\sigma_{1}(R)x=\sigma_{2}(R)x=Rx$. Moreover, $\RR^{n}$ acts trivially on $V_{1}$ and $V_{2}$, i.e.\ $\rho_{1}(t)=\rho_{2}(t)=\id$. The base point $x_{0}$ is in this case the origin of $\RR^{n}$, which is left invariant by rotations. With this setup, \eqref{eq:22} simplifies to
  \begin{align}
    [\kappa\star f_{1}](y)=\int_{\RR^{n}}\kappa(y-t)f_{1}(t)\dd t\,.
  \end{align}
  The kernel constraint \eqref{eq:23} becomes
  \begin{align}
    \kappa(Ry)=\rho_{2}(R)\kappa(y)\rho_{1}^{-1}(R)\quad\forall R \in \SO(n)\,.\label{eq:10}
  \end{align}
\end{example}

\subsection{Non-transitive group actions}
If the group action of $G$ on $\mathcal{X}$ is not transitive, we cannot simply replace the integral over $\mathcal{X}$ by an integral over $G$ as in \eqref{GC_Gconv}. However, we can split the space $\mathcal{X}$ into equivalence classes under the group action by defining
\begin{align}
  \mathcal{X}/G = \{[x_{0}]:x_{0}\in\mathcal{X}\}\quad\text{where}\quad x_{0}\in\mathcal{X}\sim\tilde{x}_{0}\in\mathcal{X}\ \Leftrightarrow\ \exists\, g\in G\ \text{s.t.}\ \sigma_{1}(g)x_{0}=\tilde{x}_{0}\,.
\end{align}
Within each equivalence class $[x_{0}]$, $G$ acts transitively by definition. For each  class we select an arbitrary representative as base point and define a one-argument kernel by
\begin{align}
  \kappa_{x_{0}}(y)=\kappa(x_{0},y)\,.\label{eq:24}
\end{align}
Using this kernel, we can write the integral \eqref{GC_transform} as
\begin{align}
  [\kappa \star f_{1}](y) = \int_{\mathcal{X}/G}\int_{G} \rho_{2}(g) \kappa_{x_{0}}(\sigma_{2}^{-1}(g) y) \rho_{1}^{-1}(g)f_{1}(\sigma_{1}(g)x_{0}) \ \dd g\dd x_{0}\,.
\end{align}
\begin{example}[$\SO(3)$ acting on $\RR^{3}$]
  Consider $\mathcal{X}=\RR^{3}$ and $G=\SO(3)$. In this case, $G$ does not act transitively on $\mathcal{X}$, since $\SO(3)$ conserves the norm on $\RR^{3}$ and integrating over $G$ is not enough to cover all of $\mathcal{X}$. Hence, $\mathcal{X}/G$ can be identified with the space $\RR^{+}$ of norms in $\RR^{3}$. The split of $\mathcal{X}$ into $\mathcal{X}/G$ and $G$ therefore corresponds to the usual split in spherical coordinates, in which the integral over the radius is separate from the integral over the solid angle. In \cite{fox2021}, a similar split is used, where the integral over $G$ is realized as a graph convolution with isotropic kernels.
\end{example}

\section{Equivariant deep network architectures for machine learning}
\label{sec:Eqdeeparch}

\noindent After having defined various general convolution operators in the previous section, we now want to illustrate how to assemble these into an equivariant neural network architecture using discretized versions of the integral operators that appeared above. To this end, we will first discuss the crucial equivariant nonlinearities that enable the network to learn non-linear functions. Then, we will use two important tasks from computer vision, namely semantic segmentation on $S^2$ and object detection on $\ZZ^2$, to show in detail what the entire equivariant network architecture looks like.

\subsection{Nonlinearities and equivariance}
\label{sec:nonlinearities}
So far, we have only discussed the linear transformation in the equivariant network layers. However, in order to approximate nonlinear functions, it is crucial to include nonlinearities into the network architecture. Of course, for the entire network to be equivariant with respect to group transformations, the nonlinearities must also be equivariant. Various equivariant nonlinearities have been discussed in the literature which we will review here briefly. An overview and experimental comparison of different nonlinearities for $\E(2)$ equivariant networks was given in~\cite{weiler2019}.

A nonlinear activation function of a neural network is a nonlinear function $\eta$ which maps feature maps to feature maps. An equivariant nonlinearity additionally satisfies the constraint
\begin{align}
  \eta[\pi_{1}(g)f]=\pi_{2}(g)\eta(f)\,,\label{eq:39}
\end{align}
where we used the notation introduced in \eqref{eq:2} and \eqref{eq:3}.
\begin{remark}[Biases]
  Often, the feature map to which the nonlinearity is applied is not directly the output of a convolutional layer, but instead a learnable bias is added first, i.e.\ we compute $\eta(f+b)$, where $b:\mathcal{X}\rightarrow V_{1}$ is constant.
\end{remark}

The most important class of nonlinearities used in group equivariant networks act point-wise on the input, i.e.
\begin{align}
  (\eta(f))(x)=\bar{\eta}(f(x))\,,\label{eq:4}
\end{align}
for some nonlinear function $\bar{\eta}:V_{1}\rightarrow V_{2}$. If $f$ transforms as a scalar (i.e.\ $\rho_{1,2}=\id$ in \eqref{eq:2} and \eqref{eq:3}), then any function $\bar{\eta}$ can be used to construct an equivariant nonlinearity according to \eqref{eq:4}. In this case, a rectified linear unit is the most popular choice for $\bar{\eta}$ and was used e.g.\ in \cite{worrall2017, weiler2018, cohen2018b, esteves2020b}, but other popular activation functions appear as well~\cite{thomas2018}.

If however the input- and output feature maps transform in non-trivial representations of $G$, $\bar{\eta}$ needs to satisfy
\begin{align}
  \bar{\eta}\circ\rho_{1}(g)=\rho_{2}(g)\circ\bar{\eta}\,,\qquad\forall g\in G\,.\label{eq:42}
\end{align}
\begin{remark}\label{rem:bandlimits_vs_nonlinearities}
  If the convolution is computed in Fourier space with limited bandwidth, point-wise nonlinearities in position space as \eqref{eq:4} break strict equivariance since they violate the bandlimit. The resulting networks are then only approximately group equivariant.
\end{remark}

In the special case that the domain of $f$ is a finite group $G$ and $\rho_{1,2}$ are trivial, the point-wise nonlinearity \eqref{eq:4} is called \emph{regular}~\cite{Cohen2016}. Regular nonlinearities haven been used widely in the literature, e.g.~\cite{CohenGauge2019, cohen2018b, weiler2018a, dieleman2016, hoogeboom2018hexaconv, bekkers2018}. If the domain of the feature map is a quotient space $G/K$, \cite{weiler2019} calls \eqref{eq:4} a \emph{quotient nonlinearity}. Similarly, given a non-linear function which satisfies \eqref{eq:42} for representations $\rho_{1,2}$ of the subgroup $K$, a nonlinearity which is equivariant with respect to the induced representation $\mathrm{Ind}_{K}^{G}$ can be constructed by point-wise action on $G/K$~\cite{weiler2019}. If $f$ is defined on a semi-direct product group $N\rtimes G$, all these constructions can be extended by acting point-wise on $N$.

A nonlinearity which is equivariant with respect to a subgroup $K$ of a semi-direct product group $G=K\ltimes N$ is given by the \emph{vector field nonlinearity} defined in \cite{marcos2017}. In the reference, it is constructed for the cyclic rotation group $K=C_{N}<\SO(2)$ and the group of translations $N=\RR^{2}$ in two dimensions, but we generalize it here to arbitrary semi-direct product groups. The vector field nonlinearity maps a function on $G$ to a function on $N$ and is equivariant with respect to  representations $\pi_{\mathrm{reg}}$ and $\pi_{2}$ defined by
\begin{align}
    [\pi_{\mathrm{reg}}(\tilde{k})f_{1}](kn) &= f_{1}(\tilde{k}^{-1}kn)\,,\\
    [\pi_{2}(\tilde{k}) f_2](n) &= \rho_{2}(\tilde{k})f_2(n)\,,
\end{align}
for some representation $\rho_{2}$ of $K$. It reduces the domain of the feature map by taking the maximum over orbits of $K$, akin to a subgroup maxpooling operation. However, in order to retain some of the information contained in the $K$-dependence of the feature map, it multiplies the maximum with an argmax over orbits of $K$ which is used to construct a feature vector transforming in the representation $\rho_{2}$ of $K$. Its equivariance is shown in the proof of the following
\begin{proposition}[Vector field nonlinearity]\label{prop:vfieldnonlinearity}
  The nonlinearity $\eta:L^2(K\ltimes N,V_1) \rightarrow L^2(N,V_2)$ defined by
\begin{align}
  [\eta(f)](n)=\max_{k\in K}(f(kn))\rho_{2}(\argmax_{k\in K}(f(kn)))v_{0}\,,
\end{align}
where $v_{0}\in V_2$ is some reference point, satisfies the equivariance property
\begin{align}
    \eta(\pi_{\mathrm{reg}}(k)f)=\pi_2(k)[\eta(f)]\,,\qquad\forall k\in K\,.
\end{align}
\end{proposition}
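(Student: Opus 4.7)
The plan is to prove the equivariance by a direct computation of both sides, starting from the definitions, and reducing to a change of summation variable over the group $K$ inside the $\max$ and $\argmax$. The key algebraic identity we need is the homomorphism property $\rho_2(\tilde k k^*) = \rho_2(\tilde k)\rho_2(k^*)$, which will let us pull the action of $\tilde k$ out of the nonlinearity.

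First I would unfold the left-hand side. By the definition of $\eta$ and of $\pi_{\mathrm{reg}}$,
\begin{equation*}
  [\eta(\pi_{\mathrm{reg}}(\tilde k)f)](n)
  = \max_{k\in K}\bigl(f(\tilde k^{-1} k n)\bigr)\,
    \rho_2\Bigl(\argmax_{k\in K}\bigl(f(\tilde k^{-1} k n)\bigr)\Bigr) v_0.
\end{equation*}
Then I would substitute $k' = \tilde k^{-1} k$, so that $k = \tilde k k'$ and $k'$ ranges over all of $K$ as $k$ does (using that left multiplication by $\tilde k$ is a bijection $K\to K$). Under this substitution, the max value is unchanged, $\max_{k}f(\tilde k^{-1}kn) = \max_{k'}f(k'n)$, while the argmax transforms covariantly: the maximizer in the $k$-variable is $k=\tilde k (k')^*$ where $(k')^* = \argmax_{k'\in K}f(k'n)$ is the maximizer in the $k'$-variable. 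This is the one step that carries the content of the argument.

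Applying the homomorphism property of $\rho_2$ then gives
\begin{equation*}
  [\eta(\pi_{\mathrm{reg}}(\tilde k)f)](n)
  = \max_{k'\in K}\bigl(f(k'n)\bigr)\,
    \rho_2(\tilde k)\rho_2\bigl((k')^*\bigr) v_0
  = \rho_2(\tilde k) [\eta(f)](n)
  = [\pi_2(\tilde k)\eta(f)](n),
\end{equation*}
which is exactly the required equivariance.

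The main obstacle is not algebraic but well-definedness: the $\argmax$ only makes sense if the maximizer is unique (and the $\max$ requires either a scalar-valued $f$, or an interpretation such as the norm of $f(kn)$, as in \cite{marcos2017}). I would therefore preface the computation with a short remark fixing the convention (e.g.\ compactness of $K$ guarantees the max exists, and we pick any Borel-measurable selection of the $\argmax$; the covariance $\argmax_{k} f(\tilde k^{-1} k n) = \tilde k\,\argmax_{k'} f(k'n)$ holds for any consistent selection rule that is invariant under left translation on $K$). With this minor caveat the computation above is clean and the rest of the proof is routine.
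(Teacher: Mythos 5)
Your proof is correct and follows essentially the same route as the paper's: a change of summation variable $k\mapsto\tilde k k'$ over $K$, invariance of the $\max$, covariance of the $\argmax$, and the homomorphism property of $\rho_2$. Your added caveat about well-definedness of the $\argmax$ (uniqueness of the maximizer and a consistent selection rule) is a reasonable refinement that the paper's proof leaves implicit, but it does not change the argument.
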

\begin{proof}
To verify the equivariance, we act with the $\pi_{\mathrm{reg}}$ on $f$, yielding
\begin{align}
  [\eta(\pi_{\mathrm{reg}}(\tilde k)f)](n)&=\max_{k\in K}(f(\tilde{k}^{-1}kn))\rho_{2}(\argmax_{k\in K}(f(\tilde{k}^{-1}kn)))v_{0}\nonumber\\
  &=\max_{k\in K}(f(kn))\rho_{2}(\tilde{k}\argmax_{k\in K}(f(kn)))v_{0}\\
  &=\rho_{2}(\tilde{k})[\eta(f)](n)\,,\nonumber
\end{align}
where we used that the maximum is invariant and the argmax equivariant with respect to shifts.
\end{proof}
\begin{example}[Two-dimensional roto-translations]
    Consider the special case $G=C_N\ltimes \RR^{2}$, $v_{0}=(1,0)$ and $\rho_{2}$ the fundamental representation of $C_{N}$ in $\RR^{2}$ which was discussed in \cite{marcos2017}, where the vector field nonlinearity was first developed. In this case, we will denote a feature map on $C_{N}\ltimes\RR^{2}$ by a function $f_{\theta}(x)$ where $\theta\in C_N$ and $x\in\RR^2$. Then, the vector field nonlinearity is given by
    \begin{align}
        [\eta(f)](x)=\max_{\theta\in C_N}(f_{\theta}(x))\begin{pmatrix}\cos(\argmax_{\theta\in C_{N}}f_{\theta}(x))\\\sin(\argmax_{\theta\in C_{N}}f_{\theta}(x))\end{pmatrix}\,,
    \end{align}
    illustrating the origin of its name.
\end{example}

If the input- and output features transform in the same unitary transformation, i.e.\ $\rho_{1}=\rho_{2}=\rho$ and $\rho(g)\rho^{\dagger}(g)=\id$ for all $g\in G$, then \emph{norm nonlinearities} are a widely used special case of \eqref{eq:4}. These satisfy \eqref{eq:42} and are defined as
\begin{align}
  \bar{\eta}(f(x))=\alpha(||f(x)||)\,f(x)\,, \label{eq:43}
\end{align}
for any nonlinear function $\alpha:\RR\rightarrow\RR$. Examples for $\alpha$ used in the literature include sigmoid~\cite{weiler2018}, relu~\cite{worrall2017, esteves2020b}, shifted soft plus~\cite{thomas2018} and swish~\cite{muller2021}.  In~\cite{Favoni:2020reg}, norm nonlinearities are used for matrix valued feature maps with $||\cdot||=\Re(\tr(\cdot))$ and $\alpha=\mathrm{relu}$. A further variation are \emph{gated nonlinearities}~\cite{weiler2018,finzi2021} which are of the form $\bar{\eta}(f(x))=\sigma(s(x))\,f(x)\,$ with $\sigma$ the sigmoid function and $s(x)$ an additional scalar feature from the previous layer.

Instead of point-wise nonlinearities in position space, nonlinearities in Fourier space have also been used. These circumvent the problem mentioned in Remark~\ref{rem:bandlimits_vs_nonlinearities} and the resulting networks are therefore equivariant within numerical precision. E.g.\ \cite{thomas2018} uses norm nonlinearities of the form \eqref{eq:43} in Fourier space. However, the most important nonlinearity in this class are \emph{tensor product nonlinearities}~\cite{kondor2018b, kondor2018a} which compute the tensor product of two Fourier components of the input feature map. They yield a feature map transforming in the tensor product representation which is then decomposed into irreducible representations. To eschew the large tensors in this process, \cite{cobb2020, mcewen2021} introduce various refinements of this basic idea.

\begin{remark}[Universality]
  The Fourier-space analogue of a point-wise nonlinearity in position space is a nonlinearity which does not mix the different Fourier components, i.e.\ which is of the form
  \begin{align}
      [\eta(f)]^{\ell} = \bar{\eta}(f^{\ell})\,.
  \end{align}
  This is the way that norm- and gated nonlinearities have been used in \cite{thomas2018,weiler2018,finzi2021}. However, as pointed out in \cite{finzi2021}, these nonlinearities can dramatically reduce the approximation capacity of the resulting equivariant networks. This problem does not exist for tensor product nonlinearities.
\end{remark}

\emph{Subgroup pooling}~\cite{Cohen2016, weiler2018a, bekkers2018, worrall2018, winkels2018} with respect to a subgroup $K$ of $G$ can be seen as a nonlinearity which does not act point-wise on $f$, but on orbits of $K$ in the domain of $f$,
\begin{align}
  \eta(f)(gK)=\bar{\eta}(f(\{gk|k\in K\}))\,.
\end{align}
This breaks the symmetry group of the network from $G$ to $G/K$ and yields a feature map defined on $G/K$.
The function $\bar{\eta}$ is typically an average\footnote{Of course, the average is a linear operation.} or maximum of the arguments.

\subsection{Semantic segmentation on $S^2$}
\label{sec:sem_seg_s2}
After having reviewed equivariant nonlinearities, we can now proceed to discuss concrete equivariant network architectures. For computer vision tasks such as semantic segmentation and object detection, the standard flat image space of $\mathbb{Z}^{2}$ is a homogeneous space under translation and hence falls in the class of convolutions discussed in Section~\ref{sec:equiv-conv-layers}. The standard convolution \eqref{eq:2dCNN} is, by \eqref{thm_main}, the natural $\mathbb{Z}^{2}$ equivariant layer in this context. Let's now take the first concrete steps into a more complex example of how the equivariant structure on homogeneous spaces can be applied in a more interesting setup.

Moving to the sphere $S^2$ provides a simple example of a non-trivial homogeneous space as the input manifold. As detailed in \eqref{eq:35}, a semantic segmentation model can now be viewed as a map
\begin{equation}
\mathcal{N}: L^{2}(S^{2}, \mathbb{R}^3) \rightarrow L^{2}(S^{2}, P(\Omega))\,,
\end{equation}
where $P(\Omega)$ is the space of probability distributions over the $\Nc$ classes $\Omega$. The output features transform as scalars under the group action on the input manifold. In the notation of \eqref{eq:2} and \eqref{eq:3}, this means that $\mathcal{X}=\mathcal{Y}=S^{2}$, $V_{1}=\RR^{3}$ and $V_{2}=\RR^{\Nc}$. The symmetry group should act on the output space in the same way as on the input space, i.e.\ $\sigma_{2}=\sigma_{1}$. Since the output class labels do not carry any directional information, we have $\rho_{2}=\id_{\Nc}$.

Viewed as a quotient space $S^2 = \SO(3) / \SO(2)$, the sphere is a homogeneous space where each point can be associated with a particular rotation. One possible parametrization would be in terms of latitude ($\theta$) and longitude ($\phi$), i.e. the latitude specifies the angular distance from the north pole whereas the longitude specifies the angular distance from a meridian.

The corresponding convolution following from \eqref{eq:convlayer} can be formulated as an integral over $S^2$ using this parametrization. In practice there are much more efficient formulations using the spectral structure on the sphere, see Section \ref{sec:spherical} for a treatment of how convolutions can be performed using a Fourier decomposition.

\subsection{Object detection on $\ZZ^2$}
\label{sec:obj_detection}
\begin{figure}
\begin{tikzcd}
  f_{\mathrm{in}}:\arrow{d}{\Phi_1 \;\textbf{first convolution}}& \mathbb{Z}^{2} \arrow{r} & \mathbb{R}^{3} & (f_{\mathrm{in}})_{ij}^{c} \\
  f_{2}:\arrow{d}{\textrm{relu} \;\textbf{activation}}& \mathbb{Z}^{2} \arrow{r} & \mathbb{R}^{\Nc[2]} & (f_{2})^{c}_{ij} \\
  f_{2}':\arrow{d}{\Phi_2 \;\textbf{second convolution}}& \mathbb{Z}^{2} \arrow{r} & \mathbb{R}^{\Nc[2]} & (f_{2}')^{c}_{ij} \\
  f_\mathrm{out}: & \mathbb{Z}^{2} \arrow{r} & \mathbb{R}^2\oplus \mathbb{R}^{3}  & (f_{\mathrm{out}})^{c}_{i j}
\end{tikzcd}
\caption{Simple object detection model. In the first row the name of the first feature map (i.e. input image) is given by $f_{\mathrm{in}}$, it maps the domain $\mathbb{Z}^{2}$ to RGB values. It can be represented by a rank 3 tensor $(f_{\mathrm{in}})^{c}_{ij}$. The first convolution maps $f_{\mathrm{in}}$ to a new feature map $f_{2}$ with $\Nc[2]$ filters giving rise to another rank 3 tensor. The output is a feature map $\mathbb{Z}^{2} \rightarrow \mathbb{R}^2\oplus \mathbb{R}^{3}$, where an element in the co-domain takes the form $(x, y, w, h, c)$. The first part in the co-domain $\mathbb{R}^2$, represents anchor point coordinates and transforms under translations. The second part $\mathbb{R}^3$ represents the dimensions of the bounding box together with a confidence score, both invariant under translations.}
\label{fig:object_detection}
\end{figure}
\begin{figure}
    \centering
    \includegraphics[width=0.8\textwidth]{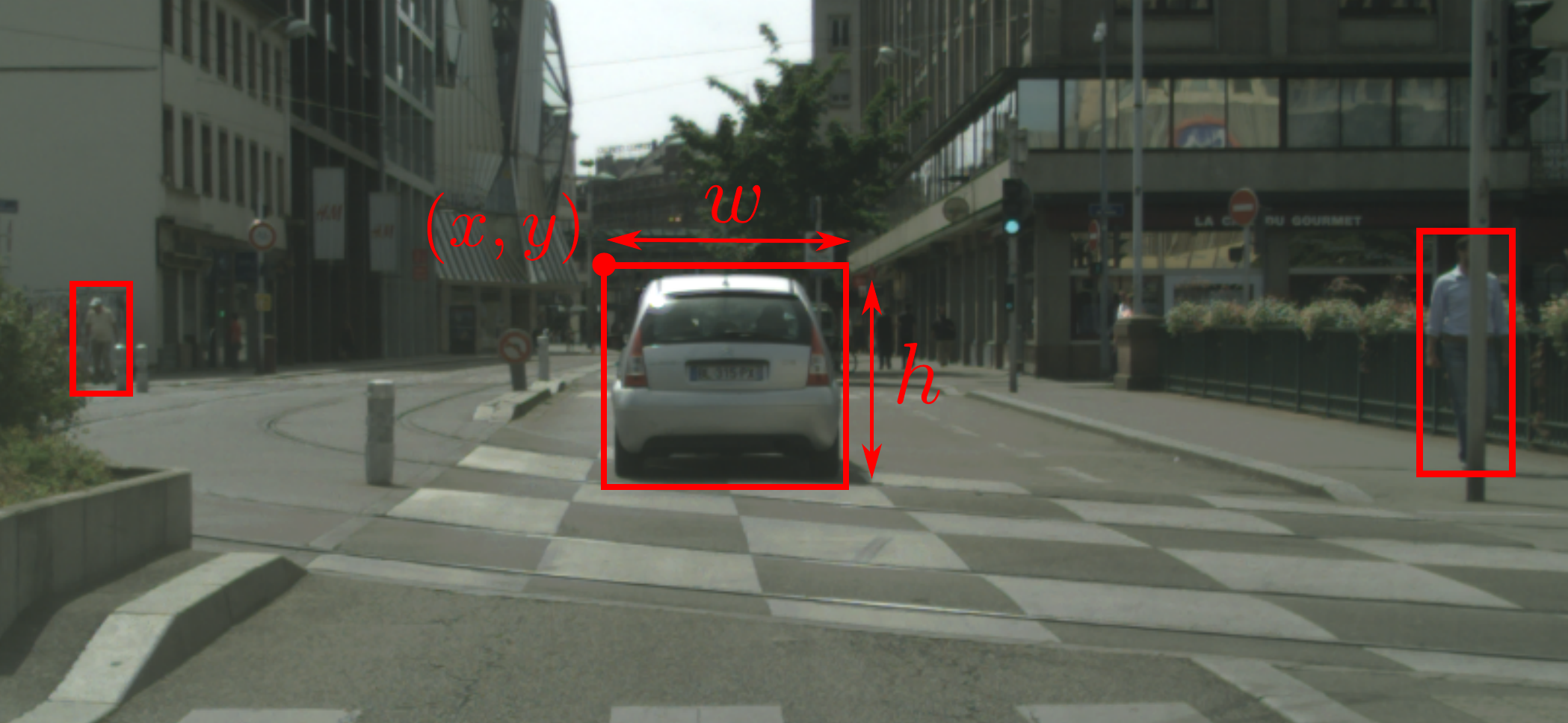}
    \caption{Objects detected by bounding boxes. A bounding box is given by an anchor point $(x,y)$ together with dimensions $(w, h)$. Image from \cite{Cordts2016Cityscapes}.}
    \label{fig:bounding_boxes}
\end{figure}

If we instead stay on $\ZZ^2$ as the input space but let the model output object detections we have a non-trivial example of where the output transforms under the group action and where equivariance of the full model becomes relevant.

Let us consider a
single-stage object detection model that outputs a dense map $\mathbb{Z}^{2} \rightarrow \mathbb{R}^2\oplus\mathbb{R}^{3}$ of candidate detection in the form $\left(x, y, w, h, c\right)$, where $(x, y) \in \mathbb{R}^2$ corresponds to the anchor point of an axis aligned bounding box with dimensions $(w, h)$. The confidence score $c$ (or binary classifier) corresponds to the probability of the bounding box containing an object.\footnote{The dense map of candidate detections is typically filtered based on the confidence score using a method such as non-maximum suppression.} Figure \ref{fig:bounding_boxes} illustrates how the anchor point together with the dimensions form a bounding box used to indicate a detection. The first subspace $\mathbb{R}^2$ in the co-domain, corresponding to anchor point coordinates, is identified as continuous coordinates on the input space $\mathbb{Z}^2$ and transforms under translations.

Starting from the same example architecture as the semantic segmentation model in Figure~\ref{fig:image_classification}, the object detection model in Figure~\ref{fig:object_detection} ends with a feature map transforming in the fundamental representation of the translation group $\mathbb{Z}^2$. Note that since the detection problem is formulated as a regression task, which is usually implemented using models with floating point arithmetic, the output naturally takes values in $\mathbb{R}$ rather than $\mathbb{Z}$.

As introduced in \eqref{eq:2dconvWithTrans}, on $\ZZ^2$, the convolutions are discretized to
\begin{align}
  f_2(x, y) &= \Phi_1(f_\mathrm{in})(x, y) = [\kappa_1 \star f_\mathrm{in}](x, y) =
    \sum_{(x',y')\in \mathbb{Z}^2} L_{(x, y)}\kappa_1 (x', y')  f_\mathrm{in}(x', y')\,,
  \label{eq:conv_semseg}
\end{align}
where we have specified the first layer as an illustration (cf.\ Figure~\ref{fig:object_detection}) and $\kappa_{1}$ is the kernel.
Concretely on $[0, W]\times[0, H]\in\mathbb{Z}^{2}$ with the feature map and kernel represented as rank 3 tensors\footnote{In the sense of multidimensional arrays.} this takes the form
\begin{align}
  (f_2)^{c}_{xy} &= \sum_{(x',y')\in \mathbb{Z}^2}\sum_{c=0}^{2} \kappa^{c}_{(x'-x)(y'-y)} (f_\mathrm{in})^c_{x'y'}\,.
\end{align}

The convolution in \eqref{eq:conv_semseg} is equivariant with respect to translations under $\mathbb{Z}^2$ as shown in \eqref{eq:2dconvequiv}. For the  nonlinearity, we can choose any of the ones discussed in Section~\ref{sec:nonlinearities}, e.g.\ a point-wise relu,
\begin{equation}
  (f_2^\prime)_{i j}^{c} = \textrm{relu}\left((f_2)_{i j}^{c}\right)\,.
\end{equation}

Thus the model is a $G$-equivariant network that respects the $\mathbb{Z}^{2}$ structure of the image plane.
Note that in contrast to the case of semantic segmentation in Section~\ref{sec:sem_seg_s2} the output features here transforms under the group action. If the image is translated the corresponding anchor points for the detections should also be translated. This equivariance is built into the model rather than being learned from data, as would be the case in a standard object detection model.

In the notation of \eqref{eq:2} and \eqref{eq:3} the input and output spaces are $\mathcal{X} = \mathcal{Y} = \mathbb{Z}^2$ with output feature maps taking values in $V_2 = \mathbb{R}^2\oplus\mathbb{R}^3$. The symmetry group $G=\mathbb{Z}^2$ acts by $\sigma_{(x', y')} (x, y) = (x + x', y + y')$ on the input and output space and by $\rho_{(x', y')}(x, y, w, h, c) = (x + x', y + y', w, h, c)$ on $V_2$.

The output of this model in terms of anchor points and corresponding bounding box dimensions is one of many possible ways to formulate the object detection task. For equivariance under translations this representation makes it clear that it is the position of the bounding box that transforms with the translation group.

If we instead are interested in a model that is equivariant with respect to rotations of the image plane it is instructive to regard a model that predicts bounding boxes that are not axis aligned.
Let the output of the new model, as in Section~\ref{sec:intro}, take values in $V_2 = \mathbb{R}^2\oplus\mathbb{R}^2\oplus\mathbb{R}^2$ where an element $(a, v_1, v_2)$ corresponds to a bounding box with one corner at $a$ and spanned by the parallelogram of $v_1$ and $v_2$. All three output vector spaces transform in the fundamental representation of $\SO(2)$ so that $\rho_2 = \mathbf{2}_{\SO(2)}\oplus\mathbf{2}_{\SO(2)}\oplus\mathbf{2}_{\SO(2)}$, cf.~\eqref{eq:so2fundrep}.

\section{Spherical convolutions}
\label{sec:spherical}
\noindent In this section, we will investigate the spherical convolutions introduced in Example~\ref{ex:sphericalCNN} in more detail. Mathematically, this case is particularly interesting, because on the sphere and on $\SO(3)$, we can leverage the power of the rich spectral theory on the sphere in terms of spherical harmonics and Wigner matrices to find explicit and compact expressions for the convolutions. Also for practical considerations, this case is of particular importance since data given on a sphere arises naturally in many applications, e.g.\ for fisheye cameras~\cite{coors2018}, cosmological data~\cite{perraudin2019}, weather data, molecular modeling~\cite{boomsma2017} or diffusion MRI~\cite{elaldi2021}. To faithfully represent this data, equivariant convolutions are essential.

There is a sizable literature on equivariant spherical convolutions. The approach presented here follows \cite{cohen2018b} and extends the results in the reference at some points. An overview of the existing literature in the field can be found in Section~\ref{sec:related-literature}.

\subsection{Preliminaries}
For spherical CNNs, the input data has the form $f:S^{2}\rightarrow \RR^{\Nc}$, i.e. the first layer of the network has $\mathcal{X}=S^{2}$. The networks discussed here are special cases of the GCNNs constructed in Section~\ref{sec:equiv-conv-layers} for which the symmetry group $G$ is the rotation group in three dimensions, $\SO(3)$ and the subgroup $K$ is either $\SO(2)$ or trivial. In this framework, we identify $S^{2}$ with the homogeneous space $G/K=\SO(3)/\SO(2)$. The first layer of the network has trivial $K$ in the output, i.e. \ $\mathcal{Y}=\SO(3)$ and subsequent layers have trivial $K$ also in the input, i.e.\ $\mathcal{X}=\mathcal{Y}=\SO(3)$. The latter case was already discussed in Example~\ref{ex:sphericalCNN}, leading to \eqref{SO3C_conv}.

For the first layer, $\SO(3)$ acts by the matrix-vector product on the input space, $\sigma_{1}(R)x=Rx$ and as usual by group multiplication on the output, $\sigma_{2}(R)Q=RQ$. The construction in \cite{cohen2018b} uses in this case the identity element in $\mathcal{Y}=\SO(3)$ to solve the kernel constraint, leading to
\begin{align}
  (\kappa \star f)(R) = \int_{S^{2}}\rho_{2}(R)\kappa(R^{-1}x)\rho_{1}^{-1}(R)f(x)\,\dd{x}\,,\label{eq:28}
\end{align}
cf.\ \eqref{eq:1}. The integration measure on the sphere is given by
\begin{align}
  \int_{S^{2}}\dd{x(\theta,\varphi)}=\int_{0}^{2\pi}\dd{\varphi}\int_{0}^{\pi}\dd{\theta}\sin\theta\,.
\end{align}
Note that in~\eqref{eq:28} we perform the integral here over the input domain $\mathcal{X}$ and not the symmetry group $G$ since we cannot replace the integral over $\mathcal{X}$ by an integral over $G$ if we use a reference point in $\mathcal{Y}$, as mentioned above.

\begin{remark}[Convolutions with $\mathcal{X}=\mathcal{Y}=S^{2}$]
\label{rk:isotropic}
  At first, it might seem counter-intuitive that the feature maps in a spherical CNN have domain $\SO(3)$ instead of $S^{2}$ after the first layer. Hence, it is instructive to study a convolutional integral with $\mathcal{X}=\mathcal{Y}=S^{2}$ constructed using the techniques of Section~\ref{sec:GC}. The action of $\SO(3)$ on $S^{2}$ is as above by matrix-vector product. We next choose an arbitrary reference point $x_{0}$ on the sphere and denote by $R_{x}$ the rotation matrix which rotates $x_{0}$ into $x$: $R_{x}x_{0}=x$. Following \eqref{GC_conv}, we can then write the convolution as
  \begin{align}
    [\kappa\star f](y)=\int_{\mathcal{X}}\rho_{2}(R_{x})\kappa(R_{x}^{-1}y)\rho_{1}^{-1}(R_{x})f(x)\,\dd{x}\,.\label{eq:7}
  \end{align}
  However, the element $R_{x}$ is not unique: If $Q$ rotates around $x_{0}$, $Qx_{0}=x_{0}$, then $R_{x}Q$ also rotates $x_{0}$ into $x$. In fact, the symmetry group splits into a semi-direct product $\SO(3)=N\rtimes H$ with $H=\SO(2)$ the stabilizer of $x_{0}$ and $N=\SO(3)/H$. This special case was considered in Section~\ref{sec:sem_dir_prod_grps} and we can write \eqref{eq:7} as
  \begin{align}
    [\kappa\star f](y)=\int_{\SO(3)/H}\rho_{2}(R)\kappa(R^{-1}y)\rho_{1}^{-1}(R)f(R x_{0})\,\dd{R}\,.
  \end{align}
  According to \eqref{eq:23}, the kernel $\kappa$ is now not unconstrained anymore but satisfies
  \begin{align}
    \kappa(Qy)=\rho_{2}(Q)\kappa(y)\rho_{1}^{-1}(Q)\,,\label{eq:25}
  \end{align}
  for $Q\in H$. In particular, if the input and output features transform like scalars, $\rho_{1}=\rho_{2}=\id$, \eqref{eq:25} implies that the kernel is invariant under rotations around $x_{0}$, i.e.\ isotropic, as was noticed also in \cite{makadia2007}. In practice, this isotropy decreases the expressibility of the layer considerably.
\end{remark}

\subsection{Spherical convolutions and Fourier transforms}
\label{sec:spherFourier}
The Fourier decomposition on the sphere and on $\SO(3)$ is well studied and can be used to find compact explicit expressions for the integrals defined in the previous section. To simplify some expressions, we will assume in this and the following section that $V_{1,2}$ are vector spaces over $\RR$, so in particular $\rho_{1,2}$ are real representations.

A square-integrable function $f:S^{2}\rightarrow \RR^{c}$ can be decomposed into the spherical harmonics $Y^{\ell}_{m}$ via
\begin{align}
  f(x)=\sum_{\ell=0}^{\infty}\sum_{m=-\ell}^{\ell} \hat{f}_{m}^{\ell}Y^{\ell}_{m}(x)\,,
\end{align}
with Fourier coefficients
\begin{align}
  \hat{f}^{\ell}_{m}=\int_{S^{2}}f(x) \overline{Y^{\ell}_{m}(x)}\ \dd{x}\,,
\end{align}
since the spherical harmonics form a complete orthogonal set,
\begin{align}
  \sum_{\ell=0}^{\infty}\sum_{m=-\ell}^{\ell}\overline{Y^{\ell}_{m}(x)}Y^{\ell}_{m}(y)&=\delta(x-y)\,,\\
  \int_{S^{2}}\overline{Y^{\ell_{1}}_{m_{1}}(x)}Y^{\ell_{2}}_{m_{2}}(x)\dd{x}&=\delta_{\ell_{1}\ell_{2}}\delta_{m_{1}m_{2}}\,.
\end{align}
For later convenience, we also note the following property of spherical harmonics:
\begin{align}
  \overline{Y^{\ell}_{m}(x)}=(-1)^{\ell}Y^{\ell}_{-m}(x)\,.\label{eq:16}
\end{align}
In practice, one truncates the sum over $\ell$ at some finite $L$, the bandwidth, obtaining an approximation of $f$.

Similarly, a square-integrable function $f:\SO(3)\rightarrow \RR^{c}$ can be decomposed into Wigner D-matrices $\mathcal{D}^{\ell}_{mn}(R)$ (for a comprehensive review, see \cite{varshalovich1988}) via
\begin{align}
  f(R)=\sum_{\ell=0}^{\infty}\sum_{m,n=-\ell}^{\ell}\hat{f}^{\ell}_{mn}\mathcal{D}^{\ell}_{mn}(R)\,,\label{eq:26}
\end{align}
with Fourier coefficients
\begin{align}
  \hat{f}^{\ell}_{mn}=\frac{2\ell+1}{8\pi^{2}}\int_{\SO(3)}f(R) \overline{\mathcal{D}^{\ell}_{mn}(R)}\ \dd{R}\,,\label{eq:27}
\end{align}
since the Wigner matrices satisfy the orthogonality and completeness relations
\begin{align}
  \int_{\SO(3)}\overline{\mathcal{D}^{\ell_{1}}_{m_{1}n_{1}}(R)}\mathcal{D}^{\ell_{2}}_{m_{2}n_{2}}(R)\dd{R}&=\frac{8\pi^{2}}{2\ell_{1}+1}\delta_{\ell_{1}\ell_{2}}\delta_{m_{1}m_{2}}\delta_{n_{1}n_{2}}\,\\
  \sum_{\ell=0}^{\infty}\sum_{m,n=-\ell}^{\ell}\overline{\mathcal{D}^{\ell}_{mn}(Q)}\mathcal{D}^{\ell}_{mn}(R)&=\frac{8\pi^{2}}{2\ell+1}\delta(R-Q)\,.
\end{align}
Furthermore, the Wigner D-matrices form a unitary representation of $\SO(3)$ since they satisfy
\begin{align}  
  \mathcal{D}^{\ell}_{mn}(QR) &= \sum_{p=-\ell}^{\ell} \mathcal{D}^{\ell}_{mp}(Q) \mathcal{D}^{\ell}_{pn}(R)\,,\\
  \mathcal{D}^{\ell}_{mn}(R^{-1}) &= (\mathcal{D}^{\ell}_{mn}(R))^{-1}=(\mathcal{D}^{\ell}_{mn}(R))^{\dagger} = \overline{\mathcal{D}^{\ell}_{nm}(R)} \,.
\end{align}
Note furthermore that
\begin{align}
  \overline{\mathcal{D}^{\ell}_{mn}(R)}=(-1)^{n-m}\mathcal{D}^{\ell}_{-m,-n}(R)\,.\label{eq:13}
\end{align}
The regular representation of $\SO(3)$ on spherical harmonics of order $\ell$ is given by the corresponding Wigner matrices:
\begin{align}
  Y^{\ell}_{m}(Rx)=\sum_{n=-\ell}^{\ell}\overline{\mathcal{D}^{\ell}_{mn}(R)}Y^{\ell}_{n}(x)\,.\label{eq:15}
\end{align}
A product of two Wigner matrices is given in terms of the Clebsch--Gordan coefficients $C^{JM}_{\ell_{1}m_{1};\ell_{2}m_{2}}$ by
\begin{align}
 \mathcal{D}^{\ell_{1}}_{m_{1}n_{1}}(R)\mathcal{D}^{\ell_{2}}_{m_{2}n_{2}}(R)=\sum_{J=|\ell_{1}-\ell_{2}|}^{\ell_{1}+\ell_{2}}\sum_{M,N=-J}^{J}C^{JM}_{\ell_{1}m_{1};\ell_{2}m_{2}}C^{JN}_{\ell_{1}n_{1};\ell_{2}n_{2}}\mathcal{D}^{J}_{MN}(R)\,.\label{eq:14}
\end{align}

We will now use these decompositions to write the convolutions \eqref{eq:28} and \eqref{SO3C_conv} in the Fourier domain. To this end, we use Greek letters to index vectors in the spaces $V_{1}$ and $V_{2}$ in which the feature maps take values.
\begin{proposition}
  The Fourier transform of the spherical convolution \eqref{eq:28} with $\mathcal{X}=S^{2}$ and $\mathcal{Y}=\SO(3)$ is given by
\begin{align}
  [\widehat{(\kappa \star f)_{\mu}}]^{\ell}_{mn} &= \sum_{\nu=1}^{\dim V_{2}}\sum_{\sigma,\tau=1}^{\dim V_{1}}\sum_{\substack{\ell_{i}=0\\i=1,2,3}}^{\infty}\sum_{\substack{m_{i},n_{i}=-\ell_{i}\\i=1,2,3}}^{\ell_{i}}\sum_{J=|\ell_{2}-\ell_{1}|}^{\ell_{2}+\ell_{1}}\sum_{M,N=-J}^{J}C^{JM}_{\ell_{2}m_{2};\ell_{1}n_{1}}C^{JN}_{\ell_{2}n_{2};\ell_{1}m_{1}}\nonumber\\
  &\qquad C^{\ell m}_{JM;\ell_{3}m_{3}}C^{\ell n}_{JN;\ell_{3}n_{3}}\widehat{(\rho_{2,\mu\nu})}^{\ell_{2}}_{m_{2}n_{2}}\widehat{(\kappa_{\nu\sigma})}^{\ell_{1}}_{m_{1}}\overline{\widehat{\rho_{1,\sigma\tau}}^{\ell_{3}}_{n_{3}m_{3}}}\overline{\widehat{(f_{\tau})}^{\ell_{1}}_{n_{1}}}\,.\label{eq:9}
\end{align}
\end{proposition}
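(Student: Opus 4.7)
\bigskip

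\noindent \textbf{Proof proposal.} The plan is to expand every factor in \eqref{eq:28} in its natural Fourier basis, carry out the spherical integral with the orthogonality relation for $Y^\ell_m$, and then collapse the resulting product of three Wigner matrices on $\SO(3)$ into a single one using two applications of the Clebsch--Gordan formula \eqref{eq:14}; reading off the Fourier coefficient via \eqref{eq:27} then delivers \eqref{eq:9}.

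First I would write \eqref{eq:28} component-wise,
\[
(\kappa\star f)_\mu(R)=\sum_{\nu,\sigma,\tau}\rho_{2,\mu\nu}(R)\,\bigl(\rho_{1}^{-1}\bigr)_{\sigma\tau}(R)\int_{S^{2}}\kappa_{\nu\sigma}(R^{-1}x)\,f_{\tau}(x)\,\dd x\,,
\]
and move the rotation $R^{-1}$ off the kernel argument by expanding $\kappa_{\nu\sigma}$ in spherical harmonics and applying \eqref{eq:15} together with the unitarity relation $\overline{\mathcal{D}^{\ell_1}_{m_1 n_1}(R^{-1})}=\mathcal{D}^{\ell_1}_{n_1 m_1}(R)$, which produces $\kappa_{\nu\sigma}(R^{-1}x)=\sum_{\ell_1,m_1,n_1}\widehat{(\kappa_{\nu\sigma})}^{\ell_1}_{m_1}\mathcal{D}^{\ell_1}_{n_1 m_1}(R)\,Y^{\ell_1}_{n_1}(x)$. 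Since we are assuming $V_{1,2}$ are real vector spaces, $f_\tau$ is real-valued and the remaining $x$-integral reduces, by \eqref{eq:16} and the orthonormality of the $Y^\ell_m$, to $\int_{S^2}Y^{\ell_1}_{n_1}(x)f_\tau(x)\dd x=\overline{\widehat{(f_\tau)}^{\ell_1}_{n_1}}$.

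Next I would treat the two $\SO(3)$-valued factors. Expanding $\rho_{2,\mu\nu}$ directly via \eqref{eq:26} gives the factor $\widehat{(\rho_{2,\mu\nu})}^{\ell_2}_{m_2 n_2}\mathcal{D}^{\ell_2}_{m_2 n_2}(R)$. For $\rho_1^{-1}(R)=\rho_{1}(R^{-1})$, I would first expand $\rho_{1,\sigma\tau}$ at $R^{-1}$, then use $\mathcal{D}^{\ell_3}_{m_3 n_3}(R^{-1})=\overline{\mathcal{D}^{\ell_3}_{n_3 m_3}(R)}$, relabel $m_3\leftrightarrow n_3$, and finally exploit the reality of $\rho_1$ to take a global complex conjugate, yielding
\[
(\rho_{1}^{-1})_{\sigma\tau}(R)=\sum_{\ell_3,m_3,n_3}\overline{\widehat{(\rho_{1,\sigma\tau})}^{\ell_3}_{n_3 m_3}}\,\mathcal{D}^{\ell_3}_{m_3 n_3}(R)\,,
\]
matching the combination appearing in \eqref{eq:9}. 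At this stage the whole integrand is a sum of Fourier coefficients multiplied by the triple product $\mathcal{D}^{\ell_2}_{m_2 n_2}(R)\,\mathcal{D}^{\ell_1}_{n_1 m_1}(R)\,\mathcal{D}^{\ell_3}_{m_3 n_3}(R)$.

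To finish, I would apply the Clebsch--Gordan decomposition \eqref{eq:14} to the first two matrices, producing an intermediate $\mathcal{D}^{J}_{MN}$ with coefficients $C^{JM}_{\ell_2 m_2;\ell_1 n_1}C^{JN}_{\ell_2 n_2;\ell_1 m_1}$, and then apply \eqref{eq:14} again to couple $\mathcal{D}^J_{MN}$ with $\mathcal{D}^{\ell_3}_{m_3 n_3}$ to a single $\mathcal{D}^{\ell}_{mn}$ with coefficients $C^{\ell m}_{JM;\ell_3 m_3}C^{\ell n}_{JN;\ell_3 n_3}$. Substituting this into \eqref{eq:27} and invoking the orthogonality of the Wigner matrices collapses the $R$-integral to a product of Kronecker deltas, so the Fourier coefficient of $(\kappa\star f)_\mu$ is precisely the bracketed expression, reproducing \eqref{eq:9}.

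The routine algebra is the double Clebsch--Gordan reduction; the main obstacle is purely bookkeeping, namely keeping the six pairs of magnetic indices $(m_i,n_i)$ aligned throughout, and in particular getting the index pattern of the $\rho_1^{-1}$ coefficient right: the combination of unitarity, the identification $\mathcal{D}^\ell_{mn}(R^{-1})=\overline{\mathcal{D}^\ell_{nm}(R)}$, and the reality assumption must be applied in the correct order to produce the conjugated Fourier coefficient $\overline{\widehat{(\rho_{1,\sigma\tau})}^{\ell_3}_{n_3 m_3}}$ with swapped indices that appears in the final formula. Once this is done carefully, the orthogonality of the Wigner matrices performs the final projection with no further subtleties.
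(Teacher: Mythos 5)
Your derivation is correct and complete: the expansion of $\kappa$ in spherical harmonics with \eqref{eq:15}, the use of reality of $f$ and $\rho_1$ to produce the conjugated coefficients $\overline{\widehat{(f_\tau)}^{\ell_1}_{n_1}}$ and $\overline{\widehat{\rho_{1,\sigma\tau}}^{\ell_3}_{n_3m_3}}$, and the double Clebsch--Gordan reduction of $\mathcal{D}^{\ell_2}_{m_2n_2}\mathcal{D}^{\ell_1}_{n_1m_1}\mathcal{D}^{\ell_3}_{m_3n_3}$ reproduce every index pattern in \eqref{eq:9} exactly. The paper states this proposition without proof, and your argument is precisely the intended computation.
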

For $\rho_{1}=\rho_{2}=\mathrm{id}$, \eqref{eq:9} simplifies to
\begin{align}
  [\widehat{(\kappa \star f)}_{\mu}]^{\ell}_{mn} = \sum_{\nu=1}^{\dim V_{1}} (\widehat\kappa_{\mu\nu})^{\ell}_{n}\overline{(\widehat{f_{\nu}})^{\ell}_{m}} \,.\label{eq:12}
\end{align}
A similar calculation can be performed for input features in $\SO(3)$ as detailed by the following proposition.
\begin{proposition}
The Fourier transform of the spherical convolution \eqref{SO3C_conv} with $\mathcal{X}=\mathcal{Y}=\SO(3)$ can be written in the Fourier domain as
\begin{align}
  [\widehat{(\kappa \star f)}]^{\ell}_{mn} &=\frac{8\pi^{2}}{2\ell+1}\sum_{p=-\ell}^{\ell}\sum_{\substack{\ell_{i}=0\\i=1,2,3}}^{\infty}\sum_{\substack{m_{i},n_{i}=-\ell_{i}\\i=1,2,3}}^{\ell_{i}}\sum_{J=|\ell_{1}-\ell_{2}|}^{\ell_{1}+\ell_{2}}\sum_{M,N=-J}^{J}C^{JM}_{\ell_{1}m_{1};\ell_{2}m_{2}}C^{JN}_{\ell_{1}n_{1};\ell_{2}n_{2}}\nonumber\\
  &\qquad C^{\ell m}_{JM;\ell_{3}m_{3}}C^{\ell p}_{JN;\ell_{3}n_{3}}\widehat{\rho_{2}}^{\ell_{1}}_{m_{1}n_{1}}\cdot\widehat{\kappa}^{\ell}_{pn}\cdot\overline{\widehat{\rho_{1}}^{\ell_{2}}_{n_{2}m_{2}}}\cdot\widehat{f}^{\ell_{3}}_{m_{3}n_{3}}\,.\label{eq:6}
\end{align}
Here, the dot $\cdot$ denotes matrix multiplication in $V_{1,2}$, as spelled out in \eqref{eq:9}.
\end{proposition}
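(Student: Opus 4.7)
The plan is to substitute the Fourier expansions of all four factors in the integrand of \eqref{SO3C_conv} into the definition of the convolution, reduce the resulting product of four Wigner D-matrices in $R$ to a single one via two applications of the Clebsch--Gordan expansion \eqref{eq:14}, and then use Wigner orthogonality to evaluate both the integral in $R$ and, after applying \eqref{eq:27}, the integral in $S$.

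Concretely, I would first expand $\rho_2(R)$, $\kappa(\cdot)$, $\rho_1^{-1}(R)$, and $f(R)$ in Wigner series using \eqref{eq:26}. For the inverse factor, writing $\rho_1^{-1}(R)=\rho_1(R^{-1})$ and combining the unitarity identity $\mathcal{D}^{\ell}_{ab}(R^{-1})=\overline{\mathcal{D}^{\ell}_{ba}(R)}$ with the reality assumption on $\rho_1$ stated at the start of Section~\ref{sec:spherFourier} recasts the series so that the coefficient $\overline{\widehat{\rho_1}^{\ell_2}_{n_2 m_2}}$ (with transposed indices) multiplies $\mathcal{D}^{\ell_2}_{m_2 n_2}(R)$, exactly as in the statement. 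For the kernel argument $R^{-1}S$, the homomorphism and unitarity properties of $\mathcal{D}^{\ell_0}$ give the splitting $\mathcal{D}^{\ell_0}_{m_0 n_0}(R^{-1}S)=\sum_p \overline{\mathcal{D}^{\ell_0}_{p m_0}(R)}\,\mathcal{D}^{\ell_0}_{p n_0}(S)$, which separates the $R$- and $S$-dependence.

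After substitution, the $R$-integrand carries the product $\mathcal{D}^{\ell_1}_{m_1 n_1}(R)\,\mathcal{D}^{\ell_2}_{m_2 n_2}(R)\,\mathcal{D}^{\ell_3}_{m_3 n_3}(R)\,\overline{\mathcal{D}^{\ell_0}_{p m_0}(R)}$. I would apply \eqref{eq:14} first to the pair $\mathcal{D}^{\ell_1}\cdot \mathcal{D}^{\ell_2}$, introducing an intermediate representation $J$ with indices $M,N$ and coefficients $C^{JM}_{\ell_1 m_1;\ell_2 m_2}C^{JN}_{\ell_1 n_1;\ell_2 n_2}$, and then to the resulting $\mathcal{D}^J\cdot \mathcal{D}^{\ell_3}$, producing coefficients $C^{LP}_{JM;\ell_3 m_3}C^{LQ}_{JN;\ell_3 n_3}$. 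Integrating against $\overline{\mathcal{D}^{\ell_0}_{p m_0}(R)}$ via Wigner orthogonality forces $L=\ell_0$, $P=p$, $Q=m_0$ and contributes a factor $8\pi^2/(2\ell_0+1)$. Finally, applying \eqref{eq:27} to the remaining $S$-dependent factor $\mathcal{D}^{\ell_0}_{p n_0}(S)$ produces a second orthogonality contraction that pins $\ell_0=\ell$, $p=m$, $n_0=n$ and multiplies by $(2\ell+1)/(8\pi^2)\cdot 8\pi^2/(2\ell+1)$, so the overall prefactor collapses to $8\pi^2/(2\ell+1)$ as claimed.

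The main obstacle is purely notational bookkeeping. In particular, the symbol $p$ in the proposition is \emph{not} the splitting index from $\mathcal{D}^{\ell_0}(R^{-1}S)$, which gets eliminated by the $S$-orthogonality into $p=m$; rather, it is the surviving free kernel index previously denoted $m_0$. Renaming $m_0\mapsto p$ at the end gives exactly $\widehat{\kappa}^{\ell}_{pn}$ together with the Clebsch--Gordan combination $C^{\ell m}_{JM;\ell_3 m_3}C^{\ell p}_{JN;\ell_3 n_3}$ appearing in the statement. A secondary subtlety, already mentioned, is that extracting the coefficient $\overline{\widehat{\rho_1}^{\ell_2}_{n_2 m_2}}$ against $\mathcal{D}^{\ell_2}_{m_2 n_2}(R)$ uses the reality assumption on $\rho_1$; without it one would obtain $\widehat{\rho_1}^{\ell_2}_{n_2 m_2}$ against $\overline{\mathcal{D}^{\ell_2}_{m_2 n_2}(R)}$ and an extra conjugation would have to be propagated through the calculation.
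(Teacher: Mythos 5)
Your proposal is correct and follows exactly the route the paper intends (the paper itself only says ``a similar calculation can be performed'' and omits the details): expand $\rho_2$, $\kappa$, $\rho_1^{-1}$ and $f$ in Wigner matrices, split $\mathcal{D}^{\ell_0}(R^{-1}S)$ via the homomorphism and unitarity properties, reduce the triple product $\mathcal{D}^{\ell_1}\mathcal{D}^{\ell_2}\mathcal{D}^{\ell_3}$ with two applications of \eqref{eq:14}, and contract twice with the orthogonality relations to fix $\ell_0=\ell$ and produce the prefactor $8\pi^2/(2\ell+1)$. Your handling of the two subtleties --- the reality assumption needed to turn $\widehat{\rho_1}^{\ell_2}_{n_2m_2}\overline{\mathcal{D}^{\ell_2}_{m_2n_2}(R)}$ into $\overline{\widehat{\rho_1}^{\ell_2}_{n_2m_2}}\,\mathcal{D}^{\ell_2}_{m_2n_2}(R)$, and the identification of the surviving kernel index $m_0$ with the $p$ of \eqref{eq:6} --- is also correct.
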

For $\rho_{1}=\rho_{2}=\id$ \eqref{eq:6} becomes
\begin{align}
  [\widehat{(\kappa \star f)}]^{\ell}_{mn} &=\frac{8\pi^{2}}{2\ell+1}\sum_{p=-\ell}^{\ell}\widehat{\kappa}^{\ell}_{pn}\cdot\widehat{f}^{\ell}_{mp}\,.
\end{align}
Note that in all these expressions, the Fourier transform is done component wise with respect to the indices in $V_{1,2}$.

\subsection{Decomposition into irreducible representations}
\label{sec:decomp-into-irreps}
An immediate simplification of \eqref{eq:9} and \eqref{eq:6} can be achieved by decomposing $\rho_{1,2}$ into irreps of $\SO(3)$ which are given by the Wigner matrices $\mathcal{D}^{L}$,
\begin{align}
  \rho(R) = \bigoplus_{\lambda}\bigoplus_{\mu}\mathcal{D}^{\lambda}(R)\,,\label{eq:34}
\end{align}
where $\mu$ counts the multiplicity of $\mathcal{D}^{\lambda}$ in $\rho$. Correspondingly, the spaces $V_{1,2}$ are decomposed according to $V_{1,2}=\bigoplus_{\lambda}\bigoplus_{\mu} V_{1,2}^{\lambda\mu}$, where $V_{1,2}^{\lambda\mu}=\RR^{2\lambda+1}$. The feature maps then carry indices $f^{\lambda\mu}_{\nu}$ with $\lambda=0,\dots,\infty$, $\mu=1,\dots,\infty$, $\nu=-\lambda,\dots,\lambda$ with only finitely many non-zero components. Using this decomposition of $V_{1,2}$ the convolution \eqref{eq:28} is given by
\begin{align}
  (\kappa\star f)^{\lambda\mu}_{\nu}(R) = \sum_{\rho=-\lambda}^{\lambda}\sum_{\theta=0}^{\infty}\sum_{\sigma=1}^{\infty}\sum_{\tau,\pi=-\theta}^{\theta}\int_{S^{2}}\mathcal{D}^{\lambda}_{\nu\rho}(R)\kappa^{\lambda\mu;\theta\sigma}_{\rho;\tau}(R^{-1}x)\mathcal{D}^{\theta}_{\tau\pi}(R^{-1})f^{\theta\sigma}_{\pi}(x)\dd{x}\,.
\end{align}
By plugging these expressions for $\rho_{1,2}$, $\kappa$ and $f$ into \eqref{eq:9}, we obtain the following proposition.
\begin{proposition}\label{prop:decomp_s2_conv}
  The decomposition of \eqref{eq:9} into representation spaces of irreducible representations of $\SO(3)$ is given by
\begin{align}
  [\widehat{(\kappa \star f)^{\lambda\mu}_{\nu}}]^{\ell}_{mn} &=\sum_{\rho=-\lambda}^{\lambda}\sum_{\theta=0}^{\infty}\sum_{\sigma=1}^{\infty}\sum_{\tau,\pi=-\theta}^{\theta} \sum_{j=0}^{\infty}\sum_{q,r=-j}^{\ell_{i}}\sum_{J=|\ell_{2}-j|}^{\ell_{2}+j}\sum_{M,N=-J}^{J}C^{JM}_{\lambda\nu;jr}C^{JN}_{\lambda\rho;jq}\nonumber\\
  &\qquad C^{\ell m}_{JM;\theta\tau}C^{\ell n}_{JN;\theta\pi}(\widehat{\kappa^{\lambda\mu;\theta\sigma}_{\rho;\tau}})^{j}_{q} \overline{(\widehat{f^{\theta\sigma}_{\pi}})^{j}_{r}}\,.
\end{align}
Similarly, \eqref{eq:6} decomposes according to
\begin{align}
  (\widehat{(\kappa\star f)^{\lambda\mu}_{\nu}})^{\ell}_{mn}&=\frac{8\pi^{2}}{2\ell+1}\sum_{\rho=-\lambda}^{\lambda}\sum_{\theta=0}^{\infty}\sum_{\sigma=1}^{\infty}\sum_{\tau,\pi=-\theta}^{\theta}\sum_{p=-\ell}^{\ell}\sum_{j=0}^{\infty}\sum_{q,r=-j}^{j}\sum_{J=|\lambda-\sigma|}^{\lambda+\sigma}\sum_{M,N=-J}^{J}C^{JM}_{\lambda\nu;\theta\tau}C^{JN}_{\lambda\rho;\theta\pi}\nonumber\\
  &\qquad C^{\ell m}_{JM;jq}C^{\ell p}_{JN;jr}(\widehat{\kappa^{\lambda\mu;\theta\sigma}_{\rho;\tau}})^{\ell}_{pn}(\widehat{f^{\theta\sigma}_{\pi}})^{j}_{qr}\,.\label{eq:29}
\end{align}
\end{proposition}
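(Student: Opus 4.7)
The plan is to obtain Proposition \ref{prop:decomp_s2_conv} by substituting the explicit block-diagonal form of $\rho_{1}$ and $\rho_{2}$ from \eqref{eq:34} into the already-established Fourier-space formulas \eqref{eq:9} and \eqref{eq:6}, and then using orthogonality of the Wigner matrices to collapse the unnecessary sums. Since the Fourier-domain formulas \eqref{eq:9} and \eqref{eq:6} are given, the decomposition result is essentially a book-keeping exercise in representation theory; the difficulty is purely notational.

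First I would rewrite the vector indices carried by the feature maps and kernels. Writing the direct-sum bases $V_{1,2}=\bigoplus_{\lambda,\mu}V^{\lambda\mu}_{1,2}$, each scalar index $\mu$ on $V_{2}$ in \eqref{eq:9} is replaced by a triple $(\lambda,\mu,\nu)$ (irrep label, multiplicity, internal index $\nu=-\lambda,\dots,\lambda$), and likewise for the sum index $\nu$ on $V_{2}$ and the indices $\sigma,\tau$ on $V_{1}$. After this relabelling, the matrix entries of $\rho_{2}$ read
\begin{equation}
\rho_{2,(\lambda\mu\nu)(\lambda'\mu'\rho)}(R)=\delta_{\lambda\lambda'}\delta_{\mu\mu'}\mathcal{D}^{\lambda}_{\nu\rho}(R),
\end{equation}
and analogously for $\rho_{1}$. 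Taking Fourier transforms of both sides and using the orthogonality relation for Wigner matrices yields the crucial identity
\begin{equation}
\widehat{\rho_{2,(\lambda\mu\nu)(\lambda'\mu'\rho)}}{}^{\ell_{2}}_{m_{2}n_{2}}=\delta_{\lambda\lambda'}\delta_{\mu\mu'}\,\delta_{\ell_{2}\lambda}\,\delta_{m_{2}\nu}\,\delta_{n_{2}\rho},
\end{equation}
together with its analogue for $\overline{\widehat{\rho_{1}}}$ (involving an extra sign from \eqref{eq:13}, which however cancels against the conjugation of the Clebsch--Gordan coefficients).

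Next I would insert these delta functions into \eqref{eq:9}. The deltas collapse the sums over $\ell_{2}$, $m_{2}$, $n_{2}$, $\ell_{3}$, $m_{3}$, $n_{3}$ together with the multiplicity labels, reducing the remaining external index on $V_{2}$ on the left-hand side to $(\lambda,\mu,\nu)$ and producing the surviving sums $\sum_{\rho=-\lambda}^{\lambda}\sum_{\theta,\sigma}\sum_{\tau,\pi=-\theta}^{\theta}$ inherited from the block structure. Renaming $(\ell_{1},m_{1},n_{1})\to(j,q,r)$ to match the notation in the statement, the four Clebsch--Gordan coefficients of \eqref{eq:9} become exactly $C^{JM}_{\lambda\nu;jr}C^{JN}_{\lambda\rho;jq}C^{\ell m}_{JM;\theta\tau}C^{\ell n}_{JN;\theta\pi}$, and the range of $J$ becomes $|\lambda-j|\leq J\leq \lambda+j$. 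This yields the first formula. The second formula is derived from \eqref{eq:6} by the same substitution and collapse, the only difference being that $\widehat{f}$ still carries two internal Fourier indices $m_{3},n_{3}$ rather than one, which accounts for the extra sum over $p$ in \eqref{eq:29}.

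I expect the sole obstacle to be pedantic index tracking: the Clebsch--Gordan coefficients on the right-hand side of the statement have a specific pairing (output irrep $\lambda$ with Fourier indices of the kernel, input irrep $\theta$ with Fourier indices of the feature map), and one must verify that the deltas from $\widehat{\rho_{2}}$ and $\overline{\widehat{\rho_{1}}}$ force precisely that pairing. A small consistency check is that for $\rho_{1}=\rho_{2}=\id$ only $\lambda=\theta=0$ survives with $\nu=\rho=\tau=\pi=0$, and the Clebsch--Gordan identities $C^{JM}_{00;jr}=\delta_{Jj}\delta_{Mr}$ reduce the first formula to \eqref{eq:12}, confirming the decomposition. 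No estimates or convergence arguments are required, as everything is algebraic at the level of Fourier coefficients.
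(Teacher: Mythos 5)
Your plan is correct and follows essentially the same route as the paper: substitute the block-diagonal irrep decomposition \eqref{eq:34} of $\rho_{1,2}$ into the Fourier-domain formulas \eqref{eq:9} and \eqref{eq:6}, observe that the Fourier transforms of the representation matrices reduce to Kronecker deltas by Wigner orthogonality, and collapse the resulting sums (the paper states exactly this, noting that the transforms of $\rho_{1,2}$ become ``trivial'' in this basis). Your identification of the intended $J$-range as $|\lambda-j|\le J\le\lambda+j$ and the consistency check against \eqref{eq:12} are both sound.
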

In these expressions, the Fourier transform of the convolution is given entirely in terms of the Fourier transforms of the kernel and the input feature map, as well as Clebsch--Gordan coefficients. In particular, Fourier transforms of the representation matrices $\rho_{1,2}$ are trivial in this decomposition of the spaces $V_{1}$ and $V_{2}$.

\subsection{Output features in $\SE(3)$}\label{ex:SE3Outputs}
  As an example of a possible application of the techniques presented in this section, consider the problem of 3D object detection in pictures taken by fisheye cameras. These cameras have a spherical image plane and therefore, the components of the input feature map $f$ transform as scalars under the regular representation (cf.~\eqref{eq:2}):
\begin{align}
  [\pi_{1}(R)f](x)=f(R^{-1}x)\,.
\end{align}
Since this is the transformation property considered in the context of spherical CNNs, the entire network can be built using the layers discussed in this section.
  
  As detailed in Section~\ref{sec:obj_detection}, for object detection, we want to identify the class, physical size, position and orientation for each object in the image. This can be realized by associating to each pixel in the output picture (which is often of lower resolution than the input) a class probability vector $p\in P(\Omega)$, a size vector $s\in\RR^{3}$ containing height, width and depth of the object, a position vector $x\in\RR^{3}$ and an orientation given by a matrix $Q\in\SO(3)$. Here, $P(\Omega)$ is the space of probability distributions in $\Nc$ classes as in \eqref{eq:35}. In the framework outlined above, this means that the output feature map takes values in $P(\Omega)\oplus\RR^{3}\oplus\RR^{3}\oplus\SO(3)$.

  If the fisheye camera rotates by $R\in\SO(3)$, the output features have to transform accordingly. In particular, since the classification and the size of the object do not depend on the rotation, $p$ transforms as $\Nc$ scalars and $s$ transforms as three scalars:
\begin{align}
  \rho_{2}(R) p = p, \qquad \rho_{2}(R) s = s\,,\label{eq:30}
\end{align}
where we used the notation introduced in \eqref{eq:3}. The position vector $x$ on the other hand transforms in the fundamental representation of $\SO(3)$
\begin{align}
  \rho_{2}(R)x = R\cdot x\,.\label{eq:31}
\end{align}
Finally, the rotation matrix $Q$ transforms by a similarity transformation
\begin{align}
  \rho_{2}(R)Q = R\cdot Q \cdot R^{T}\,.\label{eq:32}
\end{align}

As described above for the general case, the transformation property \eqref{eq:30}--\eqref{eq:32} of the output feature map can be decomposed into a direct sum of irreducible representations, labeled by an integer $\ell$. The scalar transformations of \eqref{eq:30} are $\Nc+3$ copies of the $\ell=0$ representation and the fundamental representation in \eqref{eq:31} is the $\ell=1$ representation. To decompose the similarity transformation in \eqref{eq:32}, we use the following
\begin{proposition}
Let $A$, $B$, $C$ be arbitrary matrices and $\vrize(M)$ denote the concatenation of the rows of the matrix $M$. Then,
  \begin{align}
  \vrize(A\cdot B\cdot C)=(A\otimes C^{T})\cdot\vrize(B)\,.\label{eq:33}
\end{align}
\end{proposition}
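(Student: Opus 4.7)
The plan is to reduce the stated identity to the standard column-stacking vectorization identity $\mathrm{vec}(XYZ) = (Z^{T} \otimes X)\,\mathrm{vec}(Y)$, which is a textbook fact. The essential observation is that the row-concatenation $\vrize$ used in the paper is related to the usual column-concatenation $\mathrm{vec}$ by transposition: for any matrix $M$, we have $\vrize(M) = \mathrm{vec}(M^{T})$, since writing out the rows of $M$ as one long column is the same as writing out the columns of $M^{T}$.

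First, I would state and briefly justify the classical column-vec identity $\mathrm{vec}(XYZ) = (Z^{T} \otimes X)\,\mathrm{vec}(Y)$. This can be verified directly: the $(i,j)$ entry of $XYZ$ is $\sum_{k,l} X_{ik} Y_{kl} Z_{lj}$, and expanding both sides in components shows agreement, using the defining property $(P \otimes Q)_{(ij),(kl)} = P_{ik} Q_{jl}$ of the Kronecker product.

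Next I would apply this identity to the transpose $(ABC)^{T} = C^{T} B^{T} A^{T}$. By the observation above,
\begin{equation*}
\vrize(A B C) \;=\; \mathrm{vec}\bigl((ABC)^{T}\bigr) \;=\; \mathrm{vec}(C^{T} B^{T} A^{T}) \;=\; \bigl((A^{T})^{T} \otimes C^{T}\bigr)\,\mathrm{vec}(B^{T}) \;=\; (A \otimes C^{T})\,\vrize(B),
\end{equation*}
which is the claimed formula. The only nontrivial bookkeeping is matching the transposes correctly inside the Kronecker product, but there is no real obstacle — the entire proof is a one-line manipulation once the relation between $\vrize$ and $\mathrm{vec}$ is made explicit. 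The main pitfall to watch for is consistency of conventions (row- versus column-stacking), since the order of factors in the Kronecker product flips depending on the convention.
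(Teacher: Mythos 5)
Your proof is correct. The paper itself states this proposition without any proof, treating it as a known linear-algebra fact, so there is no argument of the authors' to compare against. Your reduction is the standard one: using that the row-concatenation $\vrize(M)$ equals the column-stacking vectorization of $M^{T}$, you apply the classical identity $\mathrm{vec}(XYZ)=(Z^{T}\otimes X)\,\mathrm{vec}(Y)$ to $(ABC)^{T}=C^{T}B^{T}A^{T}$, and the transpose bookkeeping indeed lands on $(A\otimes C^{T})\,\vrize(B)$. An equally short alternative, which avoids invoking the column convention altogether, is the direct component check: with $(A\otimes C^{T})_{(ij),(kl)}=A_{ik}C_{lj}$ one gets
\begin{equation*}
\bigl[(A\otimes C^{T})\,\vrize(B)\bigr]_{(ij)}=\sum_{k,l}A_{ik}C_{lj}B_{kl}=(ABC)_{ij}\,,
\end{equation*}
which is the $(ij)$ entry of $\vrize(ABC)$. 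Either way the statement is established; your version is fine as written, provided you keep the two stacking conventions notationally distinct so the reader does not conflate the row-based $\vrize$ of the paper with the column-based $\mathrm{vec}$ of the textbook identity.
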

With this, we obtain for \eqref{eq:32}
\begin{align}
  \rho_{2}(R)\vrize(Q) = (R\otimes R)\cdot \vrize(Q)\,,
\end{align}
i.e.\ $Q$ transforms with the tensor product of two fundamental representations. According to \eqref{eq:14}, this tensor product decomposes into a direct sum of one $\ell=0$, one $\ell=1$ and one $\ell=2$ representation. In total, the final layer of the network will therefore have (in the notation introduced below \eqref{eq:34}) $\lambda=0,1,2$, $\nu=-\lambda,\dots,\lambda$ and $\mu=1,\dots, \Nc+4$ for $\lambda=0$, $\mu=1,2$ for $\lambda=1$ and $\mu=1$ for $\lambda=2$.

Note that the transformation properties \eqref{eq:30}--\eqref{eq:32} of the output feature map are independent of the transformation properties of the input feature map. We have restricted the discussion here to fisheye cameras since, as stated above, these can be realized by the spherical convolutions considered in this section. For pinhole cameras on the other hand, the representation $\pi_{1}$ acting on the input features will be a complicated non-linear transformation arising from the projection of the transformation in 3D space onto the flat image plane. Working out the details of this representation is an interesting direction for further research.

\subsection{$\SE(3)$ equivariant networks}
The same decomposition of $V_{1,2}$ into representation spaces of irreducible representations of $\SO(3)$, which was used in Section~\ref{sec:decomp-into-irreps} can also be used to solve the kernel constraint \eqref{eq:10} for equivariant networks of $\SE(3)$, as discussed in \cite{weiler2018}. In this section, we review this construction.

In the decomposition into irreps, the kernel constraint \eqref{eq:10} reads
\begin{align}
  \kappa^{\lambda\mu;\theta\sigma}_{\rho;\tau}(Ry)=\sum_{\nu=-\lambda}^{\lambda}\sum_{\pi=-\theta}^{\theta}\mathcal{D}^{\lambda}_{\rho\nu}(R)\,\kappa^{\lambda\mu;\theta\sigma}_{\nu;\pi}(y)\,\mathcal{D}^{\theta}_{\pi\tau}(R^{-1})\,.\label{eq:11}
\end{align}
In the following, we will use $\cdot$ to denote matrix multiplication in $\rho,\nu,\pi,\tau$ and drop the multiplicity indices $\mu,\sigma$ since \eqref{eq:11} is a component-wise equation with respect to the multiplicity.

On the right-hand side of \eqref{eq:11}, $\SO(3)$ acts in a tensor product representation on the kernel. To make this explicit, we use the vectorization \eqref{eq:33} and  the unitarity of the Wigner matrices to rewrite \eqref{eq:11} into\footnote{Note that there is a typo in the corresponding equation (13) in \cite{weiler2018}, where the complex conjugation is missing.}
\begin{align}
  \vrize(\kappa^{\lambda\theta}(Ry)) = (\mathcal{D}^{\lambda}(R)\otimes \overline{\mathcal{D}^{\theta}}(R))\cdot\vrize(\kappa^{\lambda\theta}(y))\,.
\end{align}
Using \eqref{eq:13} and \eqref{eq:14}, we can decompose the tensor product of the two Wigner matrices into a direct sum of Wigner matrices $\mathcal{D}^{J}$ with $J=|\lambda-\theta|,\dots,\lambda+\theta$. Performing the corresponding change of basis for $\vrize(\kappa^{\lambda\theta})$ leads to components $\vrize(\kappa^{\lambda\theta;J})$ on which the constraint takes the form
\begin{align}
  \vrize(\kappa^{\lambda\theta;J}(Ry))=\mathcal{D}^{J}(R)\cdot\vrize(\kappa^{\lambda\theta;J}(y))\,.\label{eq:17}
\end{align}
According to \eqref{eq:16} and \eqref{eq:15}, the spherical harmonics $Y^{\ell}_{m}$ solve this constraint and they in fact span the space of solutions with respect to the angular dependence of $\kappa^{\lambda\theta,J}$. Therefore, a general solution of \eqref{eq:17} has the form
\begin{align}
  \vrize(\kappa^{\lambda\theta;J}(y))=\sum_{k}\sum_{m=-J}^{J}w^{\lambda\theta;J}_{k,m}\varphi^{k}(||y||)Y^{J}_{m}(y)\,,
\end{align}
with radial basis functions $\varphi^{k}$ and (trainable) coefficients $w$.

As an example of an application of group equivariant network architectures, we considered spherical CNNs in this section, which are of great practical importance. Spherical convolutions serve as a good example of the Fourier perspective on group equivariant convolutions since the spectral theory on the sphere and rotation group $\SO(3)$ is well-understood. Consequently, in Proposition~\ref{prop:decomp_s2_conv}, we could give explicit, yet completely general expressions for spherical convolutions for feature maps transforming in arbitrary representations of $\SO(3)$, given just in terms of Clebsch--Gordan coefficients.

In principle, such expressions could be derived for a large class of symmetry groups. The foundation of this generalization was laid in~\cite{lang2020}, where it was shown how the kernel constraint for any compact symmetry group can be solved in terms of well-known representation theoretic quantities. In position space, algorithms already exist which can solve the symmetry constraint and generate equivariant architectures automatically~\cite{finzi2021}. 

\section{Conclusions}
\label{sec:conclusions}

\noindent In this paper we have reviewed the recent developments in geometric deep learning and presented a coherent  mathematical framework for equivariant neural networks. In the process we have also developed the theory in various directions, in an attempt to make it more coherent and emphasize the geometric perspective on equivariance.

Throughout the paper we have used the examples of equivariant semantic segmentation and object detection networks to illustrate equivariant CNNs. In particular, in Section~\ref{ex:SE3Outputs} we showed that in rotation-equivariant object detection using fisheye cameras, the input features transform as scalars with respect to the regular representation of $R\in \SO(3)$, while the output features take values in $\SE(3)$. It would be very interesting to generalize this to object detection that is equivariant with respect to $\SE(3)$ instead of $\SO(3)$. In this case, we would add translations of the two-dimensional image plane $\mathbb{R}^2\subset \mathbb{R}^3$ and so the regular representation of $\SE(3)$ needs to be projected onto the image plane. For instance, translations which change the distance of the object to the image plane will be projected to scalings. If this is extended to pinhole cameras, the resulting group action will be highly non-linear. 

Yet another interesting open problem is the development of an unsupervised theory of deep learning on manifolds. This would require to develop a formalism of {\it group equivariant generative networks}. For example, one would like to construct group equivariant versions of variational autoencoders, deep Boltzmann machines and GANs (see e.g. \cite{2020arXiv200501683D}).

As we have emphasized in this work, the feature maps in gauge equivariant CNNs can be viewed as sections of principal (frame) bundles, which are generally called {\it fields} in theoretical physics. The basic building blocks of these theories are special sections corresponding to irreducible representations of the gauge group; these are the {\it elementary particles} of Nature. It is tantalizing to speculate that this notion could also play a key role in deep learning, in the sense that a neural network gradually learns more and more complex feature representations which are built from ``elementary feature types'' arising from irreducible representations of the equivariance group.

The concept of equivariance to symmetries has been a guiding design principle for theoretical physics throughout the past century. The standard model of particle physics and the general theory of relativity provide prime examples of this. In physics, the fundamental role of symmetries is related to the fact that to every (continuous) symmetry is associated through Noether's theorem a conserved physical quantity. For example, equivariance with respect to time translations corresponds to conservation of energy during the evolution of a physical system. It is interesting to speculate that equivariance to global and local symmetries may play similar key roles in building neural network architectures, and that the associated conserved quantities can be used to understand the dynamics of the evolution of the network during training. Steps in this direction have been taken to examine and interpret the symmetries and conserved quantities associated to different gradient methods and data augmentation during the training of neural networks \cite{Gluch2021}. In the application of neural networks to model physical systems, several authors have also constructed equivariant (or invariant) models by incorporating equations of motion - in either the Hamiltonian or Lagrangian formulation of classical mechanics - to accommodate the learning of system dynamics and conservation laws \cite{Greydanus2019,Toth2019,Cranmer2020}. Along these lines, it would be very interesting to look for the general analogue of Noether's theorem in equivariant neural networks, and understand the importance of the corresponding conserved quantities for the dynamics of machine learning.

We hope we have convinced the reader that geometric deep learning is an exciting research field with interesting connections to both mathematics and physics, as well as a host of promising applications in artificial intelligence, ranging from autonomous driving to biomedicine.  Although a huge amount of progress has been made, it is fair to say that the field is still in its infancy. In particular, there is a need for a more foundational understanding of the underlying mathematical structures of neural networks in general, and equivariant neural networks in particular. It is our hope that this paper may serve as a bridge connecting mathematics with deep learning, and will provide seeds for fruitful interactions across the fields of machine learning, mathematics and theoretical physics.

\nontocsec{Acknowledgments}
We are grateful to Oleksandr Balabanov, Robert Berman, Mats Granath, Carl-Joar Karlsson, Klas Modin, David M{\"u}ller and Christopher Zach for helpful discussions. The  work of D.P., J.A. and O.C. is supported by the Wallenberg AI, Autonomous Systems and Software Program (WASP) funded by the Knut and Alice Wallenberg Foundation. D.P. and J.G. are supported by the Swedish Research Council, J.G. is also supported by the Knut and Alice Wallenberg Foundation.

{\small
\providecommand{\href}[2]{#2}\begingroup\raggedright\endgroup

}

\end{document}